\theoremstyle{plain}
\newtheorem{theorem}{Theorem}[section]
\newtheorem{lemma}[theorem]{Lemma}
\newtheorem{corollary}[theorem]{Corollary}
\newtheorem{prop}[theorem]{Proposition}
\newtheorem{definition}[theorem]{Definition}
\theoremstyle{definition}
\newtheorem{remark}[theorem]{Remark}
\newtheorem{example}{Example}[section]
\newtheorem{assumption}{Assumption}
\DeclareMathOperator*{\argmin}{arg\,min}
\DeclareMathOperator{\spn}{span}
\DeclareMathOperator{\Ran}{Ran}
\newcommand{\ep}{\varepsilon}
\newcommand{\R}{\mathbb{R}}
\newcommand{\E}{\mathbb{E}}
\newcommand{\p}{\mathbb{P}}
\newcommand{\K}{\mathbb{K}}
\newcommand{\h}{\mathcal{H}}
\newcommand{\caX}{\mathcal{X}}
\newcommand{\ang}[1]{\left\langle{#1}\right\rangle}
\providecommand{\cref}{\prettyref}
\newcommand{\KRR}{\mathrm{KRR}}
\title{On the Saturation Effect of\\
Kernel Ridge Regression}
\author{Yicheng Li \& Haobo Zhang \\
Center for Statistical Science,
Department of Industrial Engineering \\
Tsinghua University, Beijing, China \\
\texttt{\{liyc22,zhang-hb21\}@mails.tsinghua.edu.cn} \\
\And
Qian Lin \\
Center for Statistical Science,
Department of Industrial Engineering \\
Tsinghua University, Beijing, China \\
    and \\
Beijing Academy of Artificial Intelligence, Beijing, China\\
\texttt{qianlin@tsinghua.edu.cn}
}
\begin{document}

    \maketitle

    \begin{abstract}
        The saturation effect refers to the  phenomenon that the kernel ridge regression (KRR) fails to achieve the information theoretical lower bound when the smoothness of the underground truth function exceeds certain level. The saturation effect  has been widely observed in practices and a saturation lower bound of KRR has been conjectured for decades. 
        In this paper, we provide a proof of this long-standing conjecture.

    \end{abstract}

    \section{Introduction}\label{sec:introduction}
    

Suppose that we have observed $n$ i.i.d.\ samples $\left\{ (x_{i},y_{i}) \right\}_{i=1}^n$ from an unknown distribution $\rho$ supported on $\mathcal{X}\times \mathcal{Y}$ where $\mathcal{X} \subseteq \R^d$ and $\mathcal{Y} \subseteq \R$.
One of the central problems in the statistical learning theory is to find a function $\hat{f}$ based on these observations such that the generalization error
\begin{align}
\mathbb{E}_{(x,y)\sim\rho}\left[\left(\hat{f}(x)-y\right)^{2} \right]
\end{align}
is small. It is well known that the conditional mean $f_{\rho}^*(x) \coloneqq \E_{\rho}[~y \;|\; x~] = \int_{\mathcal{Y}} y \dd \rho(y|x)$
minimizes the square loss $\mathcal{E}(f) = \E_{ \rho}(f(x) - y)^2$ where $\rho(y\vert x)$ is the distribution of $y$ conditioning on $x$. Thus, this question is equivalent to looking for an $\hat{f}$ such that the generalization error
\begin{align}
        \mathbb{E}_{x \sim \mu}\left[\left(\hat{f}(x)-f_{\rho}^{*}(x)\right)^{2} \right]
\end{align}
is small, where $\mu$ is the marginal distribution of $\rho$ in $\mathcal{X}$. In other words, $\hat{f}$ can be viewed as an estimator of $f_{\rho}^{*}$.
When there is no explicit parametric assumption made on the distribution $\rho$ or the function $f_{\rho}^{*}$,
researchers often assumed that $f^{*}_{\rho}$ falls into a class of certain functions and
developed lots of non-parametric methods to estimate $f^{*}_{\rho}$ (e.g., \citet{gyorfi2002_DistributionfreeTheory,tsybakov2009_IntroductionNonparametric}).

The kernel method, one of the most widely applied non-parametric regression methods (e.g.,
\citet{kohler2001_NonparametricRegression,cucker2001_MathematicalFoundations, caponnetto2007_OptimalRates,steinwart2009_OptimalRates,fischer2020_SobolevNorm}),
assumes that $f^{*}_{\rho}$ belongs to certain reproducible kernel Hilbert space (RKHS) $\mathcal{H}$,
a separable Hilbert space associated to a kernel function $k$ defined on $\mathcal{X}$.
The kernel ridge regression (KRR), which is also known as the Tikhonov regularization or regularized least squares,
estimates $f^{*}_{\rho}$ by solving the penalized least square problem:
\begin{align}
    \hat{f}_{\lambda}^\KRR &= \argmin_{f \in \mathcal{H}} \left(
    \frac{1}{n}\sum_{i=1}^n (y_i - f(x_i))^2 + \lambda \norm{f}_{\mathcal{H}}^2
    \right),
\end{align}
where $\lambda > 0$ is the so-called regularization parameter.
By the representer theorem (see e.g., \cite{andreaschristmann2008_SupportVector}),
this estimator has an explicit formula (please see \eqref{eqn:solution:KRR} for the exact meaning of the notation):
\begin{align*}
    \hat{f}_\lambda^\KRR(x) =\K(x,X) \left( \K(X,X) + n\lambda I \right)^{-1} \mathbf{y}.
\end{align*}

Theories have been developed for KRR from many aspects over the last decades, especially for the convergence rate of the generalization error.
For example, if  $f^{*}_{\rho}\in \mathcal{H}$ without any further smoothness assumptions,
\citet{caponnetto2007_OptimalRates} and \citet{steinwart2009_OptimalRates} showed that
the generalization error of KRR achieves the information theoretical lower bound $n^{-\frac{1}{1+\beta}}$,
where $\beta$ is a characterizing quantity of the RKHS $\mathcal{H}$ (see e.g., the eigenvalue decay rate defined in Condition {\bf (A)}).


Further studies reveal that when more regularity(or smoothness) of $f^{*}_{\rho}$ is assumed,
the KRR fails to achieve the information theoretic lower bound of the generalization error.
More precisely, when $f^{*}_{\rho}$ is assumed to belong some interpolation space $[\mathcal{H}]^{\alpha}$ of the RKHS $\mathcal{H}$ where $\alpha>2$,
the information theoretical lower bound of the generalization error is $n^{-\frac{\alpha}{\alpha+\beta}}$ \citep{rastogi2017_OptimalRates}
and the best upper bound of the generalization error of KRR is $n^{-\frac{2}{2+\beta}}$ \citep{caponnetto2007_OptimalRates}.
This gap between the best existing KRR upper bounds and the information theoretical lower bounds of the generalization
error has been widely observed in practices (e.g. \citet{bauer2007_RegularizationAlgorithms,gerfo2008_SpectralAlgorithms}).
It has been conjectured for decades that no matter how carefully one tunes the KRR, the rate of the generalization error
can not be faster than $n^{-\frac{2}{2+\beta}}$ \citep{gerfo2008_SpectralAlgorithms,dicker2017_KernelRidge}.
This phenomenon is often referred to as the saturation effect \citep{bauer2007_RegularizationAlgorithms}
and we refer to the conjectural fastest generalization error rate $n^{-\frac{2}{2+\beta}}$ of KRR as the saturation lower bound.
The main focus of this paper is to prove this long-standing conjecture.

\subsection{Related work}
KRR also belongs to the spectral regularization algorithms,
a large class of kernel regression algorithms including kernel gradient descent,
spectral cut-off, etc, see e.g. \citet{rosasco2005_SpectralMethods,bauer2007_RegularizationAlgorithms,gerfo2008_SpectralAlgorithms,mendelson2010_RegularizationKernel}.
The spectral regularization algorithms were originally proposed to solve the linear inverse problems
\citep{engl1996_RegularizationInverse},
where the saturation effect was firstly observed and studied
\citep{neubauer1997_ConverseSaturation,mathe2004_SaturationRegularization,herdman2010_GlobalSaturation}.
Since the spectral algorithms were introduced into the statistical learning theory,
the saturation effect has been also observed in practice and reported in literatures
\citep{bauer2007_RegularizationAlgorithms,gerfo2008_SpectralAlgorithms}.

Researches on spectral algorithms show that
the asymptotic performance of spectral algorithms is mainly determined by two ingredients
\citep{bauer2007_RegularizationAlgorithms,rastogi2017_OptimalRates,
    blanchard2018_OptimalRates,lin2018_OptimalRates}.
One is the relative smoothness(regularity)
of the regression function with respect to the kernel,
which is also referred to as the source condition (see, e.g. \citet[Section 2.3]{bauer2007_RegularizationAlgorithms}).
The other is the qualification of the spectral algorithm,
a quantity describing the algorithm's fitting capability(see, e.g. \citet[Definition 1]{bauer2007_RegularizationAlgorithms}).
It is widely believed that algorithms with low qualification can not achieve the information theoretical lower bound when the regularity of $f^{*}_{\rho}$ is high. 
This is the (conjectural) saturation effect for the spectral regularized algorithms
\citep{bauer2007_RegularizationAlgorithms,lin2020_OptimalConvergence,lian2021_DistributedLearning}.
To the best of our knowledge,
most works pursue showing that spectral regularized algorithm with high qualification can achieve better generalization
error rate while few work tries to answer this conjecture directly
\citep{gerfo2008_SpectralAlgorithms,dicker2017_KernelRidge}.

The main focus of this paper is to provide a rigorous proof of the saturation effect of KRR for its simplicity and popularity.  
The technical tools introduced here might help us to solve the saturation effect of other spectral algorithms.

\paragraph{Notation.} Let us denote by $X = (x_1,\dots,x_n)$ the sample input matrix and
$\mathbf{y} = (y_1,\dots,y_n)'$ the sample output vector.
We denote by $\mu$ the marginal distribution of $\rho$ on $\mathcal{X}$.
Let $\epsilon_i \coloneqq y_i - f^*(x_i)$ be the noise.

We use $L^p(\mathcal{X},\dd \mu)$ (sometimes abbreviated as $L^p$) to represent the Lebesgue $L^p$ spaces,
where the corresponding norm is denoted by $\norm{\cdot}_{L^p}$.
Hence, we can express the generalization error as
\begin{align*}
    \mathbb{E}_{x \sim \mu}\left[\left(\hat{f}(x)-f_{\rho}^{*}(x)\right)^{2} \right] =
    \norm{\hat{f} - f_{\rho}^{*}}_{L^2}^2.
\end{align*}
We use the asymptotic notations $O(\cdot)$, $o(\cdot)$, $\Omega(\cdot)$ and $\omega(\cdot)$.
We also denote $a_n \asymp b_n$ iff $a_n = O(b_n)$ and $a_n = \Omega(b_n)$.
We use the asymptotic notations in probability $O_\p(\cdot)$ and $\Omega_\p(\cdot)$ to state our results.
Let $\{ a_n \}_{n \geq 1}$ be a sequence of positive numbers
and $ \{ \xi_n \}_{n \geq 1}$ a sequence of non-negative random variables.
If for any $\delta > 0$ there exists $M_\delta > 0$ and $N_\delta > 0$ such that
$ \mathbb{P} \{ \xi_n < M_\delta a_n \} \geq 1-\delta,\ \forall n \geq N_\delta$, we say that $\xi_n$ is bounded (above)
by $a_n$ in probability and write $\xi_n = O_\mathbb{P}(a_n)$.
The definition of $\Omega_\mathbb{P}(b_n)$ follows similarly.

    \section{Brief review of the saturation effect}\label{sec:ppreliminaries}

\subsection{Regression over Reproducing kernel Hilbert space}
Throughout the paper, we assume that $\mathcal{X}\subset \R^d$ is compact and $k$ is a
continuous positive-definite kernel function defined on $\mathcal{X}$.
Let $T: L^2(\mathcal{X},\dd \mu) \to L^2(\mathcal{X},\dd \mu)$ be the integral operator
defined by
\begin{align}
(Tf)(x)
    \coloneqq \int_{\mathcal{X}} k(x,y) f(y) \dd \mu(y).
\end{align}
It is well known that $T$ is trace-class(thus compact), positive and self-adjoint~\citep{steinwart2012_MercerTheorem}.
The spectral theorem of compact self-adjoint operators together with Mercer's theorem (see e.g., \cite{steinwart2012_MercerTheorem})
yield that
\begin{align}
    T &= \sum_{i\in N} \lambda_i \ang{\cdot,e_i}_{L^2} e_i, \\
    k(x,y) &= \sum_{i\in N} \lambda_i e_i(x) e_i(y),
\end{align}
where $\lambda_i$'s are the positive eigenvalues of $T$ in descending order,
$e_i$'s are the corresponding eigenfunctions,
and $N \subseteq \mathbb{N}$ is an at most countable index set.
Let $\mathcal{H}$ be the separable RKHS associated to the kernel $k$ (see, e.g., \citet[Chapter 12]{wainwright2019_HighdimensionalStatistics}). One may easily verify that  $\left\{ \lambda_i^{1/2} e_i \right\}_{i\in N}$ is an orthonormal basis of $\h$.
Since we are interested in the infinite-dimensional cases,
we may assume that $N = \mathbb{N}$.

Recall that the kernel ridge regression estimates the regression function $f^{*}_{\rho}$ through the following optimization problem:
\begin{align}
    \label{KRR}
    \hat{f}_{\lambda}^\KRR &= \argmin_{f \in \mathcal{H}} \left(
    \frac{1}{n}\sum_{i=1}^n (y_i - f(x_i))^2 + \lambda \norm{f}_{\mathcal{H}}^2
    \right).
\end{align}
The representer theorem (see e.g., \cite{andreaschristmann2008_SupportVector}) implies that
\begin{align}\label{eqn:solution:KRR}
    \hat{f}^{\KRR}_{\lambda}(x)=\K(x,X)(\K(X,X)+n \lambda I)^{-1}\mathbf{y},
\end{align}
where
\begin{align*}
    \K(x,X) = \left( k(x,x_1),\dots,k(x,x_n) \right) \mbox{ and }
    \K(X,X) = \big( k(x_i,x_j) \big)_{n\times n}.
\end{align*}

The following conditions are commonly adopted when discussing the performance of $\hat{f}_{\lambda}^{\KRR}$.

%

    {\it

\noindent \hypertarget{cond:EigenDecay}{${\bf (A)}$}~Eigenvalue decay rate: there exists absolute constant
$c_{1}>0$, $c_{2}>0$ and $\beta\in (0,1)$ such that the eigenvalues $\lambda_{i}'s$  of $T$,
the integral operator associated to the kernel $k$, satisfy
\begin{align}
    \label{eq:EigenDecay}
    c_{1}i^{-1/\beta}\leq \lambda_i \leq c_{2}i^{-1/\beta}, ~\forall~i=1,2,\dots
\end{align}

\noindent \hypertarget{cond:B}{${\bf (B)}$}~Smoothness condition: the conditional mean $f_{\rho}^*=\mathbb{E}_{\rho}[~y\mid x~]$
satisfies that $\norm{f_{\rho}^{*}}_{\h} \leq R$ for some $R > 0$;

\noindent \hypertarget{cond:C}{${\bf (C)}$}~Moment condition of the noise:
\begin{align*}
    \int_{\mathcal{Y}} \abs{y - f_{\rho}^*(x)}^m \dd \rho(y|x)
    \leq \frac{1}{2}m! \sigma^2 M^{m-2}, \quad \mu\text{-a.e. } x \in \mathcal{X},
    \quad \forall m = 2,3,\dots,
\end{align*}
where $\sigma, M > 0$ are some constants.

}

The quantity $\beta$ appearing in the condition \hyperlink{cond:EigenDecay}{${\bf (A)}$} is often referred to as the eigenvalue decay rate of an RKHS
or the corresponding kernel function $k$.
It describes the span of the RKHS and depends only on the kernel function $k$ (or equivalently, the RKHS $\h$).
This polynomial decay rate condition is quite standard in the literature
and is also known as the capacity condition or effective dimension condition \citep{caponnetto2007_OptimalRates,steinwart2009_OptimalRates,blanchard2018_OptimalRates},
which is closely related to the covering/entropy number conditions of the RKHS (see, e.g., \citet[Theorem 15]{steinwart2009_OptimalRates}).
The condition \hyperlink{cond:B}{${\bf (B)}$} requires that the conditional mean $f_{\rho}^{*}(x)$ falls into the RKHS $\h$ with norm smaller than a given constant $R$.
The condition \hyperlink{cond:C}{${\bf (C)}$} requires that the tail probability of the `noise' decays fast,
which is satisfied if the noise $\ep = y - f^*_\rho(x)$ is bounded or sub-Gaussian.

\begin{prop}[Optimality of KRR]
    Suppose that $\h$ satisfies the condition \hyperlink{cond:EigenDecay}{${\bf (A)}$} and
    $\mathcal{P}$ consists of all the distributions satisfying the conditions \hyperlink{cond:B}{${\bf (B)}$} and \hyperlink{cond:C}{${\bf (C)}$}.

    \label{thm:LowerRate}

    {\noindent i)}   The minimax rate of estimating $f_{\rho}^{*}$ is $n^{-\frac{1}{1+\beta}}$, i.e., we have
    \begin{align}
        \inf_{\hat{f}}\sup_{\rho \in \mathcal{P}} \E_{\rho} \norm{\hat{f} - f^*_{\rho}}^2_{L^2}
        = \Omega\left(n^{-\frac{1}{1+\beta}} \right),
    \end{align}
    where $\inf_{\hat{f}}$ is taken over all estimators and both the expectation $\E_\rho$ and the conditional mean
    $f^*_\rho$ depend on $\rho$.

        {\noindent ii)}  If we choose $\lambda \asymp n^{-\frac{1}{1+\beta}}$,  then we have
    \begin{align}
        \norm{\hat{f}^\KRR_\lambda - f^*_\rho}_{L^2}^2 = O_{\mathbb{P}}\left(n^{-\frac{1}{1+\beta}} \right).
    \end{align}

\end{prop}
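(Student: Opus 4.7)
The plan is to treat the two parts separately, using a bias--variance decomposition for the upper bound and a hypothesis-testing argument for the lower bound; both are essentially classical and follow \citet{caponnetto2007_OptimalRates} and \citet{steinwart2009_OptimalRates}.

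For part (ii), I would introduce the population KRR estimator $f_\lambda = (T+\lambda)^{-1} T f^*_\rho$ and write
$$\hat{f}^\KRR_\lambda - f^*_\rho = (\hat{f}^\KRR_\lambda - f_\lambda) + (f_\lambda - f^*_\rho),$$
separating a variance and a bias term. For the bias, using the spectral decomposition of $T$ and the fact that $f^*_\rho \in \h$ (Condition \textbf{(B)}), the filter $(T+\lambda)^{-1} T - I$ acts as multiplication by $-\lambda/(\lambda+\lambda_i)$ on the eigenbasis, which yields $\|f_\lambda - f^*_\rho\|_{L^2}^2 = O(\lambda)$. For the variance, one expresses $\hat{f}^\KRR_\lambda - f_\lambda$ in terms of the empirical covariance operator $T_X = \frac{1}{n}\sum_i k_{x_i} \otimes k_{x_i}$ and the noise vector $(\epsilon_i)$. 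Applying operator Bernstein-type inequalities (whose hypotheses are met precisely by Condition \textbf{(C)}), together with the eigenvalue decay Condition \textbf{(A)} which gives effective dimension $\mathcal{N}(\lambda) \coloneqq \mathrm{tr}(T(T+\lambda)^{-1}) = O(\lambda^{-\beta})$, yields $\|\hat{f}^\KRR_\lambda - f_\lambda\|_{L^2}^2 = O_\p(\mathcal{N}(\lambda)/n) = O_\p(\lambda^{-\beta}/n)$. Plugging in $\lambda \asymp n^{-1/(1+\beta)}$ balances the two terms at the claimed rate.

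For part (i), I would reduce the estimation problem to multiple hypothesis testing in the style of Fano/Tsybakov. Using Condition \textbf{(A)}, I fix a dyadic band $I \subseteq \mathbb{N}$ with $|I| \asymp \lambda^{-\beta}$ on which all $\lambda_i$ have the same order $\asymp \lambda$. I then consider candidate regression functions $f_\omega = c\sum_{i \in I} \omega_i \lambda_i^{1/2} e_i$ indexed by $\omega \in \{0,1\}^{|I|}$, with $c$ chosen so that $\|f_\omega\|_\h \le R$ (satisfying Condition \textbf{(B)}), and pair each with Gaussian (or bounded) noise satisfying Condition \textbf{(C)}. A Varshamov--Gilbert packing gives $\exp(\Omega(|I|))$ hypotheses that are pairwise $L^2$-separated by order $c\sqrt{|I|\lambda}$, while the KL divergence between the corresponding product measures is at most $n$ times the squared $L^2$ distance over the noise variance, i.e., $O(nc^2|I|\lambda)$. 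Tuning $\lambda^{1+\beta} \asymp n^{-1}$ and $c$ so the KL condition of Fano's lemma is met delivers the lower bound $\Omega(n^{-1/(1+\beta)})$.

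The main technical obstacle, common to both parts of the program, lies on the upper-bound side: one needs concentration of $T_X$ around $T$ in a norm strong enough to control the inverses $(T_X + \lambda)^{-1}$ that appear in rewriting $\hat{f}^\KRR_\lambda - f_\lambda$. The standard route is to apply a Bernstein inequality for Hilbert-space-valued random variables to $(T+\lambda)^{-1/2}(T - T_X)(T+\lambda)^{-1/2}$ and to the noise term $\frac{1}{n}\sum_i \epsilon_i (T+\lambda)^{-1/2} k_{x_i}$, where the moment bound of Condition \textbf{(C)} is exactly what is required. The lower bound is by contrast combinatorial: the packing lives in a finite-dimensional subspace, so once the $|I|$-many eigen-directions of comparable eigenvalue are identified via Condition \textbf{(A)}, the remaining computation is routine.
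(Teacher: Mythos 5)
Your proposal is correct and follows essentially the same route as the paper, which does not prove this proposition from scratch but attributes it to \citet{caponnetto2007_OptimalRates} (with the refinement in \citet{steinwart2009_OptimalRates}): the upper bound there is exactly the bias--variance split through the population regularized solution $f_\lambda=(T+\lambda)^{-1}Tf^*_\rho$ with operator Bernstein concentration and the effective dimension $\mathcal{N}(\lambda)=O(\lambda^{-\beta})$, and the lower bound is the same Fano/Varshamov--Gilbert packing over an eigenvalue band of size $\asymp\lambda^{-\beta}$ with $\lambda^{1+\beta}\asymp n^{-1}$. No substantive gap beyond the level of detail expected of a sketch.
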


This theorem comes from a combination (with a slight modification) of the upper rates and lower rates
given in \citet{caponnetto2007_OptimalRates}.
It says that the optimally tuned KRR can achieve the information theoretical lower bound if
$f_{\rho}^{*}(x)$ falls into $\h$ and no further regularity condition is imposed.

\subsection{The saturation effect}
When further regularity assumption is made on the conditional mean $f_{\rho}^{*}(x)$,
there will be a gap between the information theoretical lower bound and the upper bound provided by the KRR.
This phenomenon is now referred to the saturation effect in KRR.
In order to explicitly describe the saturation effect,
we need to introduce a family of interpolation spaces of the RKHS $\mathcal{H}$ (see, e.g. \citet{fischer2020_SobolevNorm}).

For any $s\geq 0$, the operator $T^{s}:L^{2}(\mathcal{X})\rightarrow L^{2}(\mathcal{X})$ is given by
\begin{align}
    T^s(f) &= \sum_{i\in N} \lambda_i^s \ang{f,e_i}_{L^2} e_i
\end{align}
and the interpolation space $[\h]^\alpha$ for $\alpha \geq 0$ is defined by
\begin{align}
[\h]^\alpha \coloneqq \Ran T^{\alpha/2} = \left\{ \sum_{i \in N}a_i \lambda_i^{\alpha/2} e_i \Big| (a_i)_{i\in N} \in \ell^2(N) \right\},
\end{align}
with the inner product defined by
\begin{align}
    \ang{f,g}_{[\h]^\alpha} = \ang{T^{-\alpha/2}f,T^{-\alpha/2}g}_{L^2}.
\end{align}

From the definition, it is easy to verify that  $\left\{ \lambda_i^{\alpha/2} e_i \right\}_{i \in N}$ forms an orthonormal basis of $[\h]^\alpha$
and thus $[\h]^\alpha$ is also a separable Hilbert space.
It is obvious that $[\h]^0 \subseteq L^2$ and $[\h]^1 \subseteq \h$.
Moreover, we have isometric isomorphisms $T^{s/2} : [\h]^{\alpha} \to [\h]^{\alpha+s}$
and compact embeddings $[\h]^{\alpha_1} \hookrightarrow [\h]^{\alpha_2},\quad \forall \alpha_1 > \alpha_2 \geq 0$.
The example below illustrates the intuition of $\alpha$: it describes the smoothness of functions with respect to the kernel.

\begin{example}[Sobolev RKHS \citep{fischer2020_SobolevNorm}]
\label{ex:SobolevRKHS}
    Let $\mathcal{X} = \Omega \subseteq \R^d$ be a bounded domain with smooth boundary.
    We consider the Sobolev space $\h = H^s(\Omega)$, which, roughly speaking,
    consists of functions with weak derivatives up to order $s$,
    see, e.g., \citet{adams2003_SobolevSpaces}.
    It is known that if $s > \frac{d}{2}$, we have the Sobolev embedding $H^s(\Omega)\hookrightarrow C^{r}(\Omega)$
    where $C^r(\Omega)$ is the Hölder space of continuous differentiable functions and $r = s - \frac{d}{2}$,
    and thus $H^s(\Omega)$ is an RKHS~\citep{fischer2020_SobolevNorm}.
    Moreover, by the method of real interpolation \citep[Theorem 4.6]{steinwart2012_MercerTheorem}, we have
    $[\h]^\alpha \cong H^{\alpha s}(\Omega)$ for $\alpha > 0$.
    This example shows that the larger the $\alpha$, the ``smoother'' the functions in $[\h]^{\alpha}$.

\end{example}


If one believes that $f_{\rho}^{*}(x)$ possesses more regularity (i.e., $f_{\rho}^{*}(x)\in [\h]^{\alpha}$),
we may replace the condition \hyperlink{cond:B}{${\bf (B)}$} by the following condition:

\noindent \hypertarget{cond:B1}{${\bf (B')}$}~Smoothness condition: the conditional mean $f_{\rho}^*=\mathbb{E}_{\rho}[~y\mid x~]$ satisfies that $\norm{f_{\rho}^{*}}_{[\h]^{\alpha}} \leq R$ for some $R > 0$.

In this condition, the parameter $\alpha$ describes the smoothness of the regression function with respect to the underlying kernel.
The larger $\alpha$ is, the ``smoother'' the regression function is.
This assumption is also referred to as the source condition in the literature, see, e.g., \citet{bauer2007_RegularizationAlgorithms,rastogi2017_OptimalRates}.
With this new regularity assumption, we have the following statement.

\begin{prop}[Saturation phenomenon of KRR]
    \label{prop:KRRUpper}
    Suppose that $\h$ satisfies the condition
    \hyperlink{cond:EigenDecay}{${\bf (A)}$} and    $\mathcal{P}$ consists of all the distributions satisfying the conditions
    \hyperlink{cond:B1}{${\bf (B')}$} and \hyperlink{cond:C}{${\bf (C)}$}.

    \noindent {\bf i)}  The minimax rate of estimating $f_{\rho}^{*}$ is $n^{-\frac{\alpha}{\alpha+\beta}}$, i.e., we have
    \begin{align}
        \label{eq:InformationLowerBound}
        \inf_{\hat{f}}\sup_{\rho \in \mathcal{P}} \E_{\rho} \norm{\hat{f} - f^*_{\rho}}^2_{L^2}
        = \Omega\left(n^{-\frac{\alpha}{\alpha+\beta}} \right),
    \end{align}
    where $\inf_{\hat{f}}$ is taken over all estimators and both the expectation $\E_\rho$ and the regression function
    $f^*_\rho$ depend on $\rho$.

    \noindent {\bf ii)}
    Let $\tilde{\alpha} = \min(\alpha,2)$.
    Then, by choosing $\lambda \asymp n^{-\frac{1}{\tilde{\alpha}+\beta}}$, we have
    \begin{align}
        \label{eq:KRRUpper}
        \norm{\hat{f}^\KRR_\lambda - f^*}_{L^2}^2 = O_{\mathbb{P}}\left(n^{-\frac{\tilde{\alpha}}{\tilde{\alpha}+\beta}} \right).
    \end{align}

\end{prop}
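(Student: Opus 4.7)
For the upper bound (ii), I would carry out the standard bias--variance decomposition, writing $\hat{f}_\lambda^{\KRR} - f_\rho^* = (\hat{f}_\lambda^{\KRR} - f_\lambda) + (f_\lambda - f_\rho^*)$ with the population minimizer $f_\lambda = (T+\lambda)^{-1}Tf_\rho^*$. The bias is $f_\lambda - f_\rho^* = -\lambda(T+\lambda)^{-1}f_\rho^*$; using condition \hyperlink{cond:B1}{$(B')$} to write $f_\rho^* = T^{\alpha/2}g$ with $\norm{g}_{L^2}\le R$, the spectral calculus yields
\begin{align*}
    \norm{f_\lambda-f_\rho^*}_{L^2}^2 \le R^2 \sup_{t\ge 0} \frac{\lambda^2 t^\alpha}{(t+\lambda)^2} = O\bigl(\lambda^{\min(\alpha,2)}\bigr).
\end{align*}
This is precisely where the cap at $\tilde{\alpha}$ emerges from the upper-bound side: the residual filter $\lambda/(t+\lambda)$ can absorb at most one factor of $t$, no matter how smooth $f_\rho^*$ is. For the stochastic term I would invoke the standard operator-concentration machinery of \citet{caponnetto2007_OptimalRates, fischer2020_SobolevNorm}: bound $\norm{\hat{f}_\lambda^{\KRR}-f_\lambda}_{L^2}^2$ in terms of $\sigma^2 \mathcal{N}(\lambda)/n$, where $\mathcal{N}(\lambda) = \operatorname{tr}[T(T+\lambda)^{-1}]$ is the effective dimension, via Bernstein-type inequalities justified by \hyperlink{cond:C}{$(C)$}, and then use \hyperlink{cond:EigenDecay}{$(A)$} to conclude $\mathcal{N}(\lambda) \asymp \lambda^{-\beta}$. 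Balancing $\lambda^{\tilde{\alpha}} + \lambda^{-\beta}/n$ at $\lambda \asymp n^{-1/(\tilde{\alpha}+\beta)}$ gives the claimed rate.

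For the lower bound (i), I would apply Fano's method to a hypothesis class built directly from the eigenbasis. Fix an integer $m$ and define, for each $\omega \in \{-1,+1\}^m$,
\begin{align*}
    f_\omega = \varepsilon \sum_{i=m+1}^{2m} \omega_{i-m}\, \lambda_i^{\alpha/2}\, e_i,
\end{align*}
so that $\norm{f_\omega}_{[\h]^\alpha}^2 = \varepsilon^2 m$, since $\{\lambda_i^{\alpha/2} e_i\}$ is an orthonormal basis of $[\h]^\alpha$. Choosing $\varepsilon^2 \le R^2/m$ enforces \hyperlink{cond:B1}{$(B')$}, and taking $y = f_\omega(x) + \eta$ with $\eta\sim N(0,\sigma^2)$ realizes \hyperlink{cond:C}{$(C)$}. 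The Varshamov--Gilbert lemma produces a packing $\{\omega^{(1)},\dots,\omega^{(M)}\}$ with $M\ge 2^{m/8}$ and pairwise Hamming distance at least $m/8$. Using $\lambda_i \asymp m^{-1/\beta}$ on $\{m+1,\dots,2m\}$ from \hyperlink{cond:EigenDecay}{$(A)$}, pairwise $L^2$-separations are $\Omega(\varepsilon^2 m^{1-\alpha/\beta})$, while pairwise KL divergences between the product distributions are $O(n\varepsilon^2 m^{1-\alpha/\beta}/\sigma^2)$. Fano's inequality then yields a lower bound of $\Omega(\varepsilon^2 m^{1-\alpha/\beta})$ provided $n\varepsilon^2 m^{-\alpha/\beta} \lesssim \sigma^2$; optimizing with $\varepsilon^2 \asymp R^2/m$ and $m \asymp n^{\beta/(\alpha+\beta)}$ delivers $\Omega(n^{-\alpha/(\alpha+\beta)})$.

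The genuinely delicate step is the operator-concentration estimate in (ii) --- controlling $(T+\lambda)^{1/2}(T_n+\lambda)^{-1}(T+\lambda)^{1/2}$ uniformly in $n$ with high probability --- where \hyperlink{cond:EigenDecay}{$(A)$}, \hyperlink{cond:B1}{$(B')$}, and \hyperlink{cond:C}{$(C)$} must all be used together; by contrast the bias computation is a pure spectral-calculus exercise. The lower bound is conceptually routine, but one must be careful to verify $f_\omega \in [\h]^\alpha$ with the prescribed $[\h]^\alpha$-norm (rather than the strictly stronger $\h$-norm, since $\alpha$ may exceed $1$) and to ensure the packing size after Varshamov--Gilbert exceeds the KL budget in the optimizing regime of $m$.
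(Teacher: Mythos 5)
Your proposal is correct and amounts to the same (standard) argument the paper relies on: the paper proves this proposition simply by citing the minimax lower bound of \citet{rastogi2017_OptimalRates} and the upper rate of \citet{fischer2020_SobolevNorm}, and your bias--variance/effective-dimension balancing for (ii) and your Fano--Varshamov--Gilbert construction in the eigenbasis for (i) are precisely the arguments underlying those references. One small fix: for $\alpha>2$ the quantity $\sup_{t\ge 0}\lambda^2 t^{\alpha}/(t+\lambda)^2$ is infinite, so the supremum in your bias bound must be taken over the spectrum of $T$ (which is contained in $[0,\kappa^2]$), e.g.\ by writing $\lambda(T+\lambda)^{-1}T^{\alpha/2} = T^{(\alpha-2)/2}\cdot\lambda T(T+\lambda)^{-1}\cdot(T+\lambda)^{-1}(T+\lambda)$-type factorizations or simply bounding $t^{\alpha}\le \norm{T}^{\alpha-2}t^{2}$ on the spectrum; with that restriction the claimed $O(\lambda^{\tilde\alpha})$ bound holds and the rest goes through.
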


This proposition comes from a combination (with a slight modification) of
    the lower rate derived in \citet[Corollary 3.3]{rastogi2017_OptimalRates} and
the upper rate given in \citet[Theorem 1 (ii)]{fischer2020_SobolevNorm}.
It says that the optimally tuned KRR can achieve the information theoretical lower bound
if $f^{*}_{\rho}(x)\in [\h]^{\alpha}$ when $\alpha\leq 2$.
However, when $\alpha>2$, there is a gap between the
information theoretical lower bound and the upper bound provided by the KRR.

    \section{Main Results}\label{sec:main-results}

We introduce two additional assumptions in order to state our main result.

\begin{assumption}[RKHS]
    \label{assu:space}
    We assume that $\h$ is an RKHS over a compact set $\caX \subseteq \R^d$
    associated with a Hölder-continuous kernel $k$,
    that is, there exists some $s \in (0,1]$ and $L > 0$ such that
    \begin{align*}
        \abs{k(x_1,x_2) - k(y_1,y_2)} \leq L \norm{(x_1,x_2) - (y_1,y_2)}_{\R^{d \times d}}^s, \quad
        \forall x_1,x_2,y_1,y_2 \in \caX.
    \end{align*}
\end{assumption}

\begin{assumption}[Noise]
    \label{assu:noise}
    The conditional variance satisfies that
    \begin{align}
        \E_{(x,y)\sim \rho} \left[ (y-f^{*}_{\rho}(x))^{2} \;|\; x \right] \geq \bar{\sigma}^2 > 0,
        \quad \mu\text{-a.e. } x \in \mathcal{X}.
    \end{align}
\end{assumption}

The first assumption is a Hölder condition on the kernel, which is slightly stronger than assuming that $k$ is  continuous.
It is satisfied, for example, when $k$ is Lipschitz or $C^1$.
Kernels that satisfy this assumption include the popular RBF kernel, Laplace kernel and kernels of the form
$(1-\norm{x-y})^p_+$ \citep[Theorem 6.20]{wendland2004_ScatteredData},
and kernels associated with Sobolev RKHS introduced in \cref{ex:SobolevRKHS}.



The second assumption requires that the variance of the noise $\ep = y - f^*_\rho(x)$ is lower bounded.
When $y = f^*(x) + \ep$ where $\ep \sim N(0,\sigma^2)$ is an independent noise,
the second assumption simply requires that $\sigma \neq 0$.
In other words, \cref{assu:noise} is a fairly weak assumption which just requires that
the noise is non-vanishing almost everywhere.

Now we are ready to state our main theorem.
\begin{theorem}[Saturation effect]
    \label{thm:Saturation}
    Suppose that $\h$ satisfies the condition
    \hyperlink{cond:EigenDecay}{${\bf (A)}$},  the distribution $\rho$  satisfies that $f_{\rho}^{*}\neq 0$ and $f_{\rho}^{*}\in [\h]^{\alpha}$ for some $\alpha\geq 2$,
    and Assumptions \ref{assu:space} and \ref{assu:noise} hold.

    For any $\delta>0$, for any choice of regularization parameter $\lambda(n)$ satisfying that  $\lambda(n) \to 0$, we have that,
    for sufficiently large $n$,
    \begin{align}
        \E\left[  \norm{\hat{f}_\lambda^{\KRR} - f^*}^2_{L^{2}} \;\Big|\; X \right]
        \geq c n^{-\frac{2}{2+\beta}}
    \end{align}
    holds with probability at least $1-\delta$ for some positive constant $c$.
    Consequently, we have
    \begin{align*}
        \E\left[  \norm{\hat{f}_\lambda^{\KRR}- f^*}^2_{L^{2}} \;\Big|\;  X \right] =
        \Omega_{\mathbb{P}}\left(n^{-\frac{2}{2+\beta}}\right).
    \end{align*}
\end{theorem}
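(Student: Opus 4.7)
The plan is to combine a conditional bias--variance decomposition with a case analysis on the size of the regularization parameter. Writing the KRR estimator in operator form as $\hat{f}_\lambda = (T_X+\lambda)^{-1}\bigl(T_X f^* + \tfrac{1}{n}S^*\epsilon\bigr)$, where $T_X = \tfrac{1}{n}\sum_i k(\cdot,x_i)\otimes k(\cdot,x_i)$ is the empirical covariance operator on $\h$ and $S^*\epsilon = \sum_i \epsilon_i\, k(\cdot,x_i)$, the identity $\hat{f}_\lambda - f^* = -\lambda(T_X+\lambda)^{-1}f^* + \tfrac{1}{n}(T_X+\lambda)^{-1}S^*\epsilon$ together with $\E[\epsilon_i\mid x_i]=0$ yields
\begin{align*}
\E\!\left[\|\hat{f}_\lambda - f^*\|_{L^2}^2 \,\big|\, X\right]
= \underbrace{\|\lambda(T_X+\lambda)^{-1}f^*\|_{L^2}^2}_{B(X,\lambda)}
+ \underbrace{\frac{1}{n^2}\sum_{i=1}^n \sigma_i^2\, \|(T_X+\lambda)^{-1}k(\cdot,x_i)\|_{L^2}^2}_{V(X,\lambda)},
\end{align*}
with $\sigma_i^2 = \E[\epsilon_i^2 \mid x_i] \geq \bar\sigma^2$ by Assumption~\ref{assu:noise}. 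Fix a critical level $\lambda^\star \asymp n^{-1/(2+\beta)}$. The strategy is to show that with high probability $B(X,\lambda^\star) \gtrsim n^{-2/(2+\beta)}$ and $V(X,\lambda^\star) \gtrsim n^{-2/(2+\beta)}$, and then use monotonicity to cover all $\lambda$.

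For the variance lower bound, a direct calculation using $\sum_i k(\cdot,x_i)\otimes k(\cdot,x_i) = nT_X$ gives $V(X,\lambda) \geq \frac{\bar\sigma^2}{n}\operatorname{tr}\bigl[T\, T_X(T_X+\lambda)^{-2}\bigr]$. Combined with the standard concentration inequality $\|(T+\lambda)^{-1/2}(T-T_X)(T+\lambda)^{-1/2}\|_{\mathrm{op}} \leq \tfrac{1}{2}$---which, under the Hölder-continuous kernel of Assumption~\ref{assu:space}, holds with high probability at $\lambda = \lambda^\star$ since $\lambda^\star \gg \log n / n$---this trace is comparable up to a constant with the population quantity $\operatorname{tr}[T^2(T+\lambda^\star)^{-2}] = \sum_i \lambda_i^2/(\lambda_i+\lambda^\star)^2$. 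Condition \hyperlink{cond:EigenDecay}{${\bf (A)}$} yields $\operatorname{tr}[T^2(T+\lambda)^{-2}] \gtrsim \#\{i:\lambda_i \geq \lambda\} \gtrsim \lambda^{-\beta}$, and hence $V(X,\lambda^\star) \gtrsim \lambda^{\star -\beta}/n \asymp n^{-2/(2+\beta)}$.

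For the bias lower bound, apply the same concentration to compare $B(X,\lambda^\star)$ with the population bias
\begin{align*}
\|\lambda(T+\lambda)^{-1}f^*\|_{L^2}^2 = \sum_{i} \frac{\lambda^2\,\lambda_i^{\alpha}}{(\lambda_i+\lambda)^2}\, a_i^2,
\end{align*}
obtained by expanding $f^* = \sum_i a_i \lambda_i^{\alpha/2} e_i$ in the eigenbasis of $T$. Since $f^* \neq 0$, fix an index $i_0$ with $a_{i_0} \neq 0$; for all $\lambda \leq \lambda_{i_0}$ the $i_0$-th term alone is at least $\tfrac{1}{4} a_{i_0}^2 \lambda_{i_0}^{\alpha-2}\lambda^2$, and this is precisely where saturation appears---the factor cannot be improved to $\lambda^{\alpha}$ when $\alpha>2$, because $\lambda/(\lambda_{i_0}+\lambda) \leq 1$ is all one gets from the Tikhonov filter. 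Specialising to $\lambda = \lambda^\star$ gives $B(X,\lambda^\star) \gtrsim \lambda^{\star 2} \asymp n^{-2/(2+\beta)}$.

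The dichotomy then completes the argument: $B(X,\lambda)$ is monotone non-decreasing in $\lambda$ (at the population level from $\frac{d}{d\lambda}\lambda^2/(\lambda_i+\lambda)^2 \geq 0$, and at the empirical level once the concentration comparison is in force), so for $\lambda \geq \lambda^\star$ one inherits the bias bound; meanwhile $V(X,\lambda)$ is spectrally non-increasing in $\lambda$ because the multiplier $t/(t+\lambda)^2$ is decreasing in $\lambda$ for each fixed $t>0$, so for $\lambda \leq \lambda^\star$ one inherits the variance bound. The main obstacle I anticipate is the bias step: the spectral sum produces $\Omega(\lambda^{\star 2})$ cleanly at the population operator $T$, but transferring the bound to $T_X$ requires concentration that is sharp on the specific vector $f^*$ rather than only in operator norm, and this is where the Hölder regularity in Assumption~\ref{assu:space}---via an $L^\infty$-type embedding of the interpolation spaces $[\h]^\alpha$ that keeps pointwise values of eigenfunctions under control---is most likely to be used.
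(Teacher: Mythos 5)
Your skeleton is essentially the paper's: the same conditional bias--variance decomposition, a critical level $\lambda^\star\asymp n^{-\frac{1}{2+\beta}}$, a bias bound $\Omega(\lambda^2)$ coming from the population quantity $\norm{(T+\lambda)^{-1}f^*}_{L^2}\geq c>0$ (this is where $f^*\neq 0$ and the limited qualification of the Tikhonov filter enter), a variance bound $\Omega(\lambda^{-\beta}/n)$ via the quantity you write as $\Tr[T\,T_X(T_X+\lambda)^{-2}]$ (which is exactly the paper's $\int_{\caX}\norm{(T_X+\lambda)^{-1}k(x,\cdot)}_{L^2,n}^2\dd\mu(x)$), and a case analysis using that the variance is spectrally monotone in $\lambda$ (the paper's truncation $\bar\lambda=\max(\lambda,n^{-\frac{1}{2+\beta}})$). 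The problem is that the one step you dispose of in a clause --- ``combined with the standard concentration inequality $\norm{(T+\lambda)^{-1/2}(T-T_X)(T+\lambda)^{-1/2}}\leq\frac12$, this trace is comparable up to a constant with $\Tr[T^2(T+\lambda)^{-2}]$'' --- is precisely the hard part of the theorem, and it does not follow by pushing the sandwich through. The map $t\mapsto t/(t+\lambda)^2$ (equivalently $t\mapsto (t+\lambda)^{-2}$) is not operator monotone, so $\frac12(T+\lambda)\preceq T_X+\lambda\preceq\frac32(T+\lambda)$ cannot simply be inserted; and the natural manipulations (e.g.\ writing $T_X=(T_X+\lambda)-\lambda$ and bounding the two resulting traces with the sandwich and Cordes-type estimates) produce error terms of size $O(\mathcal{N}(\lambda))=O(\lambda^{-\beta})$ --- the same order as the main term --- which is fine for upper bounds but can completely cancel a lower bound. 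The paper's whole technical apparatus (the reformulation through the empirical seminorm $\norm{\cdot}_{L^2,n}$, the uniform-in-$x$ comparison of empirical and population norms via the covering-number bound for $\{(T+\lambda)^{-1}k(x,\cdot)\}_{x\in\caX}$ in \cref{lem:CoveringRegularK}, and \cref{lem:L2NormApprox}, \cref{lem:DiffControl}) exists exactly to make those errors $o(\lambda^{-\beta})$; this is also where Assumption~\ref{assu:space} is actually used (to control that covering number), not through an $L^\infty$ embedding of $[\h]^\alpha$ as you speculate. Your step might be rescuable by a finer argument --- e.g.\ the two-sided sandwich pins most of the spectral measure of $T_X$ at each eigenvector $e_i$ with $\lambda_i\gg\lambda$ into a window $[c\lambda_i,C\lambda_i]$, giving $\ang{e_i,T_X(T_X+\lambda)^{-2}e_i}\gtrsim\lambda_i^{-1}$ and hence $\Tr[T\,T_X(T_X+\lambda)^{-2}]\gtrsim\#\{i:\lambda_i\gtrsim\lambda\}\gtrsim\lambda^{-\beta}$ --- but no such argument appears in your proposal, so as written this is a genuine gap rather than a routine application of a ``standard'' inequality.

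Two smaller points. First, your dichotomy leans on monotonicity of the \emph{empirical} bias in $\lambda$; this is not valid as stated, because $\lambda^2/(t+\lambda)^2$ is increasing in $\lambda$ only spectrally with respect to $T_X$, while the $L^2$ norm inserts $T^{1/2}$, which does not commute with $T_X$. It is easily avoided: for $\lambda\geq\lambda^\star$ the concentration hypotheses hold at $\lambda$ itself, so one applies the bias bound directly there (this is how the paper argues, keeping monotonicity only for the variance in the regime $\lambda\leq\lambda^\star$, where it is a genuine matrix inequality $(K+\lambda)^{-2}\succeq(K+\bar\lambda)^{-2}$). Second, for transferring the population bias bound to $T_X$ you correctly diagnose that operator-norm concentration is insufficient and vector-specific concentration on $f^*$ is needed, but you leave it open; the paper supplies it via a Bernstein inequality in $\h$ for $(T_X-T)(T+\lambda)^{-1}f^*$ combined with the sandwich to control $\norm{T^{1/2}(T_X+\lambda)^{-1}}$ (\cref{lem:ConcentrationF}, \cref{lem:TxNuFL2Estimation}, \cref{thm:BiasLowerBound}).
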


\vspace{3mm}
\begin{remark}
    The saturation effect of KRR states that when the regression function is very smooth, i.e., $\alpha \geq 2$,
    no matter how the regularization parameter is tuned,
    the convergence rate of KRR is bounded below by $\frac{2}{2+\beta}$.
    The saturation lower bound also coincides with the upper bound \cref{eq:KRRUpper} in \cref{prop:KRRUpper}.
    Therefore, \cref{thm:Saturation} rigorously proves the saturation effect of KRR\@.

    Moreover, we would like to emphasize that the saturation lower bound is established for
    arbitrary fixed non-zero $f^* \in [\h]^\alpha$,
    and it is essentially different from the information theoretical lower bound, e.g., \cref{eq:InformationLowerBound},
    in both the statement and the proof technique.

\end{remark}

\subsection{Sketch of the proof}
We present the sketch of our proofs in this part and defer the complete proof to  \cref{sec:proofMain}.
Let us introduce the sampling operator $K_{x} : \R \to \h$ defined by $K_x y = yk(x,\cdot)$
and its adjoint operator $K_x^*: \h \to \R$  given by $K_x^* f = f(x)$.
We further introduce the sample covariance operator $T_X : \h \to \h$ by
\begin{align}
    \label{eq:TX}
    T_X &\coloneqq \frac{1}{n}\sum_{i=1}^n K_{x_i} K_{x_i}^*,
\end{align}
and define the sample basis function
\begin{align}
    \label{eq:gZ}
    g_Z  \coloneqq \frac{1}{n} \sum_{i=1}^n K_{x_i}y_i \in \h.
\end{align}
The following explicit operator form of the solution of KRR is shown in
\citet{caponnetto2007_OptimalRates}:
\begin{align}
    \label{eq:KRR}
    \hat{f}_\lambda^\KRR = (T_X+\lambda)^{-1} g_Z.
\end{align}
The first step of our proof is the bias-variance decomposition,
which differs from the commonly used approximation-estimation error decomposition in the literature
(e.g. \citet{caponnetto2007_OptimalRates,fischer2020_SobolevNorm}).
It can be shown that
\begin{align*}
    \E \left[ \norm{\hat{f}^\KRR_\lambda - f^*_\rho}^2_{L^2} \;\Big|\; X \right]
    &= \lambda^{2}\norm{(T_X+\lambda)^{-1} f^*_\rho}^2_{L^2}  +
    \frac{1}{n^2} \sum_{i=1}^n \sigma_{x_i}^2 \norm{(T_X+\lambda)^{-1} k(x_i,\cdot)}^2_{L^2} \\
    & \eqqcolon  \mathbf{Bias}^2 + \mathbf{Var}.
\end{align*}
Then, the desired lower bound can be derived by proving the following lower bounds of the two terms respectively:
\begin{align}
    \label{eq:BiasVaisLowerBound}
    \mathbf{Bias}^2 = \Omega_{\p}\left(\lambda^2 \right), \quad
    \mathbf{Var} = \Omega_{\p}\left( \frac{\lambda^{-\beta}}{n} \right).
\end{align}
These two lower bounds follow from our bias-variance trade-off intuition:
smaller $\lambda$ (less regularization) leads to smaller bias but larger variance,
while the variance decreases as $n$ increases.
They also coincide with the main terms of the upper bound in the literature, see, e.g.,
\citet{caponnetto2007_OptimalRates,fischer2020_SobolevNorm}.

\paragraph{The bias term}
First, we establish the approximation $\norm{(T_X+\lambda)^{-1} f^*_\rho}^2_{L^2}  \approx \norm{(T+\lambda)^{-1} f^*_\rho}^2_{L^2} $,
where we refine the concentration result between $(T_X+\lambda)^{-1/2}$
and $(T+\lambda)^{-1/2}$ obtained in the previous literature.
Second, we use the eigen-decomposition and the fact that KRR's qualification is limited to show that
$\lambda \norm{(T + \lambda)^{-1} f^*_\rho}_{L^2} \geq c\lambda$ for some constant $c > 0$.
Consequently, we have
\begin{align*}
    \mathbf{Bias}^2 \approx \lambda^{2}\norm{(T+\lambda)^{-1} f^*_\rho}^2_{L^2}
    \geq  c\lambda^2.
\end{align*}
This lower bound shows that the bais of KRR can only decrease in linear order with respect to $\lambda$
no matter how smooth the regression function is, limiting the performance of KRR.


\paragraph{The variance term}
We first rewrite the variance term in matrix forms and
deduce that
\begin{align}
    \label{eq:VarAltForm}
    \mathbf{Var} \geq \frac{\bar{\sigma}^2}{n} \int_{\caX} \norm{(T_X+\lambda)^{-1}k(x,\cdot)}_{L^2,n}^2 \dd \mu(x),
\end{align}
where $\norm{f}_{L^2,n}^2 \coloneqq \frac{1}{n}\sum_{i=1}^n f(x_i)^2$.
This observation \cref{eq:VarAltForm} is key and novel in the proof, allowing us to
make the following two-step approximation:
\begin{align*}
    \norm{(T_X+\lambda)^{-1}k(x,\cdot)}_{L^2,n}^2 \approx
    \norm{(T+\lambda)^{-1}k(x,\cdot)}_{L^2,n}^2
    \approx \norm{(T+\lambda)^{-1}k(x,\cdot)}_{L^2}^2.
\end{align*}
The main difficulty here is to control the errors in the approximation so that they are infinitesimal
compared to the main term, while errors of the same order as the main term are sufficient in the proof of upper bounds.
To resolve the difficulty, we refine the analysis by combining both the integral operator technique (e.g.\ in \citet{caponnetto2007_OptimalRates}) and
the empirical process technique (e.g.\ in \citet{steinwart2009_OptimalRates}), applying tight concentration inequalities and
analyzing the covering number of regularized basis function family $\left\{ (T+\lambda)^{-1}k(x,\cdot) \right\}_{x \in \caX}$.
Finally, by Mercer's theorem and the eigenvalue decay rate, we obtain that
\begin{align*}
    \int_{\mathcal{X}} \norm{(T+\lambda)^{-1} k(x,\cdot)}_{L^2}^2 \dd \mu(x)
    & = \int_{\mathcal{X}}\sum_{i =1}^\infty \left( \frac{\lambda_i}{\lambda + \lambda_i} \right)^2 e_i(x)^2 \dd \mu(x) \\
    &= \sum_{i =1}^\infty \left( \frac{\lambda_i}{\lambda + \lambda_i} \right)^2 \eqqcolon \mathcal{N}_2(\lambda).
\end{align*}
It can be shown that $\mathcal{N}_2(\lambda) = \Tr \left[ (T+\lambda)^{-1} T \right]^2$ and it is a variant of the
effective dimension $\mathcal{N}(\lambda) = \Tr \left[ (T+\lambda)^{-1} T \right]$ introduced in the literature
\citep{caponnetto2007_OptimalRates}.
The condition \hyperlink{cond:EigenDecay}{${\bf (A)}$} implies that $\mathcal{N}_2(\lambda) \geq  c \lambda^{-\beta}$.
As a result, we get
\begin{align*}
    \mathbf{Var}& \geq \frac{\bar{\sigma}^2}{n} \int_{\caX} \norm{(T_X+\lambda)^{-1}k(x,\cdot)}_{L^2,n}^2 \dd \mu(x)
    \approx \frac{\bar{\sigma}^2}{n} \int_{\mathcal{X}} \norm{(T+\lambda)^{-1} k(x,\cdot)}_{L^2}^2 \dd \mu(x) \\
    &= \frac{\bar{\sigma}^2}{n}\mathcal{N}_2(\lambda) \geq \tilde{c} \frac{\lambda^{-\beta}}{n}.
\end{align*}




    \section{Numerical Experiments}\label{sec:experiments}
The saturation effect in KRR has been reported in a number of works (e.g., \citet{gerfo2008_SpectralAlgorithms,dicker2017_KernelRidge}).
In this section, we illustrate the saturation effect through a toy example.

Suppose that $\mathcal{X} = [0,1]$ and $\mu$ is the uniform distribution on $[0,1]$.
Let us consider the following first-order Sobolev space containing absolutely continuous functions
\begin{align}
    H^1 \coloneqq \left\{ f : [0,1] \to \R ~\Big|~ f \text{ is A.C., } f(0) = 0,\ \int_0^1 (f'(x))^2 \dd x < \infty \right\}.
\end{align}
It is well-known that it is the RKHS associated to the kernel $k(x,y) = \min(x,y)$ \citep{wainwright2019_HighdimensionalStatistics}.
Let $T$ be the integral operator associated to the kernel function $k$.
We know explicitly the eigenvalues and eigenfunctions of this operator:
\begin{align}
    \lambda_n = \left( \frac{2n-1}{2}\pi \right)^{-2},\quad
    e_n(x) = \sqrt{2} \sin(\frac{2n-1}{2}\pi x),\quad n = 1,2,\dots.
\end{align}

It is clear that the eigenvalue decay rate $\beta$ of the kernel function $k$ (or the RKHS $H^{1}$) is $0.5$.
To illustrate the saturation effect better, we choose the second eigen-function $e_2 = \sqrt {2} \sin\left(\frac{3}{2}\pi x\right)$
to be the regression function in our experiment.
The significance of this choice is that for any $\alpha>0$, we have $e_{2}\in [\h]^\alpha$.
We further set the noise to be an independent Gaussian noise with variance $\sigma = 0.2$. In other words, we consider the following data generation model:
\begin{align}
    y=f^*(x)+\sigma \ep,
\end{align}
where $f^*(x) = e_2(x)$ and $\ep\sim N(0,1)$ is the standard normal distribution.

Since the gradient flow (GF) method with proper early stopping is a spectral algorithm proved to be rate optimal for any $\alpha \geq 1$
and any $f_\rho^* \in [\h]^{\alpha}$ \citep{yao2007_EarlyStopping,lin2018_OptimalRates},
we make a comparison between KRR and GF to show the saturation effect.
More precisely, we report and compare the decaying rates of the generalization errors
produced by the KRR and GF methods with different selections of parameters.

For various $\alpha$'s, we choose
the regularization parameter in KRR as $\lambda = c n^{-\frac{1}{\alpha+\beta}}$ for a fixed constant $c$, and
set the stopping time in the gradient flow by $t = \lambda^{-1}$.
It is shown in \citet{lin2018_OptimalRates}
that the choice of stopping time $t$ (i.e., $t=\lambda^{-1}$) is optimal for the gradient descent algorithm
under the assumption that $f^*_\rho \in [\h]^\alpha$.
By choosing different $\alpha$'s, we also evaluate the performance of the algorithms with different selections of parameters.
For the generalization error $\|\hat{f} - f^*_\rho\|_{L^2}^2$,
we numerically compute the integration ($L^2$-norm) by Simpson's formula with $N \gg n$ points.
For each $n$, we perform 100 trials and show the average as well as the region within one standard deviation.
Finally, we use logarithmic least-squares
$\log \mathrm{err} = r \log n + b$ to fit the error with respect to the sample size,
and report the slope $r$ as the convergence rate.

The results are reported in \cref{fig:Saturation}.
We also change the regression function to be other eigenfunctions and list the results in \cref{tab:Saturation}.
First, the error curves show that the error converges indeed in the rate of $n^{-r}$.
Moreover, when we apply the GF method, the convergence rate increases as the $\alpha$ increases, confirming that spectral
algorithms with high qualification can adapt to the smoothness of the regression function.
The convergence rates also match the theoretical value $\frac{\alpha}{\alpha+\beta}$.
In contrast, when we apply the KRR method, the convergence rate achieves its best performance at $\alpha = 2$,
and the rate decreases as $\alpha$ gets bigger, showing the saturation effect.
The resulting best rate also coincides with our theoretic value $\frac{2}{2+\beta} = 0.8$.
We also remark that besides the best rate, rates of other selection of regularization parameter also
correspond to theoretical lower bounds that can be further obtained by the bias-variance decomposition \cref{eq:BiasVaisLowerBound}.



\begin{figure}[htb]
    \centering
    \subfigure{
        \begin{minipage}[t]{0.5\linewidth}
            \centering
            \includegraphics[width=1.05\linewidth]{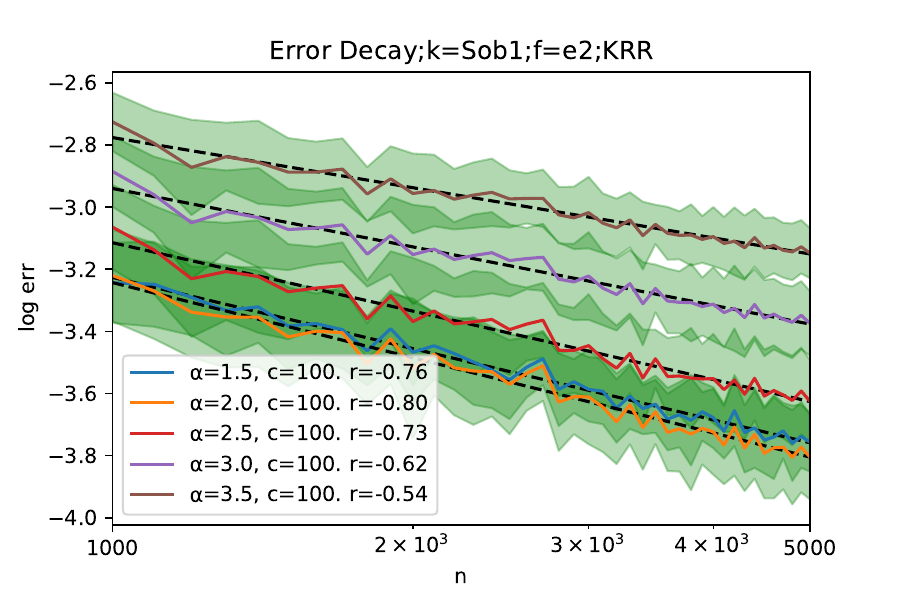}
        \end{minipage}%
    }%
    \subfigure{
        \begin{minipage}[t]{0.5\linewidth}
            \centering
            \includegraphics[width=1.05\linewidth]{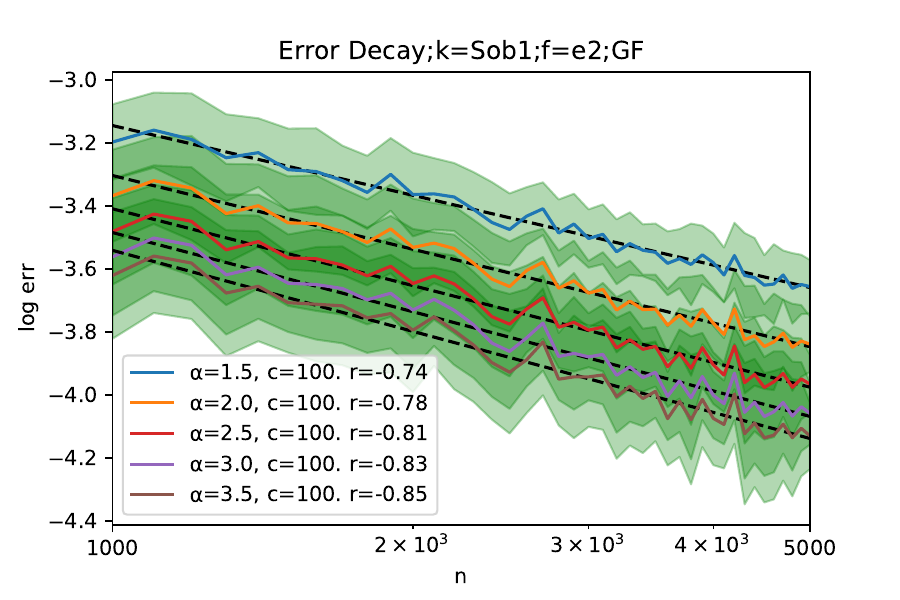}
        \end{minipage}%
    }%

    \centering
    \caption{Error decay curves of KRR and GF. Both axes are logarithmic.
    The colored curves show the averaged error over 100 trials and the regions within one standard deviation are shown in green.
    The dashed black lines are computed using logarithmic least-squares and the slopes are reported as convergence rates.}

    \label{fig:Saturation}
\end{figure}

\begin{table}[htb]
    \centering

    \label{tab:Saturation}
    \begin{tabular}{c|cc|cc|cc|cc}
        & \multicolumn{2}{c|}{$f^*=e_1$} & \multicolumn{2}{c|}{$f^*=e_2$} & \multicolumn{2}{c|}{$f^*=e_3$} & \multicolumn{2}{c}{$f^*=e_4$} \\
        \hline
        $\alpha$ & KRR       & GF           & KRR          & GF           & KRR          & GF           & KRR & GF \\
        \hline
        1.5      & .73       & .72          & .76          & .74          & .75          & .74          & .74   & .75   \\
        2.0      & {\bf .76} & .76          & \textbf{.80} & .78          & \textbf{.78} & .78          & \bf .80   & .80   \\
        2.5      & .69       & .80          & .73          & .81          & .69          & .82          & .67   & .84   \\
        3.0      & .58       & .82          & .62          & .83          & .59          & .84          & .56    & .87  \\
        3.5      & .49       & \textbf{.84} & .54          & \textbf{.85} & .51          & \textbf{.86} & .48    & \bf .90   \\
    \end{tabular}
    \caption{Convergence rates comparison between KRR and GF with $\lambda = cn^{-\frac{1}{\alpha+\beta}}$ for various $\alpha$'s.
    Bold numbers represent the max rate over different choices of $\lambda$.}
\end{table}

We conduct further experiments with different kernels and report them in \cref{sec:ExprDetail}.
The results are also approving.

In conclusion, our numerical results confirm the saturation effect and approve our theory,
and all the results can be explained and understood by the theory.

    \section{Conclusion}
    The saturation effect refers to the phenomenon that kernel ridge regression fails to achieve the information theoretical lower bound when the regression function is too smooth.
When the regression function is sufficiently smooth,
a saturation lower bound of KRR has been conjectured for decades.
In this paper, we provide a rigorously proof of the saturation effect of KRR, i.e.,
we show that, if $f\in [\h]^{\alpha}$ for some $\alpha>2$,
the rate of generalization error of the KRR can not be better than $n^{-\frac{2}{2+\beta}}$,
no matter how one tunes the KRR.

Our results suggest that the KRR method of regularization may be inferior to some special
regularization algorithms, including spectral cut-off and kernel gradient descent,
which never saturate and are capable of achieving optimal rates
\citep{bauer2007_RegularizationAlgorithms,lin2018_OptimalRates}.
The technical tools developed here may also help us establish the lower bound of the saturation effects for other
spectral regularization algorithms.


    \subsubsection*{Acknowledgments}
    This research was partially supported by the National Natural Science Foundation of China (Grant 11971257),
    Beijing Natural Science Foundation (Grant Z190001),
    National Key R\&D Program of China (2020AAA0105200),
    and Beijing Academy of Artificial Intelligence.
    
    We would like to thank Dimitri Meunier and Zikai Shen~\citep{meunier2024optimal} for pointing out some issues in the proof (mainly Lemma C.7 and Lemma C.10) of this paper after its publication.
    These mistakes do not affect the main results in this paper and they are corrected in this new revision.
    We apologize for any inconvenience caused.

    \bibliography{main}
    \bibliographystyle{iclr2023_conference}

    \appendix
    \clearpage

    \section{Basic facts in RKHS}

Let $k$ be a continuous positive definite kernel function defined on a compact set  $\mathcal{X}$ and $\mathcal{H}$ be the RKHS associated to the kernel $k$. Since $k$ is a continuous function and $\mathcal{X}$ is a compact set, there exists a constant $\kappa$ such that
\begin{align}
    |k(x,y)|\leq \kappa^2 \mbox{ for any } x,y\in \mathcal{X}.
\end{align}
It is well known that $k(x,\cdot) \in \h$ as a function and that the inner product satisfies that
\begin{align}
    \label{eq:RKHSfx}
    \ang{k(x,\cdot),f} = f(x), \quad \forall f \in \h.
\end{align}
In particularly, we have $\ang{k(x,\cdot),k(y,\cdot)}_{\h} = k(x,y)$.

Let $T:L^{2}\rightarrow L^{2}$ be the integral operator associated to the kernel fucntion $k$. By the spectral decomposition of $T$, it can be easily shown that
$\Ran T|_{\h} \subseteq \h$
and $\norm{Tf}_{\h} \leq \kappa^2 \norm{f}_{\h}$,
which implies that $T$ can also be viewed a bounded linear operator on $\h$.
With a little abuse of notation, we still use $T$ to indicate it.

We denote by $\mathscr{B}(\mathcal{H})$ the set of bounded linear operators over $\mathcal{H}$ and
$\norm{\cdot}_{\mathscr{B}(\mathcal{H})}$ the corresponding operator norm,
where the subscript may be omitted if there is no confusion.

\subsection{Functions in RKHS}
\begin{lemma} Suppose that $\h$ is the RKHS associated to the kernel $k$.
   Let $\norm{f}_{\infty} \coloneqq \sup_{x \in \caX} \abs{f(x)}$ be the supremum-norm of a function $f : \caX \to \R$. Then for any $f\in \h$, we have
   $\norm{f}_{\infty}\leq \kappa \norm{f}_{\h}$

\proof It is easy to verify that for $f \in \h$,
\begin{align*}
    \norm{f}_{\infty} &= \sup_{x \in \caX} \abs{f(x)}
    \leq \sup_{x \in \caX} \abs{\ang{k(x,\cdot),f}_{\h}} \\
    \label{eq:NormRKHS_Sup}
    & \leq \sup_{x \in \caX} \norm{k(x,\cdot)}_{\h} \norm{f}_{\h}
    \leq \kappa \norm{f}_{\h}.
\end{align*}
\qed
\end{lemma}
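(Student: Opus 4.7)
The statement is the standard pointwise bound for functions in an RKHS, so the proof will be short and will rely entirely on the reproducing property together with the uniform bound $|k(x,y)| \leq \kappa^2$ recorded just before the lemma. My plan is to fix an arbitrary $x \in \caX$ and rewrite $f(x)$ using the reproducing identity \eqref{eq:RKHSfx}, namely $f(x) = \langle k(x,\cdot),f\rangle_{\h}$, which converts a pointwise evaluation into an inner product in $\h$.

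Next I would apply the Cauchy--Schwarz inequality in $\h$ to obtain
\[
|f(x)| = |\langle k(x,\cdot),f\rangle_{\h}| \leq \|k(x,\cdot)\|_{\h}\,\|f\|_{\h}.
\]
To control $\|k(x,\cdot)\|_{\h}$, I would use the reproducing property once more in the special form $\langle k(x,\cdot),k(x,\cdot)\rangle_{\h} = k(x,x)$, so that $\|k(x,\cdot)\|_{\h}^2 = k(x,x) \leq \kappa^2$ by the a priori bound on the kernel. This yields $\|k(x,\cdot)\|_{\h} \leq \kappa$.

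Combining these two inequalities gives $|f(x)| \leq \kappa\,\|f\|_{\h}$ for every $x \in \caX$, and taking the supremum over $x$ on the left-hand side produces the desired bound $\|f\|_{\infty} \leq \kappa\,\|f\|_{\h}$. There is no real obstacle here; the only thing to be slightly careful about is to bound $\|k(x,\cdot)\|_{\h}$ uniformly in $x$ (rather than pulling the supremum inside the Cauchy--Schwarz step prematurely), which is immediate from the hypothesis that $|k(x,y)| \leq \kappa^2$ on all of $\caX \times \caX$.
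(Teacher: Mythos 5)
Your proof is correct and follows essentially the same route as the paper: the reproducing identity $f(x)=\ang{k(x,\cdot),f}_{\h}$, Cauchy--Schwarz, and the bound $\norm{k(x,\cdot)}_{\h}^2=k(x,x)\leq \kappa^2$. No issues.
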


\begin{definition}
    Let $K \subseteq \R^d$ be a compact set
    and $\alpha \in [0,1]$.
    For a function $f : K \to \R$, we introduce the Hölder semi-norm
    \begin{align}
        [f]_{\alpha,K} \coloneqq \sup_{x,y \in K,\ x\neq y} \frac{\abs{f(x) - f(y)}}{\abs{x-y}^\alpha},
    \end{align}
    where $\abs{\cdot}$ represents the usual Euclidean norm.
    Then, we define the Hölder space
    \begin{align}
        C^{\alpha}(K) \coloneqq \left\{ f : K \to \R \; | \; [f]_{\alpha,K} < \infty  \right\},
    \end{align}
    which is equipped with norm
    \begin{align*}
        \norm{f}_{C^{\alpha}(K)} \coloneqq \sup_{x \in K}\abs{f(x)} + [f]_{\alpha,K}.
    \end{align*}
\end{definition}

\begin{lemma}
    \label{lem:HolderRKHS}
    Assume that $\h$ is an RKHS over a compact set $\caX \subseteq \R^d$ associated with a kernel
    $k \in C^\alpha(\caX \times \caX)$ for $\alpha \in (0,1]$.
    Then, we have $\h \subseteq C^{\alpha/2}(\caX)$ and
    \begin{align}
        \label{eq:HolderRKHS}
        [f]_{\alpha/2, \caX}\leq \sqrt{2 [k]_{\alpha,\caX \times \caX}} \norm{f}_{\h}.
    \end{align}
\end{lemma}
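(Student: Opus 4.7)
The plan is to use the reproducing property to transfer the Hölder regularity of the kernel $k$ directly onto the function $f \in \h$ via the Cauchy--Schwarz inequality. The key identity is that for any $x,y \in \caX$ and any $f \in \h$,
\begin{align*}
    f(x) - f(y) = \ang{k(x,\cdot) - k(y,\cdot),\, f}_{\h},
\end{align*}
so Cauchy--Schwarz gives $\abs{f(x) - f(y)} \leq \norm{k(x,\cdot) - k(y,\cdot)}_{\h}\, \norm{f}_{\h}$. The whole proof is thus reduced to controlling the $\h$-norm of the kernel difference by $|x-y|^{\alpha/2}$.

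For that, I would expand the squared norm using the reproducing identity $\ang{k(x,\cdot),k(y,\cdot)}_{\h} = k(x,y)$:
\begin{align*}
    \norm{k(x,\cdot) - k(y,\cdot)}_{\h}^{2}
    = k(x,x) - k(x,y) - k(y,x) + k(y,y)
    = \bigl[k(x,x) - k(x,y)\bigr] + \bigl[k(y,y) - k(y,x)\bigr].
\end{align*}
Applying the Hölder assumption on $k$ to each bracket, and noting that $\norm{(x,x) - (x,y)}_{\R^{d\times d}} = \norm{(y,y) - (y,x)}_{\R^{d \times d}} = |x-y|$, each term is bounded by $[k]_{\alpha,\caX \times \caX}\, |x-y|^{\alpha}$, which yields
\begin{align*}
    \norm{k(x,\cdot) - k(y,\cdot)}_{\h} \leq \sqrt{2\,[k]_{\alpha,\caX \times \caX}}\, |x-y|^{\alpha/2}.
\end{align*}
Combining this with Cauchy--Schwarz, dividing by $|x-y|^{\alpha/2}$, and taking the supremum over $x \neq y$ in $\caX$ gives \eqref{eq:HolderRKHS}. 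The inclusion $\h \subseteq C^{\alpha/2}(\caX)$ then follows because the previous lemma ($\norm{f}_{\infty} \leq \kappa \norm{f}_{\h}$) ensures $f$ is also bounded, so both constituents of $\norm{f}_{C^{\alpha/2}(\caX)}$ are finite.

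There is no real obstacle here; the only point requiring mild care is the arithmetic on the product space $\caX \times \caX$, specifically verifying that $(x,x)$ and $(x,y)$ (respectively $(y,y)$ and $(y,x)$) differ only in one coordinate block so that their distance equals $|x-y|$. Once that is observed, the argument is a direct two-line computation plus Cauchy--Schwarz, and the constant $\sqrt{2}$ comes purely from the two symmetric cross terms in the expansion of the norm.
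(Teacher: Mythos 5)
Your proof is correct and follows essentially the same route as the paper: the reproducing property plus Cauchy--Schwarz, then expanding $\norm{k(x,\cdot)-k(y,\cdot)}_{\h}^2 = k(x,x)+k(y,y)-2k(x,y)$ and bounding the two differences by the Hölder condition on $k$, which is exactly the paper's argument. The extra remark that boundedness (via $\norm{f}_\infty \le \kappa \norm{f}_{\h}$) gives finiteness of the full $C^{\alpha/2}$-norm is a harmless addition the paper leaves implicit.
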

\begin{proof}
    By the properties of RKHS, we have
    \begin{align*}
        f(x) - f(y) &= \ang{f,k(x,\cdot)}_{\h} - \ang{f,k(y,\cdot)}_{\h}  = \ang{f,k(x,\cdot) - k(b,\cdot)}_{\h}  \leq \norm{f}_{\h} \norm{k(x,\cdot) - k(y,\cdot)}_{\h}.
    \end{align*}
    Moreover,
    \begin{align*}
        &\|k(x,\cdot) - k(y,\cdot)\|_{\h}^2
          = k(x,x) + k(y,y) - 2 k(x,y) \\
         \leq & \abs{k(x,x) - k(x,y)} + \abs{k(y,y) - k(x,y)}  \leq 2  [k]_{\alpha,\caX \times \caX} \abs{x-y}^{\alpha}.
    \end{align*}
    Therefore, we obtain
    \begin{align*}
        \abs{f(x) - f(y)}
        \leq \norm{f}_{\h} \sqrt {2  [k]_{\alpha,\caX \times \caX} }\abs{x-y}^{\alpha/2}.
    \end{align*}

\end{proof}

\subsection{Sample subspace and semi-norm}
\label{subsec:SampleSubspace}
It is convenient to introduce the following commonly used notations
\begin{align}
    & \K(x,X) \coloneqq\left( k(x,x_1),\dots,k(x,x_n) \right),\quad \K(X,x) \coloneqq \K(x,X)', \\
    & \K(X,X) \coloneqq \big(k(x_i,x_j)\big)_{i,j}, \\
    & K \coloneqq \frac{1}{n}\K(X,X),
\end{align}
where $K$ is known as the (normalized) kernel matrix.
For a function $f$, we also denote by
\begin{align*}
    f[X] =  (f(x_1),\dots,f(x_n))'
\end{align*}
the column vector of function values.
\begin{definition}
Given $\{x_{1},...,x_{n}\}\subset \mathcal{X}$, 
    the subspace 
    \begin{align}
    \mathcal{H}_n \coloneqq \spn \left\{ k(x_1,\cdot),\dots,k(x_n,\cdot) \right\} \subset \h.
\end{align}
    of $\mathcal{H}$  is called the sample subspace. We also call the operator
    $Q_n : \h \to \h_n$ given by
    \begin{align}\label{eq:QnExplicitForm}
        Q_{n}(f)(x)=\mathbb{K}(x, X)\mathbb{K}(X,X)^{-1}f(X).
    \end{align}
the sample projection map.
\end{definition}

Recall that we have defined the operator $T_{X}=\frac{1}{n}\sum_{i}K_{x_{i}}K_{x_{i}}^{*}$ in \cref{eq:TX}.
It is clear that $T_{X}=T_{X}Q_{n}$ and $\Ran T_{X}=\h_{n}$. Under the natural base $\{k(x_{1},\cdot),\dots,k(x_{1},\cdot)\}$ of $\h_{n}$, we have 
\begin{align*}
    T_X k(x_i,\cdot) = \frac{1}{n}\sum_{j=1}^n K_{x_j} k(x_i,x_j) =
    \frac{1}{n} k(x_i,x_j) k(x_j,\cdot),
\end{align*}
i.e., $T_{X}$ can be represented by the matrix $K$ under the natural basis. We can use 
\begin{align}
    T_{X}\mathbb{K}(X,\cdot)=K\mathbb{K}(X,\cdot)
\end{align}
to express this result. 
Furthermore, for any continuous function $\varphi(x)$, the operator $\varphi(T_{X})$ satisfies that
\begin{align}
    \label{eq:TXMatrixForm}
    \varphi(T_X)\K(X,\cdot) = \varphi(K) \K(X,\cdot).
\end{align}
In particlar, we have
\begin{align}
    \label{eq:TXActionF}
    (\varphi(T_X) f)[X] = \varphi(K)f[X].
\end{align}

Since    $g_Z \in \h_n$ ( see \cref{eq:gZ}).
Therefore, we know that  
\begin{align*}
    \hat{f}_\lambda^\KRR(x) = \frac{1}{n}\K(x,X) \left( K + \lambda \right)^{-1} \mathbf{y}.
\end{align*}

\subsubsection{Semi-inner products in the sample space}

We consider the following sample semi-inner products:
\begin{align}
    \label{eq:SampleInnerProductL2}
    \ang{f,g}_{L^2,n} & \coloneqq \frac{1}{n}\sum_{i=1}^n f(x_i)g(x_i) = \frac{1}{n}f[X]'g[X], \\
    \label{eq:SampleInnerProductH}
    \ang{f,g}_{\mathcal{H},n} & \coloneqq \ang{Q_n f, Q_n g}_{\mathcal{H}}.
\end{align}
\begin{lemma}
     \begin{align}
    \ang{f,g}_{\mathcal{H},n}=\frac{1}{n}f[X]'K^{-1}g[X]
    \end{align}   
\end{lemma}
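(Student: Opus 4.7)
The plan is to expand $Q_n f$ and $Q_n g$ in the natural basis $\{k(x_1,\cdot),\dots,k(x_n,\cdot)\}$ of $\h_n$, then use the reproducing property $\ang{k(x_i,\cdot),k(x_j,\cdot)}_{\h}=k(x_i,x_j)$ to convert the RKHS inner product into a matrix bilinear form, and finally rewrite everything in terms of $K = \frac{1}{n}\K(X,X)$.

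More concretely, first I would identify the coefficient vector of $Q_n f$ in the natural basis. Writing $Q_n f = \sum_{i=1}^n a_i k(x_i,\cdot)$ and evaluating at each $x_j$ gives $(Q_n f)[X] = \K(X,X)\, a$. On the other hand, the explicit formula \eqref{eq:QnExplicitForm} for $Q_n$ yields $(Q_n f)[X] = \K(X,X)\K(X,X)^{-1} f[X] = f[X]$, so $a = \K(X,X)^{-1} f[X]$. The same argument applied to $g$ gives the coefficient vector $b = \K(X,X)^{-1} g[X]$.

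Second, I would compute the inner product directly. Using bilinearity and the reproducing property,
\begin{align*}
\ang{Q_n f, Q_n g}_{\h}
= \sum_{i,j=1}^n a_i b_j \ang{k(x_i,\cdot), k(x_j,\cdot)}_{\h}
= \sum_{i,j=1}^n a_i b_j k(x_i,x_j)
= a' \K(X,X)\, b.
\end{align*}
Substituting the formulas for $a$ and $b$ above, the middle $\K(X,X)$ cancels one of the inverses, producing
\begin{align*}
\ang{Q_n f, Q_n g}_{\h} = f[X]'\, \K(X,X)^{-1}\, g[X].
\end{align*}

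Finally, since $K = \frac{1}{n}\K(X,X)$, we have $\K(X,X)^{-1} = \frac{1}{n} K^{-1}$, which immediately yields the claimed identity. There is no real obstacle here: the only subtlety is the implicit assumption that $\K(X,X)$ (equivalently $K$) is invertible, which is already built into the definition of $Q_n$ via \eqref{eq:QnExplicitForm}; this is standard since the kernel $k$ is continuous and positive definite on a compact $\caX$, so generically the Gram matrix is nonsingular for distinct sample points.
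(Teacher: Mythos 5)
Your proposal is correct and follows essentially the same route as the paper: both identify the coefficient vector of $Q_nf$ in the natural basis as $\K(X,X)^{-1}f[X]$ from \eqref{eq:QnExplicitForm}, use the reproducing property to turn the RKHS inner product into $\left(\K(X,X)^{-1}f[X]\right)'\K(X,X)\left(\K(X,X)^{-1}g[X]\right)$, and simplify with $K=\frac{1}{n}\K(X,X)$. Nothing further is needed.
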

\begin{proof}
    By the definition  \cref{eq:QnExplicitForm} of $Q_{n}$ , we have
\begin{align*}
    \ang{f, g}_{\mathcal{H},n} &=
    \ang{\K(\cdot,X)\K(X,X)^{-1}f[X], \K(\cdot,X)\K(X,X)^{-1}g[X]}_{\h} \\
    &=\left( \K(X,X)^{-1}f[X] \right)' \K(X,X) \left( \K(X,X)^{-1}g[X] \right) \\
    &= f[X]' \K(X,X)^{-1} g[X] = \frac{1}{n}f[X]' K^{-1} g[X].
\end{align*}
\end{proof}

\begin{prop}
    For $f,g \in \h$, we have
    \begin{equation}
        \label{eq:SampleInnerProductsRelation}
        \begin{aligned}
            \ang{f,g}_{L^2,n} & = \ang{T_X f,g}_{\h} = \ang{T_X^{1/2} f, T_X^{1/2} g}, \\
            \norm{f}_{L^2,n} & = \norm{T_X^{1/2} f}_{\mathcal{H}}.
        \end{aligned}
    \end{equation}
\end{prop}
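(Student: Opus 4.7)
The plan is to verify the identity by unpacking the definition of $T_X$ as $\frac{1}{n}\sum_i K_{x_i}K_{x_i}^*$ and using the adjoint relation $K_x^* f = f(x)$. Specifically, I would first expand $\ang{T_X f, g}_\h$ directly using the definition in \cref{eq:TX}: since the inner product is linear and bounded, I can move it past the finite sum to obtain
\begin{align*}
\ang{T_X f, g}_\h = \frac{1}{n}\sum_{i=1}^n \ang{K_{x_i} K_{x_i}^* f, g}_\h = \frac{1}{n}\sum_{i=1}^n \ang{K_{x_i}^* f, K_{x_i}^* g}_\R,
\end{align*}
where I used that $(K_{x_i})^* = K_{x_i}^*$ to move $K_{x_i}$ from the left slot of the $\h$-inner product to the right slot as its adjoint $K_{x_i}^*$. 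By the definition $K_x^* h = h(x)$ for $h \in \h$, the right-hand side equals $\frac{1}{n}\sum_i f(x_i) g(x_i)$, which by \cref{eq:SampleInnerProductL2} is exactly $\ang{f,g}_{L^2,n}$.

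Next, to get the middle equality $\ang{T_X f, g}_\h = \ang{T_X^{1/2}f, T_X^{1/2}g}_\h$, I would invoke the fact that $T_X$ is a positive self-adjoint operator on $\h$ (a finite sum of the positive rank-one operators $K_{x_i}K_{x_i}^*$), so the square root $T_X^{1/2}$ exists, is also self-adjoint, and satisfies $T_X^{1/2} T_X^{1/2} = T_X$. Then
\begin{align*}
\ang{T_X f, g}_\h = \ang{T_X^{1/2} T_X^{1/2} f, g}_\h = \ang{T_X^{1/2} f, T_X^{1/2} g}_\h,
\end{align*}
just moving one factor of $T_X^{1/2}$ across by self-adjointness.

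Finally, the norm identity $\norm{f}_{L^2,n} = \norm{T_X^{1/2}f}_\h$ follows immediately by setting $g = f$ in \cref{eq:SampleInnerProductsRelation} and taking square roots, since $\ang{T_X^{1/2}f, T_X^{1/2}f}_\h = \norm{T_X^{1/2}f}_\h^2$ and $\ang{f,f}_{L^2,n} = \norm{f}_{L^2,n}^2$. There is no real obstacle here — the entire statement is a direct bookkeeping exercise reflecting that $T_X$ is the Gram-type operator whose associated quadratic form on $\h$ is precisely the empirical $L^2$ inner product; the only thing to be careful about is citing self-adjointness of $T_X$ (hence existence of $T_X^{1/2}$) and the adjoint identity $(K_x)^* = K_x^*$, both of which are standard.
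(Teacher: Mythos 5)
Your proof is correct, and it takes a genuinely different route from the paper's. You work directly from the definition $T_X = \frac{1}{n}\sum_{i} K_{x_i}K_{x_i}^*$, pass each rank-one factor across the inner product via the adjoint identity $K_{x_i}^* f = f(x_i)$ to get $\ang{T_X f,g}_{\h} = \frac{1}{n}\sum_i f(x_i)g(x_i)$, and then obtain the middle equality from self-adjointness and positivity of $T_X$ (existence of $T_X^{1/2}$). The paper instead routes the computation through the sample-subspace machinery of Section B.2: it uses $\Ran T_X = \h_n$ and the projection $Q_n$ to rewrite $\ang{T_X f,g}_{\h}$ as the semi-inner product $\ang{T_X f,g}_{\h,n} = \frac{1}{n}[(T_X f)[X]]' K^{-1} g[X]$, and then uses the matrix identity $(T_X f)[X]=Kf[X]$ to cancel $K^{-1}$. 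The paper's route is natural given that it has already set up $Q_n$, $\ang{\cdot,\cdot}_{\h,n}$ and the matrix representation of $\varphi(T_X)$, which are reused elsewhere; but it implicitly leans on invertibility of the kernel matrix $K$ (needed for the formula $\ang{f,g}_{\h,n}=\frac{1}{n}f[X]'K^{-1}g[X]$), whereas your argument is self-contained, avoids any invertibility assumption, and makes the adjoint/square-root structure explicit. Both are complete; yours is the more elementary derivation of the same identity.
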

\begin{proof}
    Since $\Ran T_X = \h_n$, we have
    \begin{align*}
        \ang{T_X f,g}_{\h} = \ang{Q_n T_X f, Q_n g} = \ang{T_X f, g}_{\h,n}.
    \end{align*}
    Since $(T_X f)[X] = Kf[X]$, we obtain
    \begin{align*}
        \ang{T_X f,g}_{\mathcal{H},n} &=  \frac{1}{n}[(T_X f)[X]]' K^{-1} g[X]  
        = \frac{1}{n}f[X]' g[X] = \ang{f,g}_{L^2,n}.
    \end{align*}
\end{proof}


\subsection{Covering number and entropy number}
\begin{definition}
    Let $(E,\norm{\cdot}_E)$ be a normed space and $A \subset E$ be a subset.
    For $\ep > 0$, we say $S \subseteq A$ is an $\ep$-net of $A$ if $\forall a \in A$, $\exists s \in S$ such that
    $\norm{a-s} \leq \ep$.
    Moreover, we define the $\ep$-covering number of $A$ to be
    \begin{align}
        \mathcal{N}(A,\norm{\cdot}_{E},\ep) &\coloneqq \inf
        \left\{ n \in \mathbb{N}^* : \exists s_{1},\dots, s_n \in A \text{ such that }
        A \subseteq \bigcup_{i=1}^n B_E(s_i,\ep)\right\}, \\
        &= \inf\left\{ \abs{S} : S \text{ is an $\ep$-net of } A \right\}
    \end{align}
    where $B_E(x_0,\ep) \coloneqq \left\{  x \in E \mid \norm{x_0-x}_E \leq \ep \right\}$ be the closed ball centered at $x_0 \in E$
    with radius $\ep$.
\end{definition}

The following result about the covering number of a bounded set in the Euclidean space is well-known,
see, e.g., \citet[Section 4.2]{vershynin2018_HighdimensionalProbability}.
\begin{lemma}
    \label{lem:CoveringNumberBoundedSet}
    Let $A \subseteq \R^d$ be a bounded set.
    Then there exists a constant $C$ (depending on $A$) such that
    \begin{align}
        \mathcal{N}\left( A, \norm{\cdot}_{\R^d},\ep \right)
        \leq C \ep^{-d}.
    \end{align}
\end{lemma}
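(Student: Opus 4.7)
The plan is to give a standard volume-packing argument for the covering number of a bounded subset of Euclidean space. The key observation is that, because $A$ is bounded, it lies inside some Euclidean ball $B(0,R)$ whose radius $R$ depends only on $A$; hence it suffices to compare two volumes, one for the small balls that are packed inside an enlargement of $A$ and one for the enlargement itself.

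First I would reduce the covering problem to a packing problem. Let $S \subseteq A$ be a maximal $\ep$-separated set, i.e.\ a set of points with pairwise distances at least $\ep$ that cannot be enlarged. Maximality forces every $a \in A$ to satisfy $\norm{a - s}_{\R^d} \leq \ep$ for some $s \in S$, so $S$ is automatically an $\ep$-net of $A$ and $\mathcal{N}(A, \norm{\cdot}_{\R^d}, \ep) \leq \abs{S}$.

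Next I would bound $\abs{S}$ by a volume argument. The separation property means that the open balls $B(s, \ep/2)$ for $s \in S$ are pairwise disjoint, and each is contained in $B(0, R + \ep/2)$ since $S \subseteq A \subseteq B(0,R)$. Writing $v_d$ for the Lebesgue volume of the Euclidean unit ball and summing,
\begin{align*}
    \abs{S} \, v_d (\ep/2)^d \leq v_d (R + \ep/2)^d,
\end{align*}
so $\abs{S} \leq (2R/\ep + 1)^d$.

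Finally I would convert this into the claimed bound $C \ep^{-d}$. When $\ep \leq R$ one has $2R/\ep + 1 \leq 3R/\ep$, giving $\abs{S} \leq (3R)^d \ep^{-d}$; when $\ep > R$ one has trivially $\abs{S} = 1$, which is at most $(3R)^d \ep^{-d}$ once the constant is enlarged to absorb the small-$\ep^{-d}$ regime. Choosing $C$ of order $(3R)^d$ (depending only on $A$) finishes the proof. There is no real obstacle here since the argument is textbook; the only care required is treating the large-$\ep$ regime so that a single constant $C$ works uniformly.
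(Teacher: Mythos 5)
Your volume-packing argument is correct, and it is essentially the paper's own approach: the paper does not prove this lemma but cites \citet[Section 4.2]{vershynin2018_HighdimensionalProbability}, where exactly this maximal $\ep$-separated set plus volume-comparison argument is given. One small imprecision in your last step: for arbitrarily large $\ep$ no single constant can make $1 \leq C\ep^{-d}$ hold, so the bound must be read for $\ep$ in a bounded range (as it is used in the paper, where $\ep \to 0$); for $\ep \leq 2R$ your estimate $\abs{S} \leq (2R/\ep + 1)^d \leq (4R)^d \ep^{-d}$ already gives the claim, and that is all that is needed.
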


    \section{Proof of the main theorem}\label{sec:proofMain}

\subsection{Bias-variance decomposition}
The first step of the proof is the traditional bias-variance decomposition.
Recalling \cref{eq:KRR}, we have
\begin{align*}
    \hat{f}_{\lambda}^\KRR 
    &= \frac{1}{n} (T_X+\lambda)^{-1} \sum_{i=1}^n K_{x_i}y_i = \frac{1}{n} (T_X+\lambda)^{-1} \sum_{i=1}^n K_{x_i} (K_{x_i}^* f^* + \epsilon_i) \\
    &= (T_X+\lambda)^{-1} T_X f^* + \frac{1}{n}\sum_{i=1}^n (T_X+\lambda)^{-1} K_{x_i}\epsilon_i,
\end{align*}
so that
\begin{align*}
    \hat{f}_{\lambda}^\KRR  - f^* = 
    - \lambda (T_X+\lambda)^{-1} f^* + \frac{1}{n}\sum_{i=1}^n (T_X+\lambda)^{-1} K_{x_i}\epsilon_i.
\end{align*}
Taking expectation over the noise $\epsilon$ conditioned on $X$, since   $\ep | x$ are independent noise with mean 0 and variance $\sigma_{x}^{2}$, we have
\begin{align}
    \label{eq:BiasVarianceDecomp}
    \E \left( \norm{\hat{f}_\lambda^\KRR - f^*}^2_{L^2} \;\Big|\; X \right)
    =  \mathbf{Bias}^2 + \mathbf{Var},
\end{align}
where
\begin{equation}
    \label{eq:BiasVarFormula}
    \begin{aligned}
        & \mathbf{Bias}^2 \coloneqq
         \lambda^2 \norm{(T_X+\lambda)^{-1}f^*}_{L^2}^2, \quad \mathbf{Var}\coloneqq
        \frac{1}{n^2} \sum_{i=1}^n \sigma_{x_i}^2 \norm{(T_X+\lambda)^{-1} k(x_i,\cdot)}^2_{L^2}.
    \end{aligned}
\end{equation}

%
%
%

\subsection{Lower bound for the bias term}
\begin{prop}
    \label{prop:PopulationBiasLowerBound}
    Suppose that $f^* \in [\h]^0$ is a non-zero function. There is some $c > 0$ independent of $\lambda$ such that
    \begin{align}
        \norm{(T + \lambda)^{-1} f^*}_{L^2}^2 \geq c \qq{as} \lambda \to 0.
    \end{align}
\end{prop}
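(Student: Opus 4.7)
The plan is to exploit the spectral decomposition of the integral operator $T$ in $L^{2}$. Since $f^{*} \in [\h]^{0} \subseteq L^{2}$, I expand it in the orthonormal system $\{e_{i}\}_{i \in N}$ of eigenfunctions as $f^{*} = \sum_{i \in N} a_{i} e_{i}$ with coefficients $a_{i} = \ang{f^{*}, e_{i}}_{L^{2}}$ satisfying $\sum_{i} a_{i}^{2} = \norm{f^{*}}_{L^{2}}^{2} < \infty$. Because $(T+\lambda)^{-1}$ acts diagonally in this basis, multiplying the $i$-th coordinate by $(\lambda_{i} + \lambda)^{-1}$, I immediately obtain the closed-form expression
\begin{align*}
\norm{(T + \lambda)^{-1} f^{*}}_{L^{2}}^{2} = \sum_{i \in N} \frac{a_{i}^{2}}{(\lambda_{i} + \lambda)^{2}}.
\end{align*}

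Next, I would use the hypothesis $f^{*} \neq 0$ to pick one index $i_{0} \in N$ with $a_{i_{0}} \neq 0$; such an index must exist since otherwise $f^{*}$ would be the zero element of $L^{2}$. Discarding all other terms in the sum yields the lower bound
\begin{align*}
\norm{(T+\lambda)^{-1} f^{*}}_{L^{2}}^{2} \geq \frac{a_{i_{0}}^{2}}{(\lambda_{i_{0}}+\lambda)^{2}},
\end{align*}
and once $\lambda \le \lambda_{i_{0}}$ (a regime entered automatically as $\lambda \to 0$, because $\lambda_{i_{0}} > 0$ is a fixed positive eigenvalue), the right-hand side is at least $a_{i_{0}}^{2}/(4\lambda_{i_{0}}^{2})$. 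Taking $c$ to be this positive constant, which depends only on $f^{*}$ and the kernel but not on $\lambda$, finishes the proof.

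There is essentially no technical obstacle here; the proposition is almost a one-line spectral calculation. Its significance is conceptual rather than computational: it is precisely the statement that \emph{KRR has qualification $2$}. No matter how smooth $f^{*}$ is (even when $f^{*} = e_{i_{0}}$ lies in every interpolation space $[\h]^{\alpha}$, as in the paper's numerical example), the regularized operator $\lambda(T+\lambda)^{-1}$ cannot shrink $f^{*}$ in $L^{2}$ faster than order $\lambda$. Once this proposition is combined with the approximation $\norm{(T_{X}+\lambda)^{-1} f^{*}}_{L^{2}}^{2} \approx \norm{(T+\lambda)^{-1} f^{*}}_{L^{2}}^{2}$ promised in the proof sketch, it yields the bias lower bound $\mathbf{Bias}^{2} \geq c \lambda^{2}$ claimed in \cref{eq:BiasVaisLowerBound}.
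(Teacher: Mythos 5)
Your proof is correct and takes essentially the same route as the paper: both expand $f^*$ in the eigenbasis of $T$, write $\norm{(T+\lambda)^{-1}f^*}_{L^2}^2$ as the series $\sum_i a_i^2/(\lambda_i+\lambda)^2$, and observe it stays bounded away from zero as $\lambda \to 0$ because $f^* \neq 0$. The only cosmetic difference is that you retain a single nonzero coefficient (bounding $\lambda \le \lambda_{i_0}$), while the paper bounds the whole series by replacing $\lambda$ with $1$; both yield the same constant-type lower bound.
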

\begin{proof}
    Since $f^* \in [\h]^0 $ and it is non-zero, we may assume that
    \begin{align*}
        f^* = \sum_{i=1}^{\infty} a_i e_i.
    \end{align*}
    Because $\lambda \to 0$, we have
    \begin{align*}
        \norm{( T+\lambda)^{-1} f^*}_{L^2}^2
        = \sum_{i=1}^{\infty} \left( \frac{a_i}{ \lambda_i + \lambda} \right)^2
        \geq \sum_{i=1}^{\infty} \left( \frac{a_i}{ \lambda_i + 1} \right)^2 > 0 \quad\text{(Since $f^*\neq 0$)}.
    \end{align*}
\end{proof}

\begin{theorem}[Lower bound of the Bias term]
    \label{thm:BiasLowerBound}
    Suppose that $\alpha \geq 2$,  $f^* \in [\mathcal{H}]^\alpha$ is a non-zero function and
     $\lambda = \lambda(n) = \Omega(n^{-(1-\ep)})$ for some $\ep \in (0,1)$. Then, for any $\delta>0$, there exists an integer $n_{0}$ such that for any $n>n_{0}$, 
    we have that
    \begin{align}
        \mathbf{Bias}^2  \geq c \lambda^2,
    \end{align}
     holds with probability at least $1-\delta$ 
    where $c > 0$ is a constant independent of $\lambda$.
    As a consequence, $\mathbf{Bias}^2  = \Omega_{\mathbb{P}}( \lambda^2 )$.
\end{theorem}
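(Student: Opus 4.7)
The plan is to reduce the sample bias $\mathbf{Bias}^2 = \lambda^2 \norm{(T_X+\lambda)^{-1} f^*}_{L^2}^2$ to its population counterpart $\lambda^2 \norm{(T+\lambda)^{-1} f^*}_{L^2}^2$, which is already lower bounded by $c\lambda^2$ via \cref{prop:PopulationBiasLowerBound} (applicable since $f^* \in [\h]^\alpha \subseteq [\h]^0$ is non-zero). By the triangle inequality in $L^2$, it suffices to show that the perturbation $\norm{((T_X+\lambda)^{-1} - (T+\lambda)^{-1}) f^*}_{L^2}$ is, with probability at least $1-\delta$, smaller than, say, half of $\norm{(T+\lambda)^{-1} f^*}_{L^2}$.

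To control the perturbation I would first apply the resolvent identity
\[
(T_X+\lambda)^{-1} - (T+\lambda)^{-1} = (T_X+\lambda)^{-1}(T - T_X)(T+\lambda)^{-1},
\]
and then convert to $\h$-norm via $\norm{g}_{L^2} = \norm{T^{1/2} g}_{\h}$. Inserting symmetrizing factors $(T+\lambda)^{\pm 1/2}$ at the two interior cuts bounds the perturbation by
\[
\bigl\|T^{1/2}(T_X+\lambda)^{-1}(T+\lambda)^{1/2}\bigr\|\;\cdot\;\bigl\|(T+\lambda)^{-1/2}(T-T_X)(T+\lambda)^{-1/2}\bigr\|\;\cdot\;\bigl\|(T+\lambda)^{-1/2} f^*\bigr\|_{\h}.
\]
I would then bound the three factors separately. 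For the rightmost factor, writing $f^* = \sum_i a_i \lambda_i^{\alpha/2} e_i$ with $\sum a_i^2 < \infty$, a direct spectral computation gives $\norm{(T+\lambda)^{-1/2} f^*}_{\h}^2 = \sum_i a_i^2 \lambda_i^{\alpha-1}/(\lambda_i+\lambda) \leq \sum_i a_i^2 \lambda_i^{\alpha-2}$, which is bounded uniformly in $\lambda$ once $\alpha \geq 2$. For the middle factor, standard Bernstein-type concentration for sample covariance operators (as in \citet{caponnetto2007_OptimalRates,fischer2020_SobolevNorm}) yields $\bigl\|(T+\lambda)^{-1/2}(T-T_X)(T+\lambda)^{-1/2}\bigr\| = o_{\p}(1)$ when both $n\lambda/\log n \to \infty$ and $n\lambda^{\beta}/\log n \to \infty$. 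Finally, once that middle factor is $\leq 1/2$ with high probability, a Neumann-series argument (applied to $I - (T+\lambda)^{-1/2}(T-T_X)(T+\lambda)^{-1/2}$) yields the uniform bound $\bigl\|(T+\lambda)^{1/2}(T_X+\lambda)^{-1}(T+\lambda)^{1/2}\bigr\| \leq 2$, and combining this with $\norm{T^{1/2}(T+\lambda)^{-1/2}} \leq 1$ gives $\bigl\|T^{1/2}(T_X+\lambda)^{-1}(T+\lambda)^{1/2}\bigr\| \leq 2$. Therefore the perturbation is $o_{\p}(1)$, so the sample $L^2$-norm is at least half of the population $L^2$-norm on an event of probability $\geq 1-\delta$, and multiplying by $\lambda^2$ concludes $\mathbf{Bias}^2 \geq c\lambda^2$.

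The main obstacle is the middle factor: although this operator concentration is standard, we need it with error that is strictly $o_{\p}(1)$ (not merely bounded by some constant), so that the triangle-inequality argument gives a fixed fraction of the population lower bound rather than just a crude comparison. This is precisely where the hypothesis $\lambda = \Omega(n^{-(1-\ep)})$ enters: it guarantees $n\lambda \geq c n^{\ep} \to \infty$, and since $\beta < 1$ also $n\lambda^{\beta} \geq c n^{1-\beta(1-\ep)} \to \infty$, so both controlling terms in the Bernstein bound vanish. No refinement beyond the existing concentration toolbox is needed for the bias term; the harder refinements of concentration are reserved for the variance argument.
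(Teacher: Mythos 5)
Your proposal is correct, and it follows the same overall strategy as the paper: lower-bound the population quantity $\norm{(T+\lambda)^{-1}f^*}_{L^2}$ via \cref{prop:PopulationBiasLowerBound}, then show the sample quantity $\norm{(T_X+\lambda)^{-1}f^*}_{L^2}$ differs from it by a vanishing perturbation, using the resolvent identity and concentration. The difference is in how the perturbation is factorized and which concentration result carries the load. The paper (in \cref{lem:TxNuFL2Estimation} and \cref{cor:TxNuFL2Estimation2}) bounds
$\norm{T^{1/2}(T_X+\lambda)^{-1}}\cdot\norm{(T_X-T)(T+\lambda)^{-1}f^*}_{\h}$, handling the first factor with \cref{lem:TRatio} (giving $4\lambda^{-1/2}$) and the second with the Hilbert-space--valued Bernstein inequality of \cref{lem:ConcentrationF} applied to the specific function $(T+\lambda)^{-1}f^*$, whose $\sup$- and $L^2$-norms are controlled via \cref{cor:InfNormControlRegularF} using $f^*\in[\h]^2$; this yields an error $O\bigl(\lambda^{-1}/n+\lambda^{-1/2}/\sqrt{n}\bigr)\ln\tfrac{4}{\delta}=O(n^{-q})$. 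You instead symmetrize fully, bounding the perturbation by
$\norm{T^{1/2}(T_X+\lambda)^{-1}(T+\lambda)^{1/2}}\cdot\norm{(T+\lambda)^{-1/2}(T-T_X)(T+\lambda)^{-1/2}}\cdot\norm{(T+\lambda)^{-1/2}f^*}_{\h}$, so that the only stochastic ingredient is the operator-Bernstein bound (the paper's \cref{lem:ConcenRegularTTX}, which also underlies \cref{lem:TRatio}), the first factor is $\le 2$ by the same Neumann-series argument as in \cref{lem:TRatio}, and the third factor is a deterministic constant since $\alpha\ge 2$ gives $\norm{(T+\lambda)^{-1/2}f^*}_{\h}^2\le\sum_i a_i^2\lambda_i^{\alpha-2}\le C\norm{f^*}_{[\h]^\alpha}^2$. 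This buys you a slightly leaner argument: you avoid the vector-valued concentration step and the $\sup$-norm control of $(T+\lambda)^{-1}f^*$ entirely, and your error term $O\bigl(\sqrt{\ln(n/\delta)/(n\lambda)}\bigr)$ is $o(1)$ under $\lambda=\Omega(n^{-(1-\ep)})$ exactly as needed (note that your extra condition $n\lambda^{\beta}/\log n\to\infty$ is redundant, being implied by $n\lambda/\log n\to\infty$ since $\beta<1$, and condition \hyperlink{cond:EigenDecay}{${\bf (A)}$} is not even essential here because the crude bound $\mathcal{N}(\lambda)\le\kappa^2/\lambda$ suffices inside the logarithm). The paper's route, by contrast, produces a quantitative two-sided approximation of the sample norm that it reuses verbatim, but for the purpose of this theorem both arguments are complete and deliver the same conclusion.
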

\begin{proof}
    \cref{cor:TxNuFL2Estimation2} together with \cref{prop:PopulationBiasLowerBound} yields that
    with probability at least $1-\delta$ we have
    \begin{align*}
        \norm{(T_X+\lambda)^{-1}f^*}_{L^2}^2
        & \geq \norm{(T+\lambda)^{-1}f^*}_{L^2}^2 - O(n^{-q})\left( \ln \frac{4}{\delta} \right)^2 \\
        & \geq c - O(n^{-q})\left( \ln \frac{4}{\delta} \right)^2
    \end{align*}
    for some $q>0$.
    Therefore, we get
    \begin{align*}
        \mathbf{Bias}^2 = \lambda^2  \norm{(T_X+\lambda)^{-1}f^*}_{L^2}^2
        \geq c \lambda^2 \left( 1 - O(n^{-q})\left( \ln \frac{4}{\delta} \right)^2 \right)
        & \geq \frac{c}{2} \lambda^2
    \end{align*}
    when $n$ is sufficiently large.
\end{proof}

\subsection{Lower bound for the variance term}
For the variance term, \cref{assu:noise} yields that $\sigma_{x_i}^2 \geq \bar{\sigma}^2$ almost surely.
Recalling the discussion of sample subspaces in \cref{subsec:SampleSubspace},
we have
\begin{equation}
    \label{eq:VarMatrixForm}
    \begin{aligned}
        \mathbf{Var} &\geq \frac{\bar{\sigma}^2}{n^2} \sum_{i=1}^n \norm{(T_X+\lambda)^{-1} k(x_i,\cdot)}^2_{L^2} \\
        &= \frac{\bar{\sigma}^2}{n^2} \norm{(T_X+\lambda)^{-1} \K(X,\cdot)}_{L^2(\mathcal{X},\dd \mu;\R^n)}^2 \\
            \qq{(By \cref{eq:TXMatrixForm})}&=
        \frac{\bar{\sigma}^2}{n^2} \norm{ (K+\lambda)^{-1}\K(X,\cdot)}_{L^2(\caX,\dd \mu;\R^n)}^2 \\
        &= \frac{\bar{\sigma}^2}{n^2} \int_{\caX} \K(x,X)(K+\lambda)^{-2}\K(X,x) \dd \mu(x).
    \end{aligned}
\end{equation}
Let us denote $h_x = k(x,\cdot)$.
Then by definition it is obvious that $h_x[X] = \K(X,x)$.
From \cref{eq:TXActionF}, we find that
\begin{align*}
    \left( (T_X+\lambda)^{-1}h_x  \right)[X] = (K+\lambda)^{-1} h_x[X] = (K+\lambda)^{-1}\K(X,x),
\end{align*}
so we obtain
\begin{align*}
    \frac{1}{n} \K(x,X)(K+\lambda)^{-2}\K(X,x)
    &= \frac{1}{n}\norm{(K+\lambda)^{-1}\K(X,x)}_{\R^n}^2 \\
    &= \frac{1}{n}\norm{\left( (T_X+\lambda)^{-1}h_x  \right)[X]}_{\R^n}^2 \\
    &= \norm{(T_X+\lambda)^{-1}h_x}_{L^2,n}^2.
\end{align*}
from the definition \cref{eq:SampleInnerProductL2} of sample semi-inner product.
Consequently, we get
\begin{align}
    \label{eq:VarAlterForm}
    \mathbf{Var} \geq \frac{\bar{\sigma}^2}{n} \int_{\caX} \norm{(T_X+\lambda)^{-1}h_x}_{L^2,n}^2 \dd \mu(x).
\end{align}
Combining with some concentration results, we can obtain the following theorem.

\begin{theorem}
    \label{thm:VarLowerBound}
    Assume that Assumptions~\ref{assu:space} and~\ref{assu:noise} and condition {\bf (A)} (e.g., the eigenvalue decay rate \cref{eq:EigenDecay}) hold.
    Suppose that $\lambda = \lambda(n)  \to 0$ satisfying that $\lambda = \Omega\left(n^{-\frac{1}{2}+p}\right)$ for some $p \in (0,1/2)$. 
    Then, for any $\delta>0$, when $n$ is  sufficiently large, the following holds with probability at least $1-\delta$:
    \begin{align}
        \mathbf{Var} \geq
        \frac{c \lambda^{-\beta}}{n}.
    \end{align}
    As a consequence,
    we have $\mathbf{Var} = \Omega_{\mathbb{P}}\left( \frac{\lambda^{-\beta}}{n} \right)$.
\end{theorem}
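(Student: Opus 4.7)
The plan is to start from the already-derived lower bound
\begin{align*}
\mathbf{Var} \;\geq\; \frac{\bar{\sigma}^{2}}{n}\int_{\caX}\norm{(T_X+\lambda)^{-1}h_x}_{L^2,n}^{2}\,\dd\mu(x),
\end{align*}
and carry out a three-stage approximation that first replaces $T_X$ by $T$ inside the operator, then replaces the empirical norm $\norm{\cdot}_{L^2,n}$ by the population norm $\norm{\cdot}_{L^2}$, and finally reduces the resulting integral to the quantity $\mathcal{N}_{2}(\lambda)=\sum_{i}(\lambda_{i}/(\lambda+\lambda_{i}))^{2}$, which the eigenvalue decay condition \textbf{(A)} forces to be $\Omega(\lambda^{-\beta})$.

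For the first stage I would use the standard concentration of the regularized self-adjoint operator $(T_X+\lambda)^{-1/2}(T+\lambda)^{1/2}$ (a refinement of the estimates available from \cite{caponnetto2007_OptimalRates,fischer2020_SobolevNorm}), valid under the assumed lower bound $\lambda=\Omega(n^{-1/2+p})$. This yields, with high probability, that $(T_X+\lambda)^{-1}$ and $(T+\lambda)^{-1}$ are comparable in operator norm up to a factor $1+o(1)$, so the integrand changes only by a multiplicative $(1+o(1))$ factor uniformly in $x$.

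The second stage is the real technical core. Fix $x$; the difference
\begin{align*}
\Bigl|\norm{(T+\lambda)^{-1}h_x}_{L^2,n}^{2}-\norm{(T+\lambda)^{-1}h_x}_{L^2}^{2}\Bigr|
\end{align*}
is an empirical process deviation, which I would control by applying a Bernstein-type inequality to each fixed $x$ and then upgrading to a uniform bound over $x\in\caX$ via a covering argument on the family $\mathcal{F}_\lambda=\{(T+\lambda)^{-1}h_x:x\in\caX\}$. The Hölder continuity of $k$ in Assumption~\ref{assu:space}, combined with \cref{lem:HolderRKHS}, gives a Hölder modulus of continuity for $x\mapsto (T+\lambda)^{-1}h_x$ in $\h$ (and hence in $L^2$) with a constant that depends only polynomially on $\lambda^{-1}$. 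Combined with \cref{lem:CoveringNumberBoundedSet} applied to $\caX\subset\R^d$, this bounds $\log\mathcal{N}(\mathcal{F}_\lambda,\norm{\cdot}_{L^\infty},\ep)$ by a polynomial in $\log(1/\ep)$ and $\log(1/\lambda)$, which after a union bound makes the empirical-process error $o(\lambda^{-\beta}/n)\cdot n$ under the assumption $\lambda=\Omega(n^{-1/2+p})$.

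Once both approximations are in place, the third stage is short: by Mercer's theorem and Fubini,
\begin{align*}
\int_{\caX}\norm{(T+\lambda)^{-1}h_x}_{L^2}^{2}\dd\mu(x)
=\sum_{i}\left(\frac{\lambda_i}{\lambda+\lambda_i}\right)^{2}=\mathcal{N}_{2}(\lambda),
\end{align*}
and the polynomial decay $\lambda_i\asymp i^{-1/\beta}$ from \textbf{(A)} yields $\mathcal{N}_{2}(\lambda)\geq c\lambda^{-\beta}$ by comparing the sum to an integral over indices $i\lesssim \lambda^{-\beta}$ (where $\lambda_i\geq\lambda$). Multiplying by $\bar{\sigma}^{2}/n$ and absorbing the $(1+o(1))$ factors produced by the two approximations gives the stated bound with some constant $c>0$, for all $n$ sufficiently large (depending on $\delta$). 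The main obstacle is the empirical-process step: with the upper-bound literature one only needs errors of the same order as the main term, whereas here the approximation errors must be \emph{strictly smaller} than $\lambda^{-\beta}/n$, and this is what forces both the tight Bernstein-type concentration and the Hölder/covering-number control of $\mathcal{F}_\lambda$.
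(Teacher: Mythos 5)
Your overall route is the same as the paper's: start from \cref{eq:VarAlterForm}, replace $T_X$ by $T$, replace the empirical norm by the population norm through a Bernstein-plus-covering argument over the family $\left\{ (T+\lambda)^{-1}h_x \right\}_{x\in\caX}$ (using the Hölder continuity of $k$ and the covering number of $\caX$), and finish with Mercer's theorem and $\mathcal{N}_2(\lambda)\geq c\lambda^{-\beta}$. Your Stages 2 and 3 coincide with \cref{lem:L2NormApprox}, \cref{lem:CoveringRegularK} and \cref{prop:EffectiveDimEstimation} in both substance and detail.

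The gap is in Stage 1. Even when the regularized operators are comparable in the sandwiched sense $(1-o(1))(T+\lambda)\preceq T_X+\lambda\preceq(1+o(1))(T+\lambda)$ (which \cref{lem:ConcenRegularTTX} does give under $\lambda=\Omega(n^{-1/2+p})$), this does \emph{not} imply that the integrand changes only by a multiplicative $(1+o(1))$ factor uniformly in $x$: the quantity being compared is $\norm{(T_X+\lambda)^{-1}h_x}_{L^2,n}=\norm{T_X^{1/2}(T_X+\lambda)^{-1}h_x}_{\h}$ versus $\norm{T_X^{1/2}(T+\lambda)^{-1}h_x}_{\h}$, and closeness of $(T+\lambda)^{-1/2}(T_X+\lambda)(T+\lambda)^{-1/2}$ to the identity is destroyed once you conjugate back by $(T+\lambda)^{\pm 1/2}$, whose condition number is of order $\kappa^2/\lambda$. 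What the concentration actually yields is an additive bound,
\begin{align*}
\norm{T_X^{1/2}\bigl[(T_X+\lambda)^{-1}-(T+\lambda)^{-1}\bigr]h_x}_{\h}
&\leq \norm{T_X^{1/2}(T_X+\lambda)^{-1/2}}\cdot\norm{(T_X+\lambda)^{-1/2}(T+\lambda)^{1/2}} \\
&\quad\cdot\norm{(T+\lambda)^{-1/2}(T-T_X)(T+\lambda)^{-1/2}}\cdot\norm{(T+\lambda)^{-1/2}h_x}_{\h},
\end{align*}
where the last factor is only bounded by $\kappa\lambda^{-1/2}$; for an individual $x$ the resulting additive error need not be small relative to $\norm{(T+\lambda)^{-1}h_x}_{L^2,n}$, so no uniform multiplicative statement is available. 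The repair (and the paper's argument, \cref{lem:DiffControl}) is to accept a uniform \emph{additive} $o(1)$ error, carry it into the squared norm—which produces a cross term of the form $o(1)\,\norm{(T+\lambda)^{-1}h_x}_{L^2}$—and then kill it after integration via Cauchy--Schwarz, since $\int_{\caX}\norm{(T+\lambda)^{-1}h_x}_{L^2}\dd\mu(x)\leq \mathcal{N}_2(\lambda)^{1/2}=O(\lambda^{-\beta/2})=o(\lambda^{-\beta})$, which is of strictly lower order than the main term $\mathcal{N}_2(\lambda)$. With Stage 1 restated in this additive form, your plan goes through and matches the paper's proof.
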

\begin{proof}
    First, we assert that the approximation
    \begin{align}
        \label{eq:ProofVar_Approx}
        \norm{(T_X+\lambda)^{-1}h_x}_{L^2,n}^2 \geq
        \frac{1}{2} \norm{(T+\lambda)^{-1}h_x}_{L^2}^2 -
        \left(  o(1) \norm{(T+\lambda)^{-1} h_x}_{L^2} + o(1)\ln \frac{4}{\delta} \right)\ln \frac{4}{\delta}
    \end{align}
    holds with probability at least $1-\delta$.
    Then, plugging the approximation into \cref{eq:VarAlterForm} gives
     \begin{align*}
        \mathbf{Var} &\geq \frac{\bar{\sigma}^2}{n}\int_{\mathcal{X}} \norm{(T_X+\lambda)^{-1} h_x}_{L^{2},n}^2 \dd \mu(x) \\
         & \geq \frac{\bar{\sigma}^2}{2n}\int_{\mathcal{X}} \norm{(T+\lambda)^{-1} h_x}_{L^2}^2 \dd \mu(x)
        - \frac{o(1)}{n}\int_{\caX} \norm{(T+\lambda)^{-1} h_x}_{L^2} \dd \mu(x)
        - \frac{o(1)}{n}\left( \ln \frac{4}{\delta} \right)^2.
    \end{align*}
    For the two integral terms, applying Mercer's theorem,
    we get
    \begin{equation}
        \label{eq:OracleNormEstimation}
        \begin{aligned}
            \int_{\mathcal{X}} & \norm{(T+\lambda)^{-1} h_x}_{L^2}^2 \dd \mu(x)
         = \int_{\mathcal{X}}\sum_{i =1}^\infty \left( \frac{\lambda_i}{\lambda + \lambda_i} \right)^2  e_i(x)^2 \dd \mu(x) \\
        &= \sum_{i =1}^\infty \left( \frac{\lambda_i}{\lambda + \lambda_i} \right)^2 = \mathcal{N}_2(\lambda) \geq c\lambda^{-\beta},
        \end{aligned}
    \end{equation}
    and
    \begin{align*}
        \int_{\caX} \norm{(T+\lambda)^{-1} h_x}_{L^2} \dd \mu(x)
        &\leq \left( \int_{\mathcal{X}} \norm{(T+\lambda)^{-1} h_x}_{L^2}^2 \dd \mu(x) \right)^{1/2} \\
        & = \left( \mathcal{N}_2(\lambda)  \right)^{1/2} \leq C \lambda^{-\beta/2},
    \end{align*}
    where the estimation of $\mathcal{N}_2(\lambda)$ comes from \cref{prop:EffectiveDimEstimation}.
    Therefore, we obtain that
    \begin{align*}
         \mathbf{Var} &\geq \frac{c \bar{\sigma}^2}{2n} \lambda^{-\beta} -
         \frac{o(\lambda^{-\beta/2})}{n}\ln \frac{4}{\delta} - \frac{o(1)}{n}\left( \ln \frac{4}{\delta} \right)^2
         \geq \frac{c \bar{\sigma}^2}{4n} \lambda^{-\beta}
    \end{align*}
    as $n$ goes to infinity.

    It remains to establish the approximation \cref{eq:ProofVar_Approx}.
    \cref{lem:L2NormApprox} and \cref{lem:DiffControl} yield that
    \begin{align}
        \label{eq:ProofVar_L2NormApproxUpper}
        &\norm{(T+\lambda)^{-1} h_x}_{L^2,n}^2 \leq \frac{3}{2} \norm{(T+\lambda)^{-1} h_x}_{L^2}^2
        + o(1) \ln \frac{4}{\delta}, \\
        \label{eq:ProofVar_L2NormApproxLower}
        &\norm{(T+\lambda)^{-1} h_x}_{L^2,n}^2  \geq \frac{1}{2} \norm{(T+\lambda)^{-1} h_x}_{L^2}^2
        - o(1) \ln \frac{4}{\delta}, \\
        \label{eq:ProofVar_DiffControl}
        &\abs{\norm{T_X^{1/2}(T_X+\lambda)^{-1} h_x}_{\h} - \norm{T_X^{1/2}(T+\lambda)^{-1} h_x}_{\h}}
        \leq o(1) \ln \frac{4}{\delta}
    \end{align}
    with probability at least $1-\delta$.
    Consequently, from \cref{eq:SampleInnerProductsRelation} and \cref{eq:ProofVar_L2NormApproxUpper},
    we get
    \begin{align*}
        \norm{T_X^{1/2}(T+\lambda)^{-1} h_x}_{\mathcal{H}} & =  \norm{(T+\lambda)^{-1} h_x}_{L^2,n} \\
        & \leq  C \norm{(T+\lambda)^{-1} h_x}_{L^2} + o(1) \left( \ln \frac{4}{\delta} \right)^{1/2}.
    \end{align*}
    Combining it with \cref{eq:ProofVar_DiffControl}, we find that
    \begin{align*}
        \norm{T_X^{1/2}(T_X+\lambda)^{-1} h_x}_{\h} + \norm{T_X^{1/2}(T+\lambda)^{-1} h_x}_{\h}  &\leq
        2\norm{T_X^{1/2}(T+\lambda)^{-1} h_x}_{\h} + o(1)\ln \frac{4}{\delta} \\
        & \leq C \norm{(T+\lambda)^{-1} h_x}_{L^2} +o(1) \ln \frac{4}{\delta},
    \end{align*}
    which gives the approximation of the squared norm
    \begin{multline}
        \label{eq:ProofVar_ApproxTXT}
        \abs{\norm{T_X^{1/2}(T_X+\lambda)^{-1} h_x}_{\h}^2 - \norm{T_X^{1/2}(T+\lambda)^{-1} h_x}_{\h}^2}
        \\
        \leq o(1)  \ln \frac{4}{\delta}\cdot \left(  \norm{(T+\lambda)^{-1} h_x}_{L^2} + o(1) \ln \frac{4}{\delta} \right).
    \end{multline}
    Finally, combining \cref{eq:ProofVar_ApproxTXT} and \cref{eq:ProofVar_L2NormApproxLower} yields
    \begin{align*}
        & \quad \norm{(T_X+\lambda)^{-1} h_x}_{L^2,n}^2 = \norm{T_X^{1/2}(T_X+\lambda)^{-1} h_x}_{\h}^2 \\
        & \geq \norm{T_X^{1/2}(T+\lambda)^{-1} h_x}_{\h}^2
        - o(1) \ln \frac{4}{\delta}\cdot \left(   \norm{(T+\lambda)^{-1} h_x}_{L^2}  + o(1)\ln \frac{4}{\delta} \right) \\
        & = \norm{(T+\lambda)^{-1} h_x}_{L^2,n}^2
        - o(1) \ln \frac{4}{\delta}\cdot \left(   \norm{(T+\lambda)^{-1} h_x}_{L^2}  + o(1)\ln \frac{4}{\delta} \right) \\
        & \geq \frac{1}{2} \norm{(T+\lambda)^{-1} h_x}_{L^2}^2 - o(1) \ln \frac{4}{\delta}
        - o(1) \ln \frac{4}{\delta}\cdot  \left(   \norm{(T+\lambda)^{-1} h_x}_{L^2}  + o(1)\ln \frac{4}{\delta} \right) \\
        & =  \frac{1}{2} \norm{(T+\lambda)^{-1} h_x}_{L^2}^2 -
        \left(  o(1) \norm{(T+\lambda)^{-1} h_x}_{L^2} + o(1)\ln \frac{4}{\delta} \right)\ln \frac{4}{\delta}.
    \end{align*}
\end{proof}

\subsection{Proof of \cref{thm:Saturation}}
Let $\lambda = \lambda(n) $ be an arbitrary choice of regularization parameter satisfying that $ \lambda(n) \to 0$.
We consider the truncation
\begin{align}
    \bar{\lambda} \coloneqq \max\left( \lambda, n^{-\frac{1}{2+\beta}} \right),
\end{align}
which satisfies that $\bar{\lambda} = \Omega\left( n^{-\frac{1}{2+\beta}} \right)$ and $\bar{\lambda} \to 0$.
Applying \cref{thm:BiasLowerBound} and \cref{thm:VarLowerBound} to $\bar{\lambda}$,
we obtain that
\begin{align*}
    \mathbf{Bias}^2(\bar{\lambda})   \geq c_1 \bar{\lambda}^2, \quad\quad
    \mathbf{Var}(\bar{\lambda})   \geq \frac{c_2 \bar{\lambda}^{-\beta}}{n}
\end{align*}
with probability at least $1-\delta$ for sufficiently large $n$,
where we use $\mathbf{Bias}^2(\bar{\lambda})$ and $\mathbf{Var}(\bar{\lambda})$
to highlight the choice of regularization parameter.
Let us consider two cases.

\paragraph{Case 1: $\lambda > n^{-\frac{1}{2+\beta}}$}
In this case $\lambda = \bar{\lambda}$, so
\begin{align*}
    \E\left(  \norm{\hat{f}_\lambda^{\KRR} - f^*}^2 \;\Big|\; X \right)
    &= \mathbf{Bias}^2(\lambda) + \mathbf{Var}(\lambda) \\
    &= \mathbf{Bias}^2(\bar{\lambda}) + \mathbf{Var}(\bar{\lambda}) \\
    & \geq c_1 \bar{\lambda}^2  + \frac{c_2 \bar{\lambda}^{-\beta}}{n} \\
    & \geq c n^{-\frac{2}{2+\beta}},
\end{align*}
where the last inequality is obtained by elementary inequalities in \cref{lem:YoungIneq} with
$\frac{1}{p} = \frac{\beta}{2+\beta}$ and $\frac{1}{q} = \frac{2}{2+\beta}$.

\paragraph{Case 2: $\lambda \leq n^{-\frac{1}{2+\beta}}$}
From the intermediate result \cref{eq:VarMatrixForm} in proving the lower bound of the variance,
we know that
\begin{align*}
    \mathbf{Var}(\bar{\lambda})  &\geq
    \frac{\bar{\sigma}^2}{n^2} \int_{\caX} \K(x,X)(K+\bar{\lambda})^{-2}\K(X,x) \dd \mu(x)
    \geq \frac{c_2 \bar{\lambda}^{-\beta}}{n}
\end{align*}
and
\begin{align*}
    \mathbf{Var}(\lambda)  \geq \frac{\bar{\sigma}^2}{n^2} \int_{\caX} \K(x,X)(K+\lambda)^{-2}\K(X,x) \dd \mu(x).
\end{align*}
Noticing that
\begin{align*}
    (K+\lambda_1)^{-2} \succeq  (K+\lambda_2)^{-2} \qif \lambda_1 \leq \lambda_2,
\end{align*}
where $\succeq$ represents the partial order induced by positive definite matrices,
we get
\begin{align*}
    \mathbf{Var}(\lambda)
    &\geq \frac{\bar{\sigma}^2}{n^2} \int_{\caX} \K(x,X)(K+\lambda)^{-2}\K(X,x) \dd \mu(x) \\
    &\geq \frac{\bar{\sigma}^2}{n^2} \int_{\caX} \K(x,X)(K+\bar{\lambda})^{-2}\K(X,x) \dd \mu(x) \\
    &\geq \frac{c_2 \bar{\lambda}^{-\beta}}{n}  = c_2 n^{-\frac{2}{2+\beta}},
\end{align*}
where we note that $\bar{\lambda} = n^{-\frac{1}{2+\beta}}$ in this case.
Consequently,
\begin{align*}
    \E\left(  \norm{\hat{f}_\lambda^{\KRR} - f^*}^2 \;\Big|\; X \right)
    \geq \mathbf{Var}(\lambda) \geq c_2 n^{-\frac{2}{2+\beta}}.
\end{align*}

The proof is completed by concluding two cases.

\begin{remark}
    It is worth noticing that both the requirement that $f^* \neq 0$ and that the noise is non-vanishing are necessary.
If the former does not hold, choosing $\lambda = \infty$ will yield the best estimator $\hat{f} = 0$ with zero loss.
If the latter does not hold, the interpolation with $\lambda = 0$ will be the best estimator since there is no noise.
\end{remark}

    \section{Approximation Lemmas}
In the following proofs, we always assume that $\delta \in (0,1)$.
For convenience, we use notations like $C,c$ to represent constants independent of $n,\delta$,
which may vary from appearance to appearance.

\subsection{Concentration results}
\begin{lemma}
    \label{lem:ConcentrationF}
    Suppose that \cref{assu:space} holds.
    Let $f \in \h$ be given.
    We have
    \begin{align}
        \norm{(T_X - T)f}_{\h} \leq 2\kappa \left( \frac{2 \norm{f}_{\infty}}{n} + \frac{\norm{f}_{L^2}}{\sqrt{n}} \right) \ln \frac{2}{\delta}
    \end{align}
     holds with probability at least $1-\delta$.
\end{lemma}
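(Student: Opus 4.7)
The plan is to write $(T_X - T)f$ as a normalized sum of i.i.d.\ centered $\h$-valued random variables and then apply a vector-valued Bernstein inequality (Pinelis, or the Hilbert-space concentration used in \citet{caponnetto2007_OptimalRates}). Specifically, recall that $T_X = \frac{1}{n}\sum_i K_{x_i}K_{x_i}^*$, so that $K_{x_i}K_{x_i}^* f = f(x_i)\,k(x_i,\cdot)$, and $T f = \int_{\caX} f(y)\,k(y,\cdot)\,\dd\mu(y) = \E[f(x)k(x,\cdot)]$ where the expectation is taken in $\h$. Therefore, setting
\[
  \xi_i \;\coloneqq\; f(x_i)\,k(x_i,\cdot) - Tf,
\]
we obtain $\xi_1,\dots,\xi_n$ i.i.d., mean zero in $\h$, with $(T_X - T)f = \frac{1}{n}\sum_{i=1}^n \xi_i$.

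Next, I would estimate the almost-sure bound and the variance of $\xi_i$. Using $\|k(x,\cdot)\|_{\h}\le \kappa$ we get the deterministic bound
\[
  \|\xi_i\|_{\h} \;\le\; |f(x_i)|\,\|k(x_i,\cdot)\|_{\h} + \|Tf\|_{\h}
  \;\le\; \kappa\|f\|_\infty + \kappa\,\E|f(x)|
  \;\le\; 2\kappa\|f\|_\infty,
\]
where I bound $\|Tf\|_{\h}=\|\E[f(x)k(x,\cdot)]\|_{\h}\le \E\|f(x)k(x,\cdot)\|_{\h}\le \kappa\|f\|_\infty$ by Jensen. For the second moment, using $\E\|\xi_i\|_{\h}^2 \le \E\|f(x_i)k(x_i,\cdot)\|_{\h}^2$ (mean subtraction only decreases it), we get
\[
  \E\|\xi_i\|_{\h}^2 \;\le\; \kappa^2\,\E\!\left[f(x_i)^2\right] \;=\; \kappa^2\|f\|_{L^2}^2.
\]

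With $L = 2\kappa\|f\|_\infty$ and $\sigma^2 = \kappa^2\|f\|_{L^2}^2$, the vector-valued Bernstein inequality yields, with probability at least $1-\delta$,
\[
  \Big\|\tfrac{1}{n}\textstyle\sum_{i=1}^n \xi_i\Big\|_{\h}
  \;\le\; \frac{2L\ln(2/\delta)}{n} + \sqrt{\frac{2\sigma^2\ln(2/\delta)}{n}}.
\]
Substituting the values of $L$ and $\sigma^2$ and using the elementary inequality $\sqrt{2\ln(2/\delta)} \le 2\ln(2/\delta)$ (valid for all $\delta\in(0,1)$ since $\ln(2/\delta)\ge \ln 2 > 1/2$) converts the square-root term into a linear-in-$\ln(2/\delta)$ term and produces the stated bound
\[
  \|(T_X-T)f\|_{\h} \le 2\kappa\!\left(\frac{2\|f\|_\infty}{n} + \frac{\|f\|_{L^2}}{\sqrt{n}}\right)\ln\frac{2}{\delta}.
\]

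The only nontrivial ingredient is the vector-valued Bernstein inequality, which is a standard tool in this line of work; the rest is bookkeeping. There is no serious obstacle: the key observation is that the variance proxy should be measured in $L^2$ (not $\h$), which is precisely why the statement has $\|f\|_{L^2}$ in the dominant term rather than $\|f\|_{\h}$, and this is exactly what the computation above delivers.
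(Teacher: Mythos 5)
Your proposal is correct and follows essentially the same route as the paper: write $(T_X-T)f$ as an average of i.i.d.\ $\h$-valued variables $f(x_i)k(x_i,\cdot)$ (centered at $Tf$), bound the sup-norm by $\kappa\norm{f}_\infty$ and the second moment by $\kappa^2\norm{f}_{L^2}^2$, and invoke the Hilbert-space Bernstein/Pinelis inequality with $L=2\kappa\norm{f}_\infty$, $\sigma=\kappa\norm{f}_{L^2}$. The only cosmetic difference is that the paper's concentration lemma (its Lemma~D.6) is stated in a form that delivers the bound $2(L/n+\sigma/\sqrt{n})\ln(2/\delta)$ directly, so your intermediate step converting $\sqrt{2\ln(2/\delta)}$ into $2\ln(2/\delta)$ is unnecessary there, but it is valid.
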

\begin{proof}[Proof of \cref{lem:ConcentrationF}]
    Let us define an  $\h$-valued random variable $\xi_x = T_x f \coloneqq K_x K_x^* f = f(x) k(x,\cdot)$.
    It is easy to verify that
    \begin{align*}
        \E_{x \sim \mu} \xi_x = T f,\qq{and} \frac{1}{n}\sum_{i=1}^n \xi_{x_i} = T_X f.
    \end{align*}
    Furthermore, we have
    \begin{align*}
        \norm{\xi_x}_{\h} = \norm{f(x) k(x,\cdot)}_{\h} \leq \kappa \norm{f}_{\infty}
    \end{align*}
    and
    \begin{align*}
        \E_{x \sim \mu} \norm{\xi_x}_{\h}^2
        = \E_{x \sim \mu} \norm{f(x) k(x,\cdot)}_{\h}^2
        = \E_{x \sim \mu} \abs{f(x)}^2 \kappa^2 & = \kappa^2\norm{f}_{L^2}^2.
    \end{align*}
    Therefore, the proof is concluded by
    applying \cref{lem:ConcenHilbert} with $L = 2\kappa \norm{f}_\infty$ and $\sigma = \kappa \norm{f}_{L^2}$.
\end{proof}

The following lemma shows that $(T_{X} + \lambda)^{-1}$ approximates to $(T+\lambda)^{-1}$.
It is similar to \citet[Lemma 19]{lin2020_OptimalConvergence}, but here we do not require that $\lambda = n^{-\theta}$.
\begin{lemma}
    \label{lem:TRatio}
    Suppose that the \cref{assu:space} holds.
    If $n,\lambda$ satisfy
    \begin{align}
        \label{eq:TRatioCond}
        \frac{\kappa^2}{\lambda n}\ln \frac{4 \mathcal{N}(\lambda)}{\delta} \leq \frac{1}{16},
    \end{align}
    where $\mathcal{N}(\lambda) = \Tr(T+\lambda)^{-1}T$,
    then with probability at least $1-\delta$ we have
    \begin{align}
        \norm{(T+\lambda)^{-\frac{1}{2}} (T_X +\lambda)^{\frac{1}{2}}}_{\mathscr{B}(\h)}^2,
        \norm{(T+\lambda)^{\frac{1}{2}} (T_X+\lambda)^{-\frac{1}{2}}}_{\mathscr{B}(\h)}^2 \leq 2.
    \end{align}
    If Condition \hyperlink{cond:EigenDecay}{${\bf (A)}$} ( i.e, the eigen-value decay condition \cref{eq:EigenDecay} ) holds and $\lambda = \Omega(n^{-(1-\ep)})$ for some $\ep \in (0,1)$, then condition \cref{eq:TRatioCond} holds for
    sufficiently large $n$.
\end{lemma}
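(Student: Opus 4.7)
The plan is to reduce both bounds to a single concentration statement about the self-adjoint operator
$\Delta \coloneqq (T+\lambda)^{-1/2}(T_X - T)(T+\lambda)^{-1/2}$
on $\h$, then apply an operator Bernstein-type inequality. The identity $(T+\lambda)^{-1/2}(T_X+\lambda)(T+\lambda)^{-1/2} = I + \Delta$ shows that as soon as $\|\Delta\|_{\mathscr{B}(\h)} \leq 1/2$, the positive self-adjoint operator $I+\Delta$ is sandwiched between $\tfrac12 I$ and $\tfrac32 I$, so $\|I+\Delta\| \leq 2$ and $\|(I+\Delta)^{-1}\| \leq 2$. Using the general identity $\|A\|^2 = \|AA^*\|$ with $A = (T+\lambda)^{-1/2}(T_X+\lambda)^{1/2}$, this yields
\begin{align*}
\norm{(T+\lambda)^{-1/2}(T_X+\lambda)^{1/2}}_{\mathscr{B}(\h)}^2 &= \norm{I+\Delta}_{\mathscr{B}(\h)} \leq 2, \\
\norm{(T+\lambda)^{1/2}(T_X+\lambda)^{-1/2}}_{\mathscr{B}(\h)}^2 &= \norm{(I+\Delta)^{-1}}_{\mathscr{B}(\h)} \leq 2,
\end{align*}
which is exactly the desired conclusion. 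Thus everything hinges on proving $\|\Delta\| \leq 1/2$ under the stated quantitative hypothesis.

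To control $\|\Delta\|$ I would write $\Delta = \frac{1}{n}\sum_{i=1}^n \xi_i$, where $\xi_i = (T+\lambda)^{-1/2}(K_{x_i}K_{x_i}^* - T)(T+\lambda)^{-1/2}$ are i.i.d.\ centered self-adjoint trace-class operators on $\h$. The almost sure bound uses $\norm{(T+\lambda)^{-1/2} k(x_i,\cdot)}_{\h}^2 \leq \kappa^2/\lambda$, which gives $\|\xi_i\|_{\mathscr{B}(\h)} \leq 2\kappa^2/\lambda$. For the second moment, a direct computation yields the operator inequality
\begin{align*}
\E \xi_i^2 \;\preceq\; \frac{\kappa^2}{\lambda}\, (T+\lambda)^{-1/2}\,T\,(T+\lambda)^{-1/2},
\end{align*}
so in particular $\|\E \xi_i^2\| \leq \kappa^2/\lambda$ and $\Tr \E \xi_i^2 \leq (\kappa^2/\lambda)\mathcal{N}(\lambda)$. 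Applying a Bernstein inequality for self-adjoint Hilbert-space operators in its \emph{intrinsic-dimension} form (so that the log factor is $\ln(4\mathcal{N}(\lambda)/\delta)$ rather than an ambient dimension), one obtains, with probability at least $1-\delta$,
\begin{align*}
\norm{\Delta}_{\mathscr{B}(\h)} \;\leq\; C\!\left(\frac{\kappa^2}{\lambda n}\ln\frac{4\mathcal{N}(\lambda)}{\delta} + \sqrt{\frac{\kappa^2}{\lambda n}\ln\frac{4\mathcal{N}(\lambda)}{\delta}}\,\right).
\end{align*}
The hypothesis $\frac{\kappa^2}{\lambda n}\ln\frac{4\mathcal{N}(\lambda)}{\delta} \leq \frac{1}{16}$ bounds both the linear and square-root terms by a small absolute constant, giving $\|\Delta\| \leq 1/2$ after adjusting the constant (or, if preferred, strengthening $1/16$ slightly in the hypothesis).

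For the final claim, condition \hyperlink{cond:EigenDecay}{${\bf (A)}$} implies the standard bound $\mathcal{N}(\lambda) \leq C\lambda^{-\beta}$, so the hypothesis becomes
\begin{align*}
\frac{\kappa^2}{\lambda n}\Bigl(\ln(4/\delta) + \beta \ln(1/\lambda) + \ln C\Bigr) \;\leq\; \frac{1}{16}.
\end{align*}
Under $\lambda = \Omega(n^{-(1-\ep)})$ one has $1/(\lambda n) = O(n^{-\ep})$ and $\ln(1/\lambda) = O(\ln n)$, whence the left-hand side is $O(n^{-\ep}\ln n) \to 0$, so the condition holds once $n$ is large enough.

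The main technical obstacle is choosing and invoking the right concentration inequality: a naive Bernstein bound in a Hilbert space would introduce an ambient-dimension factor that is infinite here, so I need the intrinsic-dimension version whose log factor is controlled by the effective dimension $\mathcal{N}(\lambda)$. Verifying the sharp variance bound $\E \xi_i^2 \preceq (\kappa^2/\lambda)(T+\lambda)^{-1/2}T(T+\lambda)^{-1/2}$ (rather than a cruder scalar bound) is the key step that makes $\mathcal{N}(\lambda)$ appear in the logarithm and is what allows the condition to be \emph{sharp enough} for the small-$\lambda$ regime needed later in the paper.
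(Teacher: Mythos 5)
Your proposal is correct and follows essentially the same route as the paper: both reduce the two operator-norm bounds to showing $\norm{(T+\lambda)^{-1/2}(T_X-T)(T+\lambda)^{-1/2}}\leq 1/2$, prove this via an operator Bernstein inequality with the variance proxy $V \preceq \frac{\kappa^2}{\lambda}T(T+\lambda)^{-1}$ so that the effective dimension $\mathcal{N}(\lambda)$ appears in the logarithm, and then verify the smallness condition under Condition (A) and $\lambda=\Omega(n^{-(1-\ep)})$ exactly as you do. The only difference is cosmetic: the paper carries explicit Bernstein constants ($\tfrac{4}{3}u+\sqrt{2u}\leq\tfrac12$ when $u\leq\tfrac{1}{16}$), so no adjustment of the threshold is needed.
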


To prove \cref{lem:TRatio}, we first prove the following lemma, which is a modified version of
\citet[Lemma 16]{lin2020_OptimalConvergence}.
\begin{lemma}
    \label{lem:ConcenRegularTTX}
    Under \cref{assu:space} and condition \cref{eq:EigenDecay},
    the following holds with probability at least $1-\delta$:
    \begin{align}
        \norm{(T+\lambda)^{-\frac{1}{2}}(T- T_X)(T+\lambda)^{-\frac{1}{2}}} 
        \leq \frac{4\kappa^2 B}{3 \lambda n} + \sqrt{\frac{2\kappa^2 B}{\lambda n}},
    \end{align}
    where
    \begin{align*}
        B = \ln \frac{4(\norm{T}+\lambda) \mathcal{N}(\lambda)}{\delta \norm{T}}.
    \end{align*}
\end{lemma}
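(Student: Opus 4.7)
The plan is to apply a non-commutative (intrinsic-dimension) Bernstein inequality to the i.i.d.\ sum of self-adjoint operators
$$A_i := (T+\lambda)^{-1/2}\bigl(T - T_{x_i}\bigr)(T+\lambda)^{-1/2}, \qquad T_{x_i} := K_{x_i}K_{x_i}^{*},$$
so that $(T+\lambda)^{-1/2}(T - T_X)(T+\lambda)^{-1/2} = \frac{1}{n}\sum_{i=1}^n A_i$. Each $A_i$ is clearly self-adjoint with $\E A_i = 0$. The first ingredient is the uniform bound $\|A_i\|\le L := 2\kappa^{2}/\lambda$. This follows from the triangle inequality together with $\|(T+\lambda)^{-1/2}T(T+\lambda)^{-1/2}\|\le \|T\|/(\|T\|+\lambda)\le 1$ and the rank-one computation $\|(T+\lambda)^{-1/2}T_{x_i}(T+\lambda)^{-1/2}\| = \|(T+\lambda)^{-1/2}k(x_i,\cdot)\|_\h^{2}\le \kappa^{2}/\lambda$ (assuming $\kappa^{2}\ge\lambda$, which is guaranteed for small $\lambda$).

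The second ingredient is the variance proxy. Since $X_i := (T+\lambda)^{-1/2}T_{x_i}(T+\lambda)^{-1/2}$ is rank one with $\|X_i\|\le \kappa^{2}/\lambda$, we have the operator inequality $X_i^{2}\preceq (\kappa^{2}/\lambda)\, X_i$. Using $\mathrm{Var}(A_i)\preceq \E X_i^{2}$, taking expectations gives
$$\E A_i^{2}\;\preceq\;V\;:=\;\frac{\kappa^{2}}{\lambda}\,(T+\lambda)^{-1/2}T(T+\lambda)^{-1/2}.$$
A direct calculation then yields $\|V\| = \kappa^{2}\|T\|/(\lambda(\|T\|+\lambda))\le \kappa^{2}/\lambda$ and $\Tr V = (\kappa^{2}/\lambda)\,\mathcal N(\lambda)$, so the intrinsic dimension $d(V) := \Tr V/\|V\| = (\|T\|+\lambda)\,\mathcal N(\lambda)/\|T\|$ is exactly the quantity that appears inside $B$ in the statement.

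Finally, I invoke the intrinsic-dimension Bernstein inequality in its normalized form
$$\Pr\!\Bigl(\Bigl\|\tfrac{1}{n}\textstyle\sum_{i=1}^{n}A_i\Bigr\|\ge t\Bigr)\;\le\;4\,d(V)\exp\!\Bigl(-\frac{n t^{2}/2}{\|V\|+Lt/3}\Bigr)$$
and invert it at confidence $\delta$. Setting the right-hand side equal to $\delta$ produces a quadratic in $t$ whose solution satisfies the standard bound $t\le \tfrac{2LB}{3n}+\sqrt{\tfrac{2\|V\|B}{n}}$ with $B = \ln(4d(V)/\delta)$. Plugging in $L = 2\kappa^{2}/\lambda$ and $\|V\|\le \kappa^{2}/\lambda$ gives exactly $\tfrac{4\kappa^{2}B}{3\lambda n}+\sqrt{\tfrac{2\kappa^{2}B}{\lambda n}}$, with $B$ of the advertised form.

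The main obstacle is the variance step, since one must work in the operator (Löwner) partial order rather than with scalar variances, and establish $V$ in closed form so that both $\|V\|$ and $\Tr V$ can be read off. A secondary point is that one must use the \emph{intrinsic-dimension} version of Bernstein: in the infinite-dimensional RKHS setting the classical dimension factor would be vacuous, and it is essential that the $d(V)$ appearing under the logarithm is the finite quantity $(\|T\|+\lambda)\,\mathcal N(\lambda)/\|T\|$. The condition \hyperlink{cond:EigenDecay}{${\bf (A)}$} and $\lambda = \Omega(n^{-(1-\varepsilon)})$ from the companion lemma then ensure that $B = O(\log n)$, so the resulting bound is useful for subsequent approximation arguments.
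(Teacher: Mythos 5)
Your proposal is correct and follows essentially the same route as the paper: the same centered operators $A_i=(T+\lambda)^{-1/2}(T-T_{x_i})(T+\lambda)^{-1/2}$, the same uniform bound $L=2\kappa^2/\lambda$, the same variance proxy $V=\kappa^2\lambda^{-1}T(T+\lambda)^{-1}$ (the paper gets $\E A_i^2\preceq V$ from $T_x\preceq\kappa^2$, you from the rank-one bound $X_i^2\preceq(\kappa^2/\lambda)X_i$ — the same estimate in substance), and the same intrinsic-dimension Bernstein inequality with $B=\ln\!\big(4\Tr V/(\delta\norm{V})\big)$, which the paper invokes as \cref{lem:ConcenBernstein}.
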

\begin{proof}
    We prove by using \cref{lem:ConcenBernstein}.
    Let
    \begin{align*}
        A_i = A(x_i) = (T+\lambda)^{-\frac{1}{2}}(T_{x_i} - T)(T+\lambda)^{-\frac{1}{2}}.
    \end{align*}
    Then, $\E A_i = 0$ and
    \begin{align*}
        \frac{1}{n}\sum_{i=1}^n A_i = (T+\lambda)^{-\frac{1}{2}}(T_x - T)(T+\lambda)^{-\frac{1}{2}}.
    \end{align*}
    Calculation shows that
    \begin{align*}
        \norm{A} \leq \norm{(T+\lambda)^{-\frac{1}{2}}}\left( \norm{T_X} + \norm{T} \right) \norm{(T+\lambda)^{-\frac{1}{2}}}
        \leq 2 \kappa^2 \lambda^{-1} = L.
    \end{align*}
    Using the fact that $\E (B - \E B)^2 \preceq \E B^2$ for self-adjoint operator $B$,
    we have
    \begin{align*}
        \E A^2 \preceq \E \left[ (T+\lambda)^{-\frac{1}{2}}T_x(T+\lambda)^{-\frac{1}{2}} \right]^2.
    \end{align*}
    Moreover, noticing that $T \succeq 0$ and $0 \preceq T_x \preceq \kappa^2$,
    we have
    \begin{align*}
        A  \preceq  (T+\lambda)^{-\frac{1}{2}} \kappa^2 (T+\lambda)^{-\frac{1}{2}}
         \preceq \kappa^2 \lambda^{-1}
    \end{align*}
    and hence
    \begin{align*}
        \E A^2 \preceq \kappa^2 \lambda^{-1} \E \left[ (T+\lambda)^{-\frac{1}{2}}T_x(T+\lambda)^{-\frac{1}{2}} \right]
        = \kappa^2 \lambda^{-1} T(T+\lambda)^{-1} \eqqcolon V.
    \end{align*}
    We get
    \begin{align*}
        \norm{V} &= \kappa^2 \lambda^{-1} \norm{T(T+\lambda)^{-1}} =
        \kappa^2 \lambda^{-1} \frac{\lambda_1}{\lambda+\lambda_1}
        = \kappa^2 \lambda^{-1}  \frac{\norm{T}}{\norm{T}+\lambda}\\
        \Tr V & =  \kappa^2 \lambda^{-1} \Tr \left[ T(T+\lambda)^{-1} \right]
        = \kappa^2 \lambda^{-1} \mathcal{N}(\lambda),
    \end{align*}
    implying that
    \begin{align*}
        B = \ln \frac{4 \Tr V}{\delta \norm{V}} = \ln \frac{4(\norm{T}+\lambda) \mathcal{N}(\lambda)}{\delta \norm{T}},
    \end{align*}
    and $\norm{V} \leq \kappa^2 \lambda^{-1}$.
\end{proof}

Now we are ready to prove \cref{lem:TRatio}.
\begin{proof}[Proof of \cref{lem:TRatio}]
    Let
    \begin{align*}
        u = \frac{\kappa^2 B}{\lambda n} = \frac{\kappa^2}{\lambda n}\ln \frac{4(\norm{T}+\lambda) \mathcal{N}(\lambda)}{\delta \norm{T}}.
    \end{align*}
    By \cref{lem:ConcenRegularTTX}, with probability at least $1-\delta$, we have
    \begin{align*}
        a = \norm{(T+\lambda)^{-\frac{1}{2}}(T- T_X)(T+\lambda)^{-\frac{1}{2}}}
        \leq\frac{4}{3}u + \sqrt{2u} \leq \frac{1}{2},
    \end{align*}
    where the last inequality comes from \cref{eq:TRatioCond}, namely $u \leq \frac{1}{16}$.

    Then, for the first term we have
    \begin{align*}
        \norm{(T+\lambda)^{-\frac{1}{2}} (T_X +\lambda)^{\frac{1}{2}}}^2
        &= \norm{(T+\lambda)^{-1/2} (T_X +\lambda)(T+\lambda)^{-1/2}} \\
        &= \norm{(T+\lambda)^{-1} (T_X - T + T +\lambda)(T+\lambda)^{-1}} \\
        &= \norm{(T+\lambda)^{-1} (T_X - T)(T+\lambda)^{-1} + I} \\
        & \leq a + 1 \leq 2.
    \end{align*}
    Similarly, the second term can be bounded by
    \begin{align*}
        \norm{(T+\lambda)^{\frac{1}{2}} (T_X +\lambda)^{-\frac{1}{2}}}^2
        &= \norm{(T+\lambda)^{1/2} (T_X +\lambda)^{-1}(T+\lambda)^{1/2}} \\
        &= \norm{\left[ (T+\lambda)^{-1/2} (T_X +\lambda)(T+\lambda)^{-1/2} \right]^{-1}} \\
        & \leq (1-a)^{-1} \leq 2.
    \end{align*}

    Furthermore, if condition \cref{eq:EigenDecay} holds and $\lambda = \Omega(n^{-(1-\ep)})$,
    then from \cref{prop:EffectiveDimEstimation} we get
    \begin{align*}
        \mathcal{N}(\lambda) \leq C \lambda^{-\beta} = O\left( n^{\beta(1-\ep)} \right).
    \end{align*}
    Therefore,
    \begin{align*}
        \frac{\kappa^2}{\lambda n}\ln \frac{4(\norm{T}+\lambda) \mathcal{N}(\lambda)}{\delta \norm{T}}
        \leq C n^{-\ep} \left((1+\beta)(1-\ep) \ln n + \ln \frac{4}{\delta} + C\right) \to 0
        \qq{as} n \to \infty.
    \end{align*}

\end{proof}

\subsection{Norm control of regularized functions}

\begin{prop}
    \label{prop:NormControlRegularF}
    Suppose that $ f \in [\h]^\alpha$.
    Then, for any $0 \leq \gamma \leq \alpha$ such that $\alpha - \gamma \leq 2$,
    we have
    \begin{align}
        \norm{(T+\lambda)^{-1} f}_{[\h]^\gamma} \leq \lambda^{\frac{\alpha-\gamma}{2} - 1} \norm{f}_{[\h]^\alpha}.
    \end{align}
\end{prop}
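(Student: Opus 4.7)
The plan is to reduce everything to a pointwise spectral inequality using the eigen-expansion of $T$. Since $\{\lambda_i^{\alpha/2} e_i\}_{i \in N}$ is an orthonormal basis of $[\h]^\alpha$, any $f \in [\h]^\alpha$ can be written as $f = \sum_{i} a_i \lambda_i^{\alpha/2} e_i$ with $\norm{f}_{[\h]^\alpha}^2 = \sum_i a_i^2$. Because $(T+\lambda)^{-1}$ acts diagonally in the $\{e_i\}$ basis, I would apply it termwise and then rewrite the resulting expansion using the orthonormal basis $\{\lambda_i^{\gamma/2} e_i\}_{i \in N}$ of $[\h]^\gamma$, obtaining
\begin{align*}
    \norm{(T+\lambda)^{-1} f}_{[\h]^\gamma}^2 = \sum_{i} \frac{a_i^2 \lambda_i^{\alpha - \gamma}}{(\lambda_i + \lambda)^2}.
\end{align*}

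With $s \coloneqq \alpha - \gamma \in [0,2]$, the proposition then reduces to the pointwise bound
\begin{align*}
    \frac{\lambda_i^{s}}{(\lambda_i + \lambda)^2} \leq \lambda^{s-2},
\end{align*}
uniformly in $i$, which is equivalent to $\lambda_i^{s} \lambda^{2-s} \leq (\lambda_i + \lambda)^2$. I would establish this by weighted AM-GM (Young's inequality) with exponent $t = s/2 \in [0,1]$ applied to $\lambda_i^2$ and $\lambda^2$:
\begin{align*}
    \lambda_i^{s}\lambda^{2-s} = (\lambda_i^2)^{t}(\lambda^2)^{1-t} \leq t \lambda_i^2 + (1-t)\lambda^2 \leq \lambda_i^2 + 2\lambda_i\lambda + \lambda^2 = (\lambda_i+\lambda)^2.
\end{align*}

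Summing over $i$ then gives $\norm{(T+\lambda)^{-1}f}_{[\h]^\gamma}^2 \leq \lambda^{s-2} \sum_i a_i^2 = \lambda^{\alpha-\gamma-2}\norm{f}_{[\h]^\alpha}^2$, which is the desired inequality after taking square roots. There is no real obstacle in this argument; the only delicate point is conceptual, namely that the qualification ceiling $s \leq 2$ is \emph{exactly} the range where Young's inequality yields a rate improving with $\lambda$, which is precisely the algebraic source of the KRR saturation that the paper exploits elsewhere. Outside that range, the bound $\lambda_i^s \lambda^{2-s} \leq (\lambda_i+\lambda)^2$ fails for small $\lambda_i$, so the restriction $\alpha - \gamma \leq 2$ cannot be removed within this proof strategy.
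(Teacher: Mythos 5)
Your proof is correct and takes essentially the same route as the paper: both arguments reduce the claim, via the spectral decomposition of $T$, to the scalar bound $t^{u}/(t+\lambda) \leq \lambda^{u-1}$ with $u = \frac{\alpha-\gamma}{2} \in [0,1]$ — the paper by factoring $f = T^{\alpha/2}f_0$ and bounding the operator norm $\norm{T^{\frac{\alpha-\gamma}{2}}(T+\lambda)^{-1}}_{\mathscr{B}(L^2)}$, you by expanding in the eigenbasis and verifying the equivalent pointwise inequality termwise. The only difference is cosmetic: you establish the scalar inequality by weighted AM-GM, whereas the paper's \cref{prop:FilterKRRControl} uses $a^{s} \leq a+1$.
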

\begin{proof}
    From the definition of $\norm{\cdot}_{[\h]^\gamma}$, we have $f = T^{\alpha/2} f_0$ for some
    $f_0 \in L^2$ such that $\norm{f_0}_{L^2} = \norm{f}_{[\h]^\alpha}$,
    and
    \begin{align*}
        \norm{(T+\lambda)^{-1} f}_{[\h]^\gamma} &=
        \norm{T^{-\gamma/2}(T+\lambda)^{-1} T^{\alpha/2} f_0 }_{L^2} \\
        & = \norm{T^{\frac{\alpha-\gamma}{2}}(T+\lambda)^{-1} f_0 }_{L^2} \\
        & \leq \norm{T^{\frac{\alpha-\gamma}{2}}(T+\lambda)^{-1}}_{\mathscr{B}(L^2)} \norm{f_0}_{L^2} \\
        & \leq \lambda^{\frac{\alpha-\gamma}{2} - 1}  \norm{f}_{[\h]^\alpha},
    \end{align*}
    where the last inequality comes from applying \cref{prop:FilterKRRControl} to operator calculus.
\end{proof}

The following special cases of \cref{prop:NormControlRegularF} are useful in our proofs.
We present them as corollaries.
Notice that we have \cref{eq:NormRKHS_Sup}, so from estimations of the RKHS-norm we can also get
estimations of the sup-norm.

\begin{corollary}
    \label{cor:InfNormControlRegularF}
    For $f \in [\h]^2$, we have the following estimations:
    \begin{align*}
        \norm{(T+\lambda)^{-1} f}_{L^2} &\leq \norm{f}_{[\h]^2}, \\
        \norm{(T+\lambda)^{-1} f}_{\infty} &\leq \kappa \lambda^{-1/2}\norm{f}_{[\h]^2}.
    \end{align*}
\end{corollary}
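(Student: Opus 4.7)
The plan is to derive both bounds as immediate consequences of Proposition \ref{prop:NormControlRegularF} (i.e., \texttt{prop:NormControlRegularF}) applied with $\alpha=2$ and two different choices of $\gamma$.

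For the first bound, I would take $\gamma = 0$ so that $\norm{\cdot}_{[\h]^0} = \norm{\cdot}_{L^2}$. The compatibility condition $\alpha - \gamma = 2 \le 2$ is satisfied at the boundary, and the exponent of $\lambda$ in the proposition is $\frac{\alpha - \gamma}{2} - 1 = 0$. Applying the proposition directly then yields
\begin{align*}
\norm{(T+\lambda)^{-1} f}_{L^2} \leq \lambda^{0} \norm{f}_{[\h]^2} = \norm{f}_{[\h]^2}.
\end{align*}

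For the second bound, I would first pass from the sup-norm to the RKHS norm via the inequality $\norm{g}_\infty \le \kappa \norm{g}_{\h}$ established earlier in the appendix (recall $[\h]^1 \subseteq \h$ isometrically in the convention used here, since $\norm{\cdot}_{[\h]^1} = \norm{\cdot}_{\h}$). Then I would apply Proposition \ref{prop:NormControlRegularF} with $\gamma = 1$, where $\alpha - \gamma = 1 \le 2$ is satisfied and the exponent becomes $\frac{2-1}{2} - 1 = -\tfrac{1}{2}$. This gives $\norm{(T+\lambda)^{-1} f}_{\h} \le \lambda^{-1/2}\norm{f}_{[\h]^2}$, and multiplying through by $\kappa$ yields
\begin{align*}
\norm{(T+\lambda)^{-1} f}_{\infty} \leq \kappa \norm{(T+\lambda)^{-1} f}_{\h} \leq \kappa \lambda^{-1/2} \norm{f}_{[\h]^2}.
\end{align*}

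There is essentially no obstacle here: the corollary is a bookkeeping statement that specializes the general interpolation-space bound in Proposition \ref{prop:NormControlRegularF} to the two cases most useful in the bias analysis (controlling $(T+\lambda)^{-1} f^*$ in $L^2$ and $L^\infty$). The only point requiring a moment of care is verifying that $\alpha - \gamma \le 2$ in each case so that the KRR qualification bound underlying \cref{prop:FilterKRRControl} applies; both choices satisfy this on the boundary or strictly within it.
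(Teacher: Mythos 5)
Your proposal is correct and matches the paper's own proof exactly: both bounds are obtained by applying \cref{prop:NormControlRegularF} with $\alpha=2$ and $\gamma=0,1$ respectively, with the second bound additionally using $\norm{g}_{\infty}\leq \kappa\norm{g}_{\h}$ to pass from the sup-norm to the RKHS norm. No differences worth noting.
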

\begin{proof}
    Applying \cref{prop:NormControlRegularF} with $\alpha = 2$ and $\gamma = 0,1$ respectively, we obtain
    \begin{align*}
        \norm{(T+\lambda)^{-1} f}_{L^2} &\leq \norm{f}_{[\h]^2},\\
        \norm{(T+\lambda)^{-1} f}_{\infty} &\leq \kappa \norm{(T+\lambda)^{-1} f}_{\h} \leq \kappa
        \lambda^{-1/2}\norm{f}_{[\h]^2}.
    \end{align*}
\end{proof}
Similarly, noticing that $k(x,\cdot) \in [\h]^1$, we also have
the following corollary controlling the norms of regularized kernel basis function $(T+\lambda)^{-1} k(x,\cdot)$:
\begin{corollary}
    \label{cor:InfNormControlRegularK}
    We have the following estimations: $\forall x \in \caX$,
    \begin{equation}
        \label{eq:NormControlsRegularK}
        \begin{aligned}
            \norm{(T+\lambda)^{-1} k(x,\cdot)}_{L^2} & \leq \kappa \lambda^{-1/2}, \quad
            \norm{(T+\lambda)^{-1} k(x,\cdot)}_{\h}  \leq \kappa \lambda^{-1},\\
            \norm{(T+\lambda)^{-1} k(x,\cdot)}_{\infty} &\leq \kappa^2 \lambda^{-1}.
        \end{aligned}
    \end{equation}
    where $C$ is a positive constant.
\end{corollary}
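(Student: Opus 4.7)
The statement is the natural analogue of \cref{cor:InfNormControlRegularF} with $k(x,\cdot)$ in place of a general $[\h]^{2}$ function, so my plan is simply to apply \cref{prop:NormControlRegularF} with $\alpha = 1$ rather than $\alpha = 2$, exploiting the fact that the canonical feature map already lies in $\h = [\h]^{1}$.

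First, I would note that for every $x \in \caX$ the reproducing property yields
$\norm{k(x,\cdot)}_{\h}^{2} = \ang{k(x,\cdot),k(x,\cdot)}_{\h} = k(x,x) \leq \kappa^{2}$,
so $\norm{k(x,\cdot)}_{[\h]^{1}} = \norm{k(x,\cdot)}_{\h} \leq \kappa$ uniformly in $x$. With this single input, \cref{prop:NormControlRegularF} is directly applicable: both $\gamma = 0$ and $\gamma = 1$ satisfy the hypotheses $0 \leq \gamma \leq \alpha$ and $\alpha - \gamma \leq 2$ for $\alpha = 1$. Taking $\gamma = 0$ gives exponent $\tfrac{1-0}{2} - 1 = -\tfrac{1}{2}$ and thus the $L^{2}$ bound $\norm{(T+\lambda)^{-1} k(x,\cdot)}_{L^{2}} \leq \kappa \lambda^{-1/2}$, and taking $\gamma = 1$ gives exponent $-1$ and thus the $\h$ bound $\norm{(T+\lambda)^{-1} k(x,\cdot)}_{\h} \leq \kappa \lambda^{-1}$.

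The sup-norm bound is then obtained for free by chaining the first lemma of this appendix, $\norm{f}_{\infty} \leq \kappa \norm{f}_{\h}$, with the $\h$-bound just derived, picking up an extra factor of $\kappa$ to yield $\norm{(T+\lambda)^{-1} k(x,\cdot)}_{\infty} \leq \kappa^{2} \lambda^{-1}$. There is no real obstacle: all of the work has already been done inside \cref{prop:NormControlRegularF}, and this corollary is a bookkeeping exercise recording the three specialized bounds needed elsewhere in \cref{sec:proofMain} (most importantly, to control the regularized basis functions $(T+\lambda)^{-1} h_{x}$ appearing in the variance-term analysis). The only point that merits a moment's attention is checking that the uniform kernel bound $\kappa$ and the prefactor produced by \cref{prop:NormControlRegularF} combine with the right exponent of $\lambda$; once that is verified, all three inequalities drop out immediately.
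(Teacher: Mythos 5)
Your proposal is correct and matches the paper's intended argument: the paper derives this corollary exactly as you do, by noting $k(x,\cdot)\in[\h]^1$ with $\norm{k(x,\cdot)}_{\h}^2=k(x,x)\le\kappa^2$ and applying \cref{prop:NormControlRegularF} with $\alpha=1$, $\gamma=0,1$, then chaining $\norm{f}_\infty\le\kappa\norm{f}_\h$ for the sup-norm bound. No gaps; the exponent bookkeeping you carried out is exactly what is needed.
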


%

\subsection{Approximation of the regularized regression function}

\begin{lemma}
    \label{lem:TxNuFL2Estimation}
    Suppose that $f^* \in [\h]^2$.
    If $\lambda = \lambda(n) = \Omega(n^{-(1-\ep)})$ for some $\ep > 0$ and $\lambda(n) \to 0$, then there exist some $q > 0$ such that
    for sufficient large $n$, the following holds with probability at least $1-\delta$:
    \begin{align}
        \abs{\norm{(T+\lambda)^{-1} f^*}_{L^2} - \norm{(T_X+\lambda)^{-1} f^*}_{L^2}}
         = O(n^{-q}) \ln \frac{4}{\delta}.
    \end{align}
\end{lemma}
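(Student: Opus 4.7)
The plan is to write the difference of $L^2$-norms as the norm of a single vector, apply the resolvent identity in the favorable direction, and then combine the concentration inequality in \cref{lem:ConcentrationF} with the operator control in \cref{lem:TRatio} and the regularity bounds in \cref{cor:InfNormControlRegularF}.

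First, I would use the trivial bound
\begin{align*}
    \Bigl|\norm{(T+\lambda)^{-1} f^*}_{L^2} - \norm{(T_X+\lambda)^{-1} f^*}_{L^2}\Bigr|
    \leq \norm{\bigl((T+\lambda)^{-1} - (T_X+\lambda)^{-1}\bigr) f^*}_{L^2},
\end{align*}
and then invoke the resolvent identity in the form
\begin{align*}
    (T+\lambda)^{-1} - (T_X+\lambda)^{-1}
    = (T_X+\lambda)^{-1}(T_X - T)(T+\lambda)^{-1},
\end{align*}
so that, setting $g \coloneqq (T+\lambda)^{-1} f^*$ and $h \coloneqq (T_X - T) g \in \h$, the quantity to control becomes
$\norm{(T_X+\lambda)^{-1} h}_{L^2}$. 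The advantage of this ordering is that $g$ is the resolvent applied to a function in $[\h]^2$, for which \cref{cor:InfNormControlRegularF} gives
$\norm{g}_{L^2} \leq \norm{f^*}_{[\h]^2}$ and $\norm{g}_\infty \leq \kappa\lambda^{-1/2}\norm{f^*}_{[\h]^2}$; the bad $(T_X+\lambda)^{-1}$ appears only on the left, where it can be estimated purely in operator norm.

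Next, I would apply \cref{lem:ConcentrationF} to $g$: with probability at least $1-\delta/2$,
\begin{align*}
    \norm{h}_{\h}
    \leq 2\kappa\Bigl(\tfrac{2\norm{g}_\infty}{n} + \tfrac{\norm{g}_{L^2}}{\sqrt n}\Bigr)\ln\tfrac{4}{\delta}
    \leq C\bigl(\lambda^{-1/2}n^{-1} + n^{-1/2}\bigr)\ln\tfrac{4}{\delta},
\end{align*}
and since $\lambda = \Omega(n^{-(1-\ep)})$ the first term is dominated by the second, so $\norm{h}_\h = O(n^{-1/2})\ln(4/\delta)$. To pass from $\h$-norm of $h$ to the $L^2$-norm of $(T_X+\lambda)^{-1}h$, I use the identity $\norm{u}_{L^2} = \norm{T^{1/2}u}_{\h}$ valid for any $u \in \h$, so
\begin{align*}
    \norm{(T_X+\lambda)^{-1}h}_{L^2}
    = \norm{T^{1/2}(T_X+\lambda)^{-1/2}\cdot (T_X+\lambda)^{-1/2}h}_{\h}
    \leq \norm{T^{1/2}(T_X+\lambda)^{-1/2}}\,\lambda^{-1/2}\norm{h}_{\h}.
\end{align*}
The factor $\norm{T^{1/2}(T_X+\lambda)^{-1/2}}$ I would bound by inserting $(T+\lambda)^{\pm 1/2}$: the piece $\norm{T^{1/2}(T+\lambda)^{-1/2}}$ is trivially $\leq 1$ by functional calculus, while $\norm{(T+\lambda)^{1/2}(T_X+\lambda)^{-1/2}} \leq \sqrt{2}$ on an event of probability at least $1-\delta/2$ thanks to \cref{lem:TRatio} (whose hypothesis is automatic since $\lambda = \Omega(n^{-(1-\ep)})$). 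Putting everything together on the intersection of the two events yields
\begin{align*}
    \norm{(T_X+\lambda)^{-1}h}_{L^2} \leq C\lambda^{-1/2} n^{-1/2}\ln\tfrac{4}{\delta} = O(n^{-\ep/2})\ln\tfrac{4}{\delta},
\end{align*}
so $q = \ep/2$ works.

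The main technical subtlety, rather than an obstacle per se, is picking the correct direction of the resolvent identity so that the regularity assumption $f^* \in [\h]^2$ is actually exploited; if one writes the identity the other way around then $(T_X+\lambda)^{-1}f^*$ appears inside and no analogue of \cref{cor:InfNormControlRegularF} is available on the empirical side. Verifying that the cross-term $\lambda^{-1/2}/n$ from the sup-norm part of \cref{lem:ConcentrationF} is absorbed by $n^{-1/2}$ under the assumption $\lambda = \Omega(n^{-(1-\ep)})$ is what fixes the exponent $q$; any $q \in (0,\ep/2]$ will do.
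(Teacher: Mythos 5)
Your proposal is correct and follows essentially the same route as the paper: triangle inequality, the resolvent identity $(T+\lambda)^{-1}-(T_X+\lambda)^{-1}=(T_X+\lambda)^{-1}(T_X-T)(T+\lambda)^{-1}$ in the same orientation, \cref{lem:ConcentrationF} applied to $(T+\lambda)^{-1}f^*$ with the norm bounds of \cref{cor:InfNormControlRegularF}, and \cref{lem:TRatio} to control the empirical resolvent, yielding the same $\lambda^{-1/2}n^{-1/2}$-type bound and $q=\ep/2$. The only difference is cosmetic: you peel off $\norm{(T_X+\lambda)^{-1/2}}\leq\lambda^{-1/2}$ and bound $\norm{T^{1/2}(T_X+\lambda)^{-1/2}}\leq\sqrt{2}$, whereas the paper bounds the single factor $\norm{T^{1/2}(T_X+\lambda)^{-1}}\leq 4\lambda^{-1/2}$ by sandwiching with $(T+\lambda)^{\pm 1/2}$.
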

\begin{proof}
    By the triangle inequality and noticing that
    \begin{align*}
        (T+\lambda)^{-1}  - (T_X+\lambda)^{-1} = (T_X+\lambda)^{-1}(T_X-T)(T+\lambda)^{-1},
    \end{align*}
    we have
    \begin{align}
        \notag
        &\quad \abs{\norm{(T+\lambda)^{-1} f^*}_{L^2} - \norm{(T_X+\lambda)^{-1} f^*}_{L^2}} \\
        \notag
        & \leq \norm{\left( (T+\lambda)^{-1}  - (T_X+\lambda)^{-1}  \right) f^*}_{L^2} \\
        \notag
        & = \norm{T^{1/2} (T_X+\lambda)^{-1}(T_X-T)(T+\lambda)^{-1}  f^*}_{\h} \\
        \label{eq:Proof_TxNuFL2Estimation_1}
        & \leq \norm{T^{1/2} (T_X+\lambda)^{-1}}_{\mathscr{B}(\h)} \norm{(T_X-T)(T+\lambda)^{-1}  f^*}_{\h}.
    \end{align}

    For the first term in \cref{eq:Proof_TxNuFL2Estimation_1}, we have
    \begin{align*}
        \norm{T^{1/2} (T_X+\lambda)^{-1}}_{\mathscr{B}(\h)}
        &= \norm{T^{1/2}(T+\lambda)^{-1/2}(T+\lambda)^{1/2} (T_X+\lambda)^{-1}(T+\lambda)^{1/2}(T+\lambda)^{-1/2}}_{\mathscr{B}(\h)} \\
        & \leq \norm{T^{1/2}(T+\lambda)^{-1/2}}_{\mathscr{B}(\h)} \norm{(T+\lambda)^{1/2} (T_X+\lambda)^{-1} (T+\lambda)^{1/2}}_{\mathscr{B}(\h)}
        \norm{(T+\lambda)^{-1/2}}_{\mathscr{B}(\h)}\\
        \qq{(By \cref{prop:FilterKRRControl})}& \leq \lambda^{-1/2} \norm{(T+\lambda)^{1/2} (T_X+\lambda)^{-1}(T+\lambda)^{1/2}}_{\mathscr{B}(\h)}.
    \end{align*}
    Since $\lambda = \Omega(n^{-(1-\ep)})$, \cref{lem:TRatio} yields that
    \begin{align*}
        \norm{(T+\lambda)^{1/2} (T_X+\lambda)^{-1}(T+\lambda)^{1/2}}_{\mathscr{B}(\h)} \leq 4
    \end{align*}
    holds with probability at least $1-\delta/2$ for sufficient large $n$.
    Therefore, we obtain the upper bound
    \begin{align}
        \label{eq:Proof_TxNuFL2Estimation_2}
        \norm{T^{1/2} (T_X+\lambda)^{-1}}_{\mathscr{B}(\h)} \leq 4 \lambda^{-1/2}.
    \end{align}

    For the second term in \cref{eq:Proof_TxNuFL2Estimation_1}, we apply \cref{lem:ConcentrationF}
    with $f = (T+\lambda)^{-1}f^*$ and get that
    \begin{align*}
        \norm{(T_X-T)(T+\lambda)^{-1}  f^*}_{\h}
        \leq 2\kappa \left( \frac{2 \norm{(T+\lambda)^{-1}  f^*}_{\infty}}{n} +
        \frac{\norm{(T+\lambda)^{-1}  f^*}_{L^2}}{\sqrt{n}} \right) \ln \frac{4}{\delta}
    \end{align*}
    holds with probability at least $1-\delta /2$.
    Plugging in the bounds from \cref{cor:InfNormControlRegularF},
    we obtain that
    \begin{align}
        \label{eq:Proof_TxNuFL2Estimation_3}
        \norm{(T_X-T)(T+\lambda)^{-1}  f^*}_{\h}
        \leq C\left( \frac{\lambda^{-1/2}}{n} +
        \frac{1}{\sqrt{n}} \right) \ln \frac{4}{\delta}.
    \end{align}

    Plugging \cref{eq:Proof_TxNuFL2Estimation_2} and \cref{eq:Proof_TxNuFL2Estimation_3}
    back into \cref{eq:Proof_TxNuFL2Estimation_1},
    we finally get
    \begin{align*}
        \abs{\norm{(T+\lambda)^{-1} f^*}_{L^2} - \norm{(T_X+\lambda)^{-1} f^*}_{L^2}}
        \leq C \left( \frac{\lambda^{-1}}{n} + \frac{\lambda^{-1/2}}{\sqrt {n}} \right) \ln \frac{4}{\delta}
        = O(n^{-q})\ln \frac{4}{\delta},
    \end{align*}
    where we use the condition $\lambda = \Omega(n^{-(1-\ep)})$ in the last equality.
\end{proof}

Combining the previous lemma with $L^2$-norm control of the regularized regression function,
we obtain the following corollary:
\begin{corollary}
    \label{cor:TxNuFL2Estimation2}
    Suppose that $f^* \in [\h]^2$.
    If $\lambda = \lambda(n) = \Omega(n^{-(1-\ep)})$ for some $\ep > 0$ and $\lambda(n) \to 0$, for sufficient large $n$,
    the following holds with probability at least $1-\delta$:
    \begin{align}
        \abs{\norm{(T+\lambda)^{-1} f^*}_{L^2}^2 - \norm{(T_X+\lambda)^{-1} f^*}_{L^2}^2}
         = O(n^{-q}) \left( \ln \frac{4}{\delta} \right)^2.
    \end{align}
\end{corollary}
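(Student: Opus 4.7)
My plan is to deduce the squared-norm bound directly from \cref{lem:TxNuFL2Estimation} using the elementary factorization $a^2 - b^2 = (a-b)(a+b)$. Setting $a \coloneqq \norm{(T+\lambda)^{-1} f^*}_{L^2}$ and $b \coloneqq \norm{(T_X+\lambda)^{-1} f^*}_{L^2}$, the preceding lemma already gives, with probability at least $1-\delta$ and for sufficiently large $n$,
\begin{align*}
    |a - b| = O(n^{-q}) \ln \frac{4}{\delta}
\end{align*}
for some $q > 0$. So all that remains is to control $a+b$ uniformly in $n$ on the same high-probability event.

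For $a$, \cref{cor:InfNormControlRegularF} gives the deterministic bound $a \leq \norm{f^*}_{[\h]^2}$. For $b$, I would apply the (reverse) triangle inequality together with the above estimate to get
\begin{align*}
    b \leq a + |a-b| \leq \norm{f^*}_{[\h]^2} + O(n^{-q}) \ln \frac{4}{\delta},
\end{align*}
which is at most $2\norm{f^*}_{[\h]^2}$ once $n$ is large enough that the $O(n^{-q})\ln(4/\delta)$ term drops below $\norm{f^*}_{[\h]^2}$. Hence $a+b \leq C$ for a constant $C$ depending only on $f^*$.

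Combining the two estimates on this event yields
\begin{align*}
    |a^2 - b^2| = |a-b|\,(a+b) \leq C \cdot O(n^{-q}) \ln \frac{4}{\delta} = O(n^{-q}) \left(\ln \frac{4}{\delta}\right)^2,
\end{align*}
where in the last step I trivially upgrade one log factor to two using $\ln(4/\delta) \geq \ln 4 > 1$ for $\delta \in (0,1)$, which matches the squared-log form used later in the bias lower bound in \cref{thm:BiasLowerBound}. I do not foresee any real obstacle: the corollary is an algebraic repackaging of \cref{lem:TxNuFL2Estimation}, and the only point needing attention is the uniform-in-$n$ control of $a+b$, which is handled by the deterministic $[\h]^2$-bound on $a$ combined with the probabilistic bound on $|a-b|$.
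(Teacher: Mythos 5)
Your proposal is correct and follows essentially the same route as the paper: both use \cref{lem:TxNuFL2Estimation} together with the factorization $a^2-b^2=(a-b)(a+b)$, bound $\norm{(T+\lambda)^{-1}f^*}_{L^2}$ by $\norm{f^*}_{[\h]^2}$ via \cref{cor:InfNormControlRegularF}, and control the empirical term by the triangle inequality. The only cosmetic difference is that the paper keeps the sum bounded as $O(1)\ln\frac{4}{\delta}$ so the squared logarithm appears directly, while you absorb it into a constant for large $n$ and reinstate the second log factor via $\ln\frac{4}{\delta}>1$; both are valid under the statement's quantifiers.
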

\begin{proof}
    From \cref{lem:TxNuFL2Estimation} and $\norm{(T+\lambda)^{-1} f^*}_{L^2} \leq \norm{f^*}_{[\h]^2}$
    in \cref{cor:InfNormControlRegularF}, we have
    \begin{align*}
        \norm{(T_X+\lambda)^{-1} f^*}_{L^2}
        &\leq \norm{(T+\lambda)^{-1} f^*}_{L^2} + \abs{\norm{(T+\lambda)^{-1} f^*}_{L^2} - \norm{(T_X+\lambda)^{-1} f^*}_{L^2}} \\
        & \leq \norm{f^*}_{[\h]^2} + o(1) \ln \frac{4}{\delta}
        = O(1) \ln \frac{4}{\delta},
    \end{align*}
    and hence
    \begin{align*}
        \norm{(T+\lambda)^{-1} f^*}_{L^2} + \norm{(T_X+\lambda)^{-1} f^*}_{L^2} =  O(1)\ln \frac{4}{\delta}.
    \end{align*}
    Therefore, we get
    \begin{align*}
        & \quad \abs{\norm{(T+\lambda)^{-1} f^*}_{L^2}^2 - \norm{(T_X+\lambda)^{-1} f^*}_{L^2}^2} \\
            & = \abs{\norm{(T+\lambda)^{-1} f^*}_{L^2} - \norm{(T_X+\lambda)^{-1} f^*}_{L^2}}
            \cdot
            \left( \norm{(T+\lambda)^{-1} f^*}_{L^2} + \norm{(T_X+\lambda)^{-1} f^*}_{L^2} \right) \\
        &= O(n^{-q})\ln \frac{4}{\delta} \cdot O(1)\ln \frac{4}{\delta} \\
        & = O(n^{-q}) \left( \ln \frac{4}{\delta} \right)^2.
    \end{align*}
\end{proof}
%

\subsection{Approximation of the regularized kernel basis function}

The following proposition about estimating the $L^2$ norm with
empirical norms is a corollary of \cref{lem:ConcenIneqVar}.
\begin{prop}
    \label{prop:SampleNormEstimation}
    Let $\mu$ be a probability measure on $\mathcal{X}$, $f \in L^2(\mathcal{X},\dd \mu)$ and $\norm{f}_{\infty} \leq M$.
    Suppose we have $x_1,\dots,x_n$ sampled i.i.d.\ from $\mu$.
    Then, for any $\alpha > 0$, the following holds with probability at least $1-\delta$:
    \begin{align*}
        \abs{\norm{f}_{L^2,n}^2 - \norm{f}_{L^2}^2} \leq
        \alpha M^2 \norm{f}_{L^2}^2 + \frac{3+4\alpha M^2}{6\alpha n} \ln \frac{2}{\delta}.
    \end{align*}
    By choosing $\alpha = \frac{1}{2M^2}$, we have
    \begin{align}
       \frac{1}{2}\norm{f}_{L^2}^2 - \frac{5M^2}{3n}\ln \frac{2}{\delta} \leq \norm{f}_{L^2,n}^2 \leq
       \frac{3}{2}\norm{f}_{L^2}^2 + \frac{5M^2}{3n}\ln \frac{2}{\delta}.
    \end{align}
\end{prop}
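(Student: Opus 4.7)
The plan is to apply a Bernstein-type concentration inequality (referenced as Lemma \ref{lem:ConcenIneqVar} in the excerpt) to the i.i.d.\ random variables $Z_i \coloneqq f(x_i)^2$, and then use Young's inequality to split the resulting square-root term into the two pieces appearing in the statement.

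More concretely, first I would verify the ingredients needed to invoke Bernstein: each $Z_i$ satisfies $0 \leq Z_i \leq M^2$ since $\norm{f}_\infty \leq M$, the mean is $\E Z_i = \norm{f}_{L^2}^2$, and the variance is controlled by
\begin{align*}
\mathrm{Var}(Z_i) \leq \E[f(x_i)^4] \leq M^2 \E[f(x_i)^2] = M^2 \norm{f}_{L^2}^2.
\end{align*}
Noting that $\frac{1}{n}\sum_i Z_i = \norm{f}_{L^2,n}^2$, Bernstein's inequality then yields, with probability at least $1-\delta$,
\begin{align*}
\abs{\norm{f}_{L^2,n}^2 - \norm{f}_{L^2}^2} \leq \sqrt{\frac{2 M^2 \norm{f}_{L^2}^2 \ln(2/\delta)}{n}} + \frac{2 M^2}{3n}\ln \frac{2}{\delta}.
\end{align*}

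Next, I would apply Young's inequality in the form $\sqrt{ab} \leq \alpha a + \frac{b}{4\alpha}$ with $a = M^2 \norm{f}_{L^2}^2$ and $b = 2 \ln(2/\delta)/n$, giving
\begin{align*}
\sqrt{\frac{2M^2 \norm{f}_{L^2}^2 \ln(2/\delta)}{n}} \leq \alpha M^2 \norm{f}_{L^2}^2 + \frac{\ln(2/\delta)}{2\alpha n}.
\end{align*}
Combining with the linear-in-$1/n$ term from Bernstein yields
\begin{align*}
\abs{\norm{f}_{L^2,n}^2 - \norm{f}_{L^2}^2} \leq \alpha M^2 \norm{f}_{L^2}^2 + \frac{\ln(2/\delta)}{n}\left(\frac{1}{2\alpha} + \frac{2 M^2}{3}\right) = \alpha M^2 \norm{f}_{L^2}^2 + \frac{3 + 4\alpha M^2}{6\alpha n}\ln \frac{2}{\delta},
\end{align*}
which is exactly the first displayed inequality. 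The second display follows by the direct substitution $\alpha = 1/(2M^2)$, which makes the leading coefficient $\alpha M^2 = 1/2$ and the $1/n$-coefficient equal to $5M^2/3$.

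There is no real obstacle here; the proposition is essentially a packaged form of Bernstein's inequality together with Young's inequality. The only small thing to be careful about is choosing the correct form of Young's inequality so that the coefficient of $\norm{f}_{L^2}^2$ ends up exactly $\alpha M^2$ (as opposed to $\alpha M^2/2$), which fixes the constants in the $1/n$ remainder.
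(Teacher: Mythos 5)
Your proof is correct and follows essentially the same route as the paper: both apply a Bernstein-type concentration bound to $\xi_i = f(x_i)^2$, using $\E \xi_i = \norm{f}_{L^2}^2$ and the variance bound $\E f(x)^4 \le M^2 \norm{f}_{L^2}^2$. The only difference is that the paper invokes \cref{lem:ConcenIneqVar}, which already delivers the $\alpha$-parameterized bound, whereas you recover that form from the standard square-root version of Bernstein via Young's inequality $\sqrt{ab} \le \alpha a + b/(4\alpha)$; the constants match exactly, so this is just a repackaging of the same argument.
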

\begin{proof}
    Defining $\xi_i = f(x_i)^2$, we have
    \begin{align*}
        \E \xi_i &= \norm{f}_{L^2}^2,\\
        \E \xi_i^2 & = \E_{x \sim \mu} f(x)^4 \leq \norm{f}_{\infty}^2 \norm{f}_{L^2}^2.
    \end{align*}
    Therefore, applying \cref{lem:ConcenIneqVar}, we get
    \begin{align*}
        \abs{\norm{f}_{L^2,n}^2 - \norm{f}_{L^2}^2} \leq
        \alpha \norm{f}_{\infty}^2 \norm{f}_{L^2}^2 + \frac{3+4\alpha M^{2}}{6\alpha n} \ln \frac{2}{\delta}.
    \end{align*}
\end{proof}

We establish the following lemma about covering numbers of the regularized kernel basis functions.
For simplicity, let us denote $h_x = k(x,\cdot) \in \h$ and
\begin{align}
    \mathcal{K}_{\lambda} \coloneqq \left\{ (T+\lambda)^{-1} h_x \right\}_{x \in \caX}.
\end{align}
\begin{lemma}
    \label{lem:CoveringRegularK}
    Assuming that $\caX \subseteq \R^d$ is bounded and $k \in C^s(\caX \times \caX)$ for some $s \in (0,1]$.
    Then, we have
    \begin{align}
        \label{eq:CoveringSupRegularK}
        \mathcal{N}\left( \mathcal{K}_{\lambda}, \norm{\cdot}_{\infty}, \ep\right)
        & \leq C \left( \lambda \ep \right)^{-\frac{2d}{s}},
    \end{align}
    where $C$ is a positive constant not depending on $\lambda$ or $\ep$.
\end{lemma}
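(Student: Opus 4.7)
The plan is to transfer a covering of $\caX$ (in the Euclidean metric) to a covering of $\mathcal{K}_\lambda$ (in the sup-norm) by showing that the map $x \mapsto (T+\lambda)^{-1} h_x$ is Hölder continuous from $(\caX, \abs{\cdot})$ into $(\h, \norm{\cdot}_\infty)$ with an explicit $\lambda$-dependent constant. Once the Hölder modulus is in hand, the claim follows immediately from \cref{lem:CoveringNumberBoundedSet} by choosing the mesh size appropriately.

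The main step is to estimate $\norm{(T+\lambda)^{-1}(h_x - h_y)}_\infty$ in terms of $\abs{x-y}$. First, using the reproducing property exactly as in the proof of \cref{lem:HolderRKHS} (which itself relies only on $k \in C^s(\caX \times \caX)$), I obtain
\begin{align*}
\norm{h_x - h_y}_{\h}^{2} = k(x,x) + k(y,y) - 2k(x,y) \leq 2\,[k]_{s,\caX \times \caX}\,\abs{x-y}^{s}.
\end{align*}
Next, since $\norm{(T+\lambda)^{-1}}_{\mathscr{B}(\h)} \leq \lambda^{-1}$, and since for any $g \in \h$ one has $\norm{g}_\infty \leq \kappa \norm{g}_{\h}$, I obtain the Hölder estimate
\begin{align*}
\norm{(T+\lambda)^{-1}(h_x - h_y)}_\infty \leq \kappa \lambda^{-1} \norm{h_x - h_y}_{\h} \leq C\, \lambda^{-1}\, \abs{x-y}^{s/2},
\end{align*}
for a constant $C$ depending only on $\kappa$ and $[k]_{s,\caX \times \caX}$.

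The remainder is bookkeeping. Given $\ep > 0$, set $\eta \coloneqq (\ep \lambda / C)^{2/s}$, so that $C \lambda^{-1} \eta^{s/2} \leq \ep$. Since $\caX \subseteq \R^d$ is bounded, \cref{lem:CoveringNumberBoundedSet} gives an $\eta$-net $\{x_1,\dots,x_N\} \subseteq \caX$ with $N \leq C_0\,\eta^{-d} = C'\, (\lambda \ep)^{-2d/s}$. For an arbitrary $x \in \caX$, pick $x_j$ with $\abs{x - x_j} \leq \eta$; then by the Hölder estimate $\norm{(T+\lambda)^{-1} h_x - (T+\lambda)^{-1} h_{x_j}}_\infty \leq \ep$, which shows that $\{(T+\lambda)^{-1} h_{x_j}\}_{j=1}^N$ is an $\ep$-net of $\mathcal{K}_\lambda$ in $\norm{\cdot}_\infty$. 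This yields the desired bound $\mathcal{N}(\mathcal{K}_\lambda, \norm{\cdot}_\infty, \ep) \leq C (\lambda \ep)^{-2d/s}$.

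The only possible subtlety is whether the Hölder constant behaves as $\lambda^{-1}$ rather than something better; it is tight here because I am using only the trivial operator-norm bound $\norm{(T+\lambda)^{-1}}_{\mathscr{B}(\h)} \leq \lambda^{-1}$ together with the crude inequality $\norm{\cdot}_\infty \leq \kappa \norm{\cdot}_\h$, and no finer spectral cancellation is needed for this lemma. Consequently the dominant work is the Hölder bound itself; once that is fixed, the covering-number conclusion is immediate from the bounded-set covering lemma.
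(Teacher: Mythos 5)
Your proposal is correct and follows the same strategy as the paper: establish a Hölder modulus of order $\lambda^{-1}\abs{x-y}^{s/2}$ for the map $x \mapsto (T+\lambda)^{-1}h_x$ into $(\norm{\cdot}_\infty)$ and then transfer an $\eta$-net of the bounded set $\caX$ via \cref{lem:CoveringNumberBoundedSet} with $\eta \asymp (\lambda\ep)^{2/s}$. The only (harmless) difference is how the modulus is obtained: you bound $\norm{h_x-h_y}_{\h} \leq \sqrt{2[k]_{s,\caX\times\caX}}\abs{x-y}^{s/2}$ and push it through $\norm{(T+\lambda)^{-1}}_{\mathscr{B}(\h)} \leq \lambda^{-1}$ and $\norm{\cdot}_\infty \leq \kappa\norm{\cdot}_{\h}$, whereas the paper uses the Mercer symmetry $[(T+\lambda)^{-1}h_a](x) = [(T+\lambda)^{-1}h_x](a)$ together with \cref{lem:HolderRKHS} applied to $(T+\lambda)^{-1}h_x$; both routes give the identical constant $\kappa\sqrt{2[k]_{s,\caX\times\caX}}\,\lambda^{-1}$.
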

\begin{proof}
    We first prove \cref{eq:CoveringSupRegularK}.
    By Mercer's theorem, we have
    \begin{align*}
        (T+\lambda)^{-1} h_a = \sum_{i \in N} \frac{\lambda_i}{\lambda + \lambda_i} e_i(a) e_i,
    \end{align*}
    and thus
    \begin{align*}
        \left[ (T+\lambda)^{-1} h_a \right](x) &= \sum_{i \in N} \frac{\lambda_i}{\lambda + \lambda_i} e_i(a) e_i(x) \\
        &= \left[ (T+\lambda)^{-1} h_x \right](a).
    \end{align*}
    Therefore,
    \begin{align*}
        \norm{(T+\lambda)^{-1} h_a - (T+\lambda)^{-1} h_b}_{\infty}
        &= \sup_{x \in \caX} \abs{ \left[ (T+\lambda)^{-1} h_a \right](x) -  \left[ (T+\lambda)^{-1} h_b \right](x) } \\
        &= \sup_{x \in \caX} \abs{\left[ (T+\lambda)^{-1} h_x \right](a) - \left[ (T+\lambda)^{-1} h_x \right](b) }.
    \end{align*}

    Since $k$ is Hölder-continuous, by \cref{lem:HolderRKHS} we know that
    $(T+\lambda)^{-1} h_x$ is also Hölder-continuous.
    Plugging the bound $\norm{(T+\lambda)^{-1} h_x}_{\h} \leq  \kappa \lambda^{-1}$ obtained
    in \cref{cor:InfNormControlRegularK} into \cref{eq:HolderRKHS}, we get
    \begin{align*}
        [(T+\lambda)^{-1} h_x]_{s/2, \caX}
        \leq \sqrt{2 [k]_{s,\caX \times \caX}} \norm{(T+\lambda)^{-1} h_x}_{\h}
        \leq \kappa \sqrt{2 [k]_{s,\caX \times \caX}} \lambda^{-1},
    \end{align*}
    which implies that
    \begin{align*}
        \abs{\left[ (T+\lambda)^{-1} h_x \right](a) - \left[ (T+\lambda)^{-1} h_x \right](b) }
        \leq C_0 \lambda^{-1} \abs{a-b}^{s/2},
    \end{align*}
    where $C_0 = \kappa \sqrt{2 [k]_{s,\caX \times \caX}}$.
    Consequently, we have
    \begin{align}
        \label{eq:CoveringRegularK1}
        \norm{(T+\lambda)^{-1} h_a - (T+\lambda)^{-1} h_b}_{\infty}
        \leq C_0 \lambda^{-1} \abs{a-b}^{s/2}.
    \end{align}

    \cref{eq:CoveringRegularK1} yields that to find an $\ep$-net of $\mathcal{K}_{\lambda}$ with respect to
    $\norm{\cdot}_\infty$, we only need to find an $\tilde{\ep}$-net of $\caX$ with respect to the
    Euclidean norm, where $\tilde{\ep} = \left( \frac{\lambda \ep}{C_0} \right)^{2/s}$.
    Since the result of the covering number of the latter one is already known in \cref{lem:CoveringNumberBoundedSet},
    we finally obtain that
    \begin{align*}
        \mathcal{N}\left( \mathcal{K}_{\lambda}, \norm{\cdot}_{\infty}, \ep\right)
        \leq \mathcal{N}\left( \caX, \norm{\cdot}_{\R^d}, \tilde{\ep}\right)
         \leq C \tilde{\ep}^{-d}
         = C \left( \frac{\lambda \ep}{C_0} \right)^{-\frac{2d}{s}}
        = \tilde{C} \left( \lambda \ep \right)^{-\frac{2d}{s}}.
    \end{align*}

\end{proof}

%
%

\begin{lemma}
    \label{lem:L2NormApprox}
    Suppose that \cref{assu:space} holds.
    Assume that $\lambda = \lambda(n) = \Omega(n^{-1/2+p})$ for some $p \in (0,1/2)$.
    Then, there exists some $q > 0$ such that for any $\delta>0$, 
    the following holds with probability at least $1-\delta$:
    $\forall x \in \caX$,
    \begin{align*}
        \norm{(T+\lambda)^{-1} h_x}_{L^2,n}^2 &\leq \frac{3}{2} \norm{(T+\lambda)^{-1} h_x}_{L^2}^2
        + O(n^{-q}) \ln \frac{2}{\delta}, \\
        \norm{(T+\lambda)^{-1} h_x}_{L^2,n}^2 & \geq \frac{1}{2} \norm{(T+\lambda)^{-1} h_x}_{L^2}^2
        - O(n^{-q})  \ln \frac{2}{\delta},
    \end{align*}
    where the constants in $O(n^{-q})$ do not depend on $x$.
\end{lemma}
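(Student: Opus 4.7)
My strategy is to combine the pointwise Bernstein-type deviation bound of \cref{prop:SampleNormEstimation} with a union bound over a polynomial-size $\norm{\cdot}_\infty$-cover of the family $\mathcal{K}_\lambda = \{(T+\lambda)^{-1}h_x\}_{x\in\caX}$ supplied by \cref{lem:CoveringRegularK}, and then to extend the bound from the net to an arbitrary $x\in\caX$ by a deterministic perturbation argument.

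For fixed $x$, write $f_x \coloneqq (T+\lambda)^{-1}h_x$. \cref{cor:InfNormControlRegularK} gives $\norm{f_x}_\infty \leq \kappa^2\lambda^{-1}$, so \cref{prop:SampleNormEstimation} applied with $M = \kappa^2\lambda^{-1}$ yields
\begin{align*}
\frac{1}{2}\norm{f_x}_{L^2}^2 - \frac{5\kappa^4\lambda^{-2}}{3n}\ln\frac{2}{\delta}
\ \leq\ \norm{f_x}_{L^2,n}^2
\ \leq\ \frac{3}{2}\norm{f_x}_{L^2}^2 + \frac{5\kappa^4\lambda^{-2}}{3n}\ln\frac{2}{\delta}
\end{align*}
with probability at least $1-\delta$. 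Under the hypothesis $\lambda = \Omega(n^{-1/2+p})$ we have $\lambda^{-2}/n = O(n^{-2p})$, so for each single $x$ this already has the target form $O(n^{-q})\ln(2/\delta)$ with $q = 2p$.

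To upgrade this to a uniform-in-$x$ statement, I would take an $\varepsilon$-net of $\mathcal{K}_\lambda$ in $\norm{\cdot}_\infty$ of radius $\varepsilon = n^{-1}$; by \cref{lem:CoveringRegularK} such a net has cardinality $N \leq C(\lambda\varepsilon)^{-2d/s}$, which is polynomial in $n$ and hence satisfies $\ln N = O(\ln n)$. Applying the pointwise bound to each net element with confidence $\delta/N$ and taking a union bound gives simultaneous concentration over the net with probability at least $1-\delta$, the only cost being an extra $\ln N$ factor that stays $O(n^{-q'})\ln(2/\delta)$ for any $q' < 2p$. On this event, for arbitrary $x$ I would pick $x'$ in the net with $\norm{f_x - f_{x'}}_\infty \leq \varepsilon$ and apply the elementary identity $a^2-b^2 = (a-b)(a+b)$ twice, to the pairs $(\norm{f_x}_{L^2}, \norm{f_{x'}}_{L^2})$ and $(\norm{f_x}_{L^2,n}, \norm{f_{x'}}_{L^2,n})$; using $\norm{f_x - f_{x'}}_{L^2}, \norm{f_x - f_{x'}}_{L^2,n} \leq \norm{f_x - f_{x'}}_\infty \leq \varepsilon$ together with $\norm{f_x}_{L^2}, \norm{f_x}_{L^2,n} \leq \norm{f_x}_\infty \leq \kappa^2\lambda^{-1}$ yields
\begin{align*}
\bigl|\norm{f_x}_{L^2}^2 - \norm{f_{x'}}_{L^2}^2\bigr|,\ \bigl|\norm{f_x}_{L^2,n}^2 - \norm{f_{x'}}_{L^2,n}^2\bigr|\ \leq\ 2\varepsilon\kappa^2\lambda^{-1} = O(n^{-1/2-p}),
\end{align*}
which transfers the net bound to $f_x$ at the same rate.

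The main obstacle is jointly calibrating three competing rates: the net radius $\varepsilon$, the pointwise Bernstein error $\lambda^{-2}/n$, and the net-perturbation error $\varepsilon/\lambda$. The hypothesis $\lambda = \Omega(n^{-1/2+p})$ provides precisely the slack needed to make $\lambda^{-2}/n = O(n^{-2p})$, while $\varepsilon = n^{-1}$ simultaneously keeps $N$ polynomial in $n$ and forces the perturbation error to be $O(n^{-1/2-p})$. Both vanish at polynomial rates, and since $p\in(0,1/2)$ implies $2p < 1/2+p$, the overall rate is dictated by the Bernstein term; choosing any $q < 2p$ absorbs the $\ln n$ from the union bound and delivers the claim uniformly in $x \in \caX$.
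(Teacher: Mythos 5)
Your proposal is correct and follows essentially the same route as the paper's proof: a sup-norm $\varepsilon$-net of $\mathcal{K}_\lambda$ from \cref{lem:CoveringRegularK} with $\varepsilon = n^{-1}$, the Bernstein-type bound of \cref{prop:SampleNormEstimation} with $M = \kappa^2\lambda^{-1}$ applied uniformly over the net via a union bound, and a perturbation argument using $a^2-b^2=(a-b)(a+b)$ to transfer the bound to arbitrary $x$, with the same rate calibration $\lambda^{-2}/n = O(n^{-2p})$ and $\varepsilon\lambda^{-1} = O(n^{-1/2-p})$. The only cosmetic difference is that you make the $\delta/N$ union bound explicit where the paper writes $\ln\frac{2\abs{\mathcal{F}}}{\delta}$ directly.
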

\begin{proof}
    By \cref{lem:CoveringRegularK},
    we can find an $\ep$-net $\mathcal{F} \subseteq \mathcal{K}_\lambda \subseteq \h$ with respect to sup-norm of
    $\mathcal{K}_\lambda$ such that
    \begin{align}
        \label{eq:Proof_L2NormApprox_Covering}
        \abs{\mathcal{F}} \leq C \left( \lambda \ep \right)^{-\frac{2d}{s}},
    \end{align}
    where $\ep = \ep(n)$ will be determined later.

    Applying \cref{prop:SampleNormEstimation} to $\mathcal{F}$ with
    the $\norm{\cdot}_{\infty}$-bound in \cref{cor:InfNormControlRegularK},
    with probability at least $1-\delta$ we have
    \begin{align}
        \label{eq:L2nNormEstimation}
        \frac{1}{2}\norm{f}_{L^2}^2 - \frac{C \lambda^{-2}}{n}\ln \frac{2\abs{\mathcal{F}}}{\delta}
        \leq \norm{f}_{L^2,n}^2 \leq \frac{3}{2}\norm{f}_{L^2}^2 + \frac{C \lambda^{-2}}{n}\ln \frac{2\abs{\mathcal{F}}}{\delta},
        \quad \forall f \in \mathcal{F}.
    \end{align}

    Now,  since $\mathcal{F}$ is an $\ep$-net of
    $\mathcal{K}_\lambda$ with respect to $\norm{\cdot}_\infty$,
    for any $x \in \mathcal{X}$, there exists some $f \in \mathcal{F}$ such that
    \begin{align*}
        \norm{(T+\lambda)^{-1} h_x - f}_{\infty} \leq \ep,
    \end{align*}
    which implies that
    \begin{align*}
        \abs{\norm{(T+\lambda)^{-1} h_x}_{L^2} - \norm{f}_{L^2}} \leq \ep,\quad
        \abs{\norm{(T+\lambda)^{-1} h_x}_{L^2,n} - \norm{f}_{L^2,n}} \leq \ep.
    \end{align*}
    Since $\norm{(T+\lambda)^{-1} h_x}_{\infty} \leq C \lambda^{-1}$ and  $a^{2}-b^{2}=(a-b)(2b+(a-b))$, we get
    \begin{equation}
        \label{eq:L2NormEstimation}
        \begin{aligned}
            \abs{\norm{(T+\lambda)^{-1} h_x}_{L^2}^2 - \norm{f}_{L^2}^2} \leq C \ep \lambda^{-1}, \quad  \abs{\norm{(T+\lambda)^{-1} h_x}_{L^2,n}^2 - \norm{f}_{L^2,n}^2} \leq C \ep \lambda^{-1}.
        \end{aligned}
    \end{equation}
    For the upper bound, we have
    \begin{align*}
        \norm{(T+\lambda)^{-1} h_x}_{L^2,n}^2 & \leq \norm{f}_{L^2,n}^2 + C \ep \lambda^{-1} \qq{(By \cref{eq:L2NormEstimation})} \\
        \qq{(By \cref{eq:L2nNormEstimation})}& \leq \frac{3}{2}\norm{f}_{L^2}^2 + \frac{C \lambda^{-2}}{n}
        \ln \frac{2\abs{\mathcal{F}}}{\delta} +  C \ep \lambda^{-1} \\
        \qq{(By \cref{eq:L2NormEstimation} again)} & \leq \frac{3}{2}\norm{(T+\lambda)^{-1} h_x}_{L^2}^2
        + \frac{C \lambda^{-2}}{n}\ln \frac{2\abs{\mathcal{F}}}{\delta} +  C \ep \lambda^{-1}.
    \end{align*}
    Noticing that $\lambda = \Omega\left( n^{-1/2+p} \right)$ and \cref{eq:Proof_L2NormApprox_Covering},
    by choosing $\ep = n^{-1}$,
    it is easy to verify that
    \begin{align*}
        \frac{C \lambda^{-2}}{n}\ln \frac{2\abs{\mathcal{F}}}{\delta} +  C \ep \lambda^{-1}
        &= O\left( n^{-2p} \right) \left( \ln \abs{\mathcal{F}} + \ln \frac{2}{\delta} \right) + O(n^{-\frac{1}{2} - p }) \\
        & = O \left( n^{-2p} \right) \left( C_1 \ln n + C_2 + \ln\frac{2}{\delta} \right) +  O(n^{-\frac{1}{2} - p }) \\
        & = O\left( n^{-q} \right) \ln \frac{2}{\delta}
    \end{align*}
    for some $q > 0$.
    The lower bound follows similarly.
\end{proof}

\begin{lemma}
    \label{lem:DiffControl}
    Suppose that \cref{assu:space} and Condition \hyperlink{cond:EigenDecay}{${\bf (A)}$} ( i.e., the eigenvalue decay rate \cref{eq:EigenDecay} ) hold.
    Assume that $\lambda = \Omega\left(n^{-\frac{1}{2} + p}\right)$ for some $p \in (0,1/2)$.
    Then, there exists some $q > 0$ such that
    \begin{align*}
        \abs{\norm{T_X^{1/2}(T_X+\lambda)^{-1} h_x}_{\h} - \norm{T_X^{1/2}(T+\lambda)^{-1} h_x}_{\h}}
        \leq O(n^{-q}) \ln \frac{2}{\delta},\quad \forall x \in \caX,
    \end{align*}
    with probability at least $1-\delta$,
    where the constant in $O(n^{-q})$ is independent of $x$.
\end{lemma}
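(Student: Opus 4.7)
The plan is to reduce the difference of norms to an operator-theoretic quantity via the resolvent identity and then control it uniformly in $x$ through a covering argument on the family $\mathcal{K}_{\lambda} = \{(T+\lambda)^{-1}h_{x}\}_{x\in\caX}$, exactly paralleling the strategy used in \cref{lem:L2NormApprox}.

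First, the reverse triangle inequality gives
\begin{align*}
\Big|\norm{T_X^{1/2}(T_X+\lambda)^{-1}h_x}_{\h} - \norm{T_X^{1/2}(T+\lambda)^{-1}h_x}_{\h}\Big|
\leq \norm{T_X^{1/2}\big[(T_X+\lambda)^{-1}-(T+\lambda)^{-1}\big]h_x}_{\h}.
\end{align*}
Apply the resolvent identity $(T_X+\lambda)^{-1}-(T+\lambda)^{-1} = (T_X+\lambda)^{-1}(T-T_X)(T+\lambda)^{-1}$, and then split as
\begin{align*}
\norm{T_X^{1/2}(T_X+\lambda)^{-1}(T-T_X)(T+\lambda)^{-1}h_x}_{\h}
\leq \norm{T_X^{1/2}(T_X+\lambda)^{-1}}_{\mathscr{B}(\h)}\cdot \norm{(T-T_X)(T+\lambda)^{-1}h_x}_{\h}.
\end{align*}
By functional calculus $\sup_{t\geq 0}\sqrt{t}/(t+\lambda) = 1/(2\sqrt{\lambda})$, so the first factor is at most $1/(2\sqrt{\lambda})$, and the task reduces to establishing a uniform-in-$x$ bound on $\norm{(T-T_X)g_x}_{\h}$ where $g_x = (T+\lambda)^{-1}h_x$.

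For a fixed $x$, \cref{lem:ConcentrationF} combined with the bounds $\norm{g_x}_{\infty}\leq \kappa^{2}\lambda^{-1}$ and $\norm{g_x}_{L^{2}}\leq \kappa\lambda^{-1/2}$ from \cref{cor:InfNormControlRegularK} yields $\norm{(T-T_X)g_x}_{\h} = O(\lambda^{-1}/n + \lambda^{-1/2}/\sqrt{n})\ln(2/\delta)$ with probability $1-\delta$. To upgrade this to a uniform bound in $x$, I would use \cref{lem:CoveringRegularK}: pick an $\ep$-net $\mathcal{F}\subseteq \mathcal{K}_{\lambda}$ of cardinality $|\mathcal{F}|\leq C(\lambda\ep)^{-2d/s}$ with respect to $\norm{\cdot}_{\infty}$, and apply \cref{lem:ConcentrationF} to each $f\in\mathcal{F}$ with a union bound. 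For arbitrary $x$, choose $f\in\mathcal{F}$ with $\norm{g_x - f}_{\infty}\leq \ep$, and control the residual via
\begin{align*}
\norm{(T-T_X)(g_x - f)}_{\h} \leq \norm{T-T_X}_{\mathscr{B}(\h)}\,\norm{g_x - f}_{\h} \leq 2\kappa^{2}\cdot 2\kappa\ep
\end{align*}
using the triangle-inequality estimate $\norm{(T-T_X)h}_{\h}\leq 2\kappa\norm{h}_{\infty}$ that follows directly from $T_X h = \frac{1}{n}\sum_i h(x_i)k(x_i,\cdot)$ and $Th = \E_{y\sim\mu}h(y)k(y,\cdot)$. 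Choosing $\ep = n^{-1}$ makes this term negligible and gives $\ln|\mathcal{F}| = O(\ln n)$.

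Putting it together and multiplying by $1/(2\sqrt{\lambda})$, the overall bound is
\begin{align*}
\frac{1}{2\sqrt{\lambda}}\cdot O\!\left(\frac{\lambda^{-1}}{n}+\frac{\lambda^{-1/2}}{\sqrt{n}}\right)(\ln n + \ln(2/\delta))
= O\!\left(\frac{\lambda^{-3/2}}{n}+\frac{\lambda^{-1}}{\sqrt{n}}\right)\ln(2/\delta),
\end{align*}
after absorbing the $\ln n$ factor at the cost of a slightly smaller exponent. Under $\lambda = \Omega(n^{-1/2+p})$, both $\lambda^{-3/2}/n = O(n^{-1/4-3p/2})$ and $\lambda^{-1}/\sqrt{n} = O(n^{-p})$ are polynomially decaying, so some $q>0$ (roughly $q \approx p - \eta$) works. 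The main obstacle, as usual in these refined arguments, is the uniform-in-$x$ control: a naive application of \cref{lem:ConcentrationF} at each $x$ with a union bound is impossible since $\caX$ is uncountable, so the covering of $\mathcal{K}_{\lambda}$ in sup-norm (rather than in $\h$-norm) is essential, and the approximation error in passing from $g_x$ to a net element must be bounded in a norm strong enough to control $\norm{(T-T_X)\cdot}_{\h}$, which is precisely why the $\norm{\cdot}_{\infty}$ covering number from \cref{lem:CoveringRegularK} is the right tool.
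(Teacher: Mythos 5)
Your proposal is correct in substance but follows a genuinely different route from the paper after the common first two steps (reverse triangle inequality plus the resolvent identity). You split off $\norm{T_X^{1/2}(T_X+\lambda)^{-1}}_{\mathscr{B}(\h)}\leq \tfrac{1}{2}\lambda^{-1/2}$ and then control $\norm{(T-T_X)(T+\lambda)^{-1}h_x}_{\h}$ uniformly in $x$ by a union bound over the sup-norm net of $\mathcal{K}_\lambda$ from \cref{lem:CoveringRegularK}, using the Hilbert-space Bernstein bound of \cref{lem:ConcentrationF} at each net point and $\norm{(T-T_X)h}_{\h}\leq 2\kappa\norm{h}_{\infty}$ for the off-net residual; this mirrors the empirical-process argument the paper reserves for \cref{lem:L2NormApprox}, costs an extra $\ln\abs{\mathcal{F}}=O(\ln n)$ factor (harmlessly absorbed into $n^{-q}$), and incidentally does not even need Condition \hyperlink{cond:EigenDecay}{${\bf (A)}$}. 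The paper instead factors the same expression as $T_X^{1/2}(T_X+\lambda)^{-1/2}\cdot(T_X+\lambda)^{-1/2}(T+\lambda)^{1/2}\cdot(T+\lambda)^{-1/2}(T-T_X)(T+\lambda)^{-1/2}\cdot(T+\lambda)^{-1/2}h_x$, bounds the first two factors by $1$ and $\sqrt{2}$ (via \cref{lem:TRatio}), the third by the operator Bernstein inequality of \cref{lem:ConcenRegularTTX}, and the last deterministically by $\kappa\lambda^{-1/2}$; since the random quantity is then an operator norm independent of $x$, uniformity in $x$ is automatic and no covering argument is needed at all, which is the main simplification the paper's factorization buys. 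One writing slip in your residual step: the displayed chain $\norm{(T-T_X)(g_x-f)}_{\h}\leq\norm{T-T_X}_{\mathscr{B}(\h)}\norm{g_x-f}_{\h}\leq 2\kappa^2\cdot 2\kappa\ep$ is not justified, because sup-norm closeness on the net gives no control of $\norm{g_x-f}_{\h}$; however the estimate you state in words, $\norm{(T-T_X)h}_{\h}\leq 2\kappa\norm{h}_{\infty}$ applied to $h=g_x-f$, is correct and is exactly what makes the sup-norm covering the right tool, so the argument goes through once the display is replaced by that bound.
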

\begin{proof}
    We begin with
    \begin{align*}
        \abs{\norm{T_X^{1/2}(T_X+\lambda)^{-1} h_x}_{\h} - \norm{T_X^{1/2}(T+\lambda)^{-1} h_x}_{\h}}
        \leq \norm{T_X^{1/2}\left[ (T_X+\lambda)^{-1} - (T+\lambda)^{-1}\right] h_x}_{\h}.
    \end{align*}
    Noticing that
    \begin{align*}
        (T_X+\lambda)^{-1} - (T+\lambda)^{-1} = (T_X+\lambda)^{-1} \left( T - T_X \right) (T+\lambda)^{-1},
    \end{align*}
    we obtain
    \begin{align*}
      &\quad \norm{T_X^{1/2}(T_X+\lambda)^{-1} \left( T - T_X \right) (T+\lambda)^{-1}h_x}_{\h} \\
      &= \norm{T_X^{1/2}(T_X+\lambda)^{-1/2} \cdot (T_X+\lambda)^{-1/2} (T+\lambda)^{1/2}
        \cdot  (T+\lambda)^{-1/2} \left( T - T_X \right)(T+\lambda)^{-1/2} \cdot
        (T+\lambda)^{-1/2}h_x}_{\h} \\
      & \leq \norm{T_X^{1/2}(T_X+\lambda)^{-1/2}}_{\mathscr{B}(\h)}\cdot
      \norm{(T_X+\lambda)^{-1/2} (T+\lambda)^{1/2}}_{\mathscr{B}(\h)} \\
      &\quad \cdot \norm{(T+\lambda)^{-1/2} \left( T - T_X \right)(T+\lambda)^{-1/2} }_{\mathscr{B}(\h)}
      \cdot \norm{(T+\lambda)^{-1/2}h_x}_{\h} \\
      & \leq 1 \cdot \sqrt {2} \cdot \lambda^{-1/2} \norm{(T+\lambda)^{-1/2} \left( T - T_X \right)(T+\lambda)^{-1/2} }_{\mathscr{B}(\h)}.
    \end{align*}
    Now, \cref{lem:ConcenRegularTTX} gives that, with probability at least $1-\delta$,
    \begin{align*}
        \norm{(T+\lambda)^{-1/2} \left( T - T_X \right)(T+\lambda)^{-1/2} }_{\mathscr{B}(\h)} \leq  \frac{4}{3}u + \sqrt{2u},
    \end{align*}
    where
    \begin{align*}
        u = \frac{\kappa^2}{\lambda n}\ln \frac{4(\norm{T}+\lambda) \mathcal{N}(\lambda)}{\delta \norm{T}}.
    \end{align*}
    Consequently, using $\lambda = \Omega\left(n^{-\frac{1}{2} + p}\right)$, we find that $\lambda^{-1} u = O(n^{-q})$ for some $q$
    and thus 
    \begin{align*}
        \lambda^{-1/2} \norm{(T+\lambda)^{-1/2} \left( T - T_X \right)(T+\lambda)^{-1/2} }_{\mathscr{B}(\h)} = O(n^{-q/2}).
    \end{align*}
    Plugging this back yields the desired result.

\end{proof}

    \section{Auxiliary Results}
    
For any $p\geq 1$, let us introduce the $p$-effective dimension
    \begin{align*}
        \mathcal{N}_p(\lambda) \coloneqq \Tr \left( T(T+\lambda)^{-1} \right)^p =
        \sum_{i = 1}^\infty \left( \frac{\lambda_i}{\lambda + \lambda_i} \right)^p,\quad p \geq 1.
    \end{align*}
\begin{prop}
    \label{prop:EffectiveDimEstimation}
       If $\lambda_i \asymp i^{-1/\beta}$, we have
    \begin{align}
        \mathcal{N}_p(\lambda) \asymp \lambda^{-\beta}.
    \end{align}
 
\end{prop}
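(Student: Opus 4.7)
The plan is to prove both sides of $\mathcal{N}_p(\lambda) \asymp \lambda^{-\beta}$ by splitting the sum at the natural threshold index $N \asymp \lambda^{-\beta}$, which is where the eigenvalues $\lambda_i \asymp i^{-1/\beta}$ cross the regularization level $\lambda$. Concretely, choose $N = N(\lambda)$ so that $i \leq N$ is equivalent (up to absolute constants) to $\lambda_i \gtrsim \lambda$, and $i > N$ is equivalent to $\lambda_i \lesssim \lambda$. This $N$ satisfies $N \asymp \lambda^{-\beta}$.

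For the lower bound, I would use only the head of the sum: for every $i$ with $\lambda_i \geq \lambda$ one has $\lambda_i/(\lambda + \lambda_i) \geq 1/2$, hence
\begin{align*}
\mathcal{N}_p(\lambda) \geq \sum_{i \leq N} \left(\frac{\lambda_i}{\lambda + \lambda_i}\right)^p \geq 2^{-p} N \gtrsim \lambda^{-\beta}.
\end{align*}

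For the upper bound, I would decompose $\mathcal{N}_p(\lambda) = S_1 + S_2$, where $S_1$ sums over $i \leq N$ and $S_2$ over $i > N$. Since each summand is at most $1$, clearly $S_1 \leq N \asymp \lambda^{-\beta}$. For the tail, use the elementary inequality $\lambda_i/(\lambda + \lambda_i) \leq \lambda_i/\lambda$ together with $\lambda_i \leq c_2 i^{-1/\beta}$ to obtain
\begin{align*}
S_2 \leq \lambda^{-p} \sum_{i > N} \lambda_i^p \leq c_2^p\, \lambda^{-p} \sum_{i > N} i^{-p/\beta}.
\end{align*}
Because $\beta \in (0,1)$ and $p \geq 1$, the exponent satisfies $p/\beta > 1$, so the tail series converges and is comparable to $N^{1 - p/\beta}$ via a standard integral comparison. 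Substituting $N \asymp \lambda^{-\beta}$ gives $S_2 \lesssim \lambda^{-p} \cdot \lambda^{-\beta(1 - p/\beta)} = \lambda^{-\beta}$, which matches the head bound and completes the estimate.

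There is no real obstacle here; the only thing to be a little careful about is that the condition $\beta < 1$ from the eigenvalue decay assumption is what makes $p/\beta > 1$ for all $p \geq 1$, ensuring the tail is summable. If one allowed $\beta \geq 1$ and $p = 1$, the argument would need logarithmic corrections, but that case is excluded by Condition \hyperlink{cond:EigenDecay}{\textbf{(A)}}. The rest is a routine integral-test comparison and careful tracking of constants $c_1, c_2$ from $c_1 i^{-1/\beta} \leq \lambda_i \leq c_2 i^{-1/\beta}$ to choose $N$ and quantify the threshold.
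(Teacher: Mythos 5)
Your proof is correct, and it takes a mildly different route from the paper's. The paper bounds each summand, via monotonicity of $t \mapsto t/(t+\lambda)$, by $\left(\frac{c_2}{c_2+\lambda i^{1/\beta}}\right)^p$ and compares the \emph{whole} sum with the integral $\int_0^\infty \left(\frac{c_2}{\lambda x^{1/\beta}+c_2}\right)^p \dd x$, which after the substitution $x=\lambda^{-\beta}y$ equals $\lambda^{-\beta}$ times a constant integral that converges exactly because $p/\beta>1$ (the same condition you invoke); the matching lower bound is obtained by the mirror-image comparison with $c_1$. You instead split at the effective-dimension threshold $N\asymp\lambda^{-\beta}$, bounding the head by counting (at most $N$ terms, each at most $1$; at least $2^{-p}$ per term when $\lambda_i\ge\lambda$ for the lower bound) and the tail via $\lambda_i/(\lambda+\lambda_i)\le \lambda_i/\lambda$ plus an integral test. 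The two arguments are equivalent in substance---both are integral comparisons hinging on $p/\beta>1$---but your head-count lower bound is more explicit than the paper's one-line ``similarly,'' and your decomposition makes transparent where the scale $\lambda^{-\beta}$ comes from (the number of eigenvalues above $\lambda$), whereas the paper's single change of variables is slightly more compact and treats the two bounds symmetrically. The only cosmetic point: the head is nonempty only once $\lambda \lesssim \lambda_1$, so your lower bound is, as intended, a statement for small $\lambda$, which is the regime in which the proposition is used.
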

\begin{proof}
    Since $c ~i^{-1/\beta} \leq \lambda_i \leq C i^{-1/\beta}$, we have
    \begin{align*}
        \mathcal{N}_{p}(\lambda) &= \sum_{i = 1}^{\infty} \left( \frac{\lambda_i}{\lambda_i + \lambda}  \right)^p
         \leq \sum_{i = 1}^{\infty} \left( \frac{C i^{-1/\beta}}{C i^{-1/\beta} + \lambda} \right)^p = \sum_{i = 1}^{\infty} \left( \frac{C }{C+ \lambda i^{1/\beta}}  \right)^p \\
         &\leq \int_{0}^{\infty} \left( \frac{C }{\lambda x^{1/\beta} + C} \right)^p\dd x 
        = \lambda^{-\beta} \int_{0}^{\infty}  \left(\frac{C }{y^{1/\beta} + C} \right)^p\dd y \leq \tilde{C} \lambda^{-\beta}.
    \end{align*}
    for some constant $C$. Similarly, we hace
        \begin{align*}
        \mathcal{N}_{p}(\lambda)\geq  \tilde{C'} \lambda^{-\beta}.
    \end{align*}
    for some constant $\tilde{C}'$.
\end{proof}

\begin{prop}
    \label{prop:FilterKRRControl}
    For $\lambda > 0$ and $s \in [0,1]$, we have
    \begin{align*}
        \sup_{t \geq 0} \frac{t^s}{t + \lambda} \leq \lambda^{s-1}.
    \end{align*}
\end{prop}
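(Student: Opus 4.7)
The plan is to reduce the supremum bound to Young's inequality (weighted AM--GM). For $t,\lambda\geq 0$ and $s\in[0,1]$, Young's inequality gives $t^{s}\lambda^{1-s}\leq s\,t+(1-s)\lambda$. Since $s,1-s\in[0,1]$, the right-hand side is in turn bounded above by $t+\lambda$, so $t^{s}\lambda^{1-s}\leq t+\lambda$. Dividing both sides by $\lambda^{1-s}(t+\lambda)>0$ produces $\frac{t^{s}}{t+\lambda}\leq \lambda^{s-1}$, and taking the supremum over $t\geq 0$ yields the claim. The boundary cases $s=0$ and $s=1$ can also be read off by inspection: $1/(t+\lambda)\leq 1/\lambda$ and $t/(t+\lambda)\leq 1$, respectively.

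As a cross-check (and as a backup route if one prefers to avoid citing Young), I would alternatively carry out a direct calculus argument. Differentiating $\varphi(t)=t^{s}/(t+\lambda)$ for fixed $s\in(0,1)$, the numerator of $\varphi'(t)$ is $t^{s-1}\bigl[(s-1)t+s\lambda\bigr]$, yielding a unique interior critical point $t^{\ast}=s\lambda/(1-s)$. Substituting gives $\varphi(t^{\ast})=s^{s}(1-s)^{1-s}\lambda^{s-1}$, and since the elementary inequality $s^{s}(1-s)^{1-s}\leq 1$ holds on $[0,1]$ (with equality only at the endpoints), the stated bound follows. I do not expect any real obstacle here: the estimate is the standard qualification-$1$ filter bound used throughout spectral regularization theory, and the only mildly careful point is handling $s=0,1$ and $t=0$, where one or both sides may be interpreted as limits. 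It is also worth flagging that the constant $1$ implicit in $\lambda^{s-1}$ is not improvable, since the bound is saturated in the limiting regimes $s\to 0$ (at $t=0$) and $s\to 1$ (as $t\to\infty$).
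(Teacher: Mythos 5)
Your proposal is correct and follows essentially the same route as the paper, which reduces the bound to the elementary scalar inequality $a^{s}\leq a+1$ for $a=t/\lambda$; your Young's-inequality step $t^{s}\lambda^{1-s}\leq st+(1-s)\lambda\leq t+\lambda$ is just a mildly sharpened form of that same inequality, and your calculus cross-check (optimal constant $s^{s}(1-s)^{1-s}$) is a bonus beyond what the paper records.
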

\begin{proof} It follows from the inequality $a^{s}\leq a+1$ for any $a\geq 0$ and $s\in [0,1]$.
\end{proof}

\begin{lemma}(Young's inequality)
    \label{lem:YoungIneq}
    Let $a,b > 0$.
    For $p,q > 1$ satisfying $\frac{1}{p} + \frac{1}{q} = 1$, we have
    \begin{align}
        ab \leq \frac{1}{p}a^p + \frac{1}{q}b^q,
    \end{align}
    or equivalently
    \begin{align}
        a + b \geq \sqrt[p]{p} \sqrt[q]{q}\cdot a^{\frac{1}{p}} b^{\frac{1}{q}}.
    \end{align}
\end{lemma}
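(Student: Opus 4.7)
The plan is to prove the first inequality $ab \leq \frac{1}{p}a^p + \frac{1}{q}b^q$ directly via convexity and then deduce the equivalent second form by a simple change of variables.

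For the first inequality, I would invoke the concavity of $\log$ on $(0,\infty)$. Since $1/p$ and $1/q$ are non-negative weights summing to $1$, Jensen's inequality applied to the two points $a^p$ and $b^q$ yields
\begin{align*}
\log\!\left(\frac{1}{p}a^p + \frac{1}{q}b^q\right) \geq \frac{1}{p}\log(a^p) + \frac{1}{q}\log(b^q) = \log a + \log b = \log(ab).
\end{align*}
Exponentiating both sides gives $ab \leq \frac{1}{p}a^p + \frac{1}{q}b^q$. Equivalently, one could apply the convexity of $e^x$ to the convex combination $\frac{1}{p}(p\log a) + \frac{1}{q}(q\log b)$, or do a one-variable calculus argument by minimizing $a \mapsto \frac{1}{p}a^p - ab$ with $b$ fixed; I would use the log-concavity argument for brevity.

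To derive the second (equivalent) form, I would substitute $A = \frac{1}{p}a^p$ and $B = \frac{1}{q}b^q$, so that $a = (pA)^{1/p}$ and $b = (qB)^{1/q}$, and hence
\begin{align*}
ab = (pA)^{1/p}(qB)^{1/q} = \sqrt[p]{p}\,\sqrt[q]{q}\,A^{1/p}B^{1/q}.
\end{align*}
Plugging this into $ab \leq A + B$ and relabeling $A\mapsto a$, $B\mapsto b$ gives exactly $a + b \geq \sqrt[p]{p}\sqrt[q]{q}\cdot a^{1/p} b^{1/q}$, which is the stated second form.

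Since Young's inequality is a classical textbook fact, I do not expect any real obstacle here; the only detail warranting care is tracking the constants $p^{1/p}$ and $q^{1/q}$ through the substitution so that the two forms match up exactly as written.
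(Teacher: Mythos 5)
Your proof is correct: the log-concavity argument for the first inequality and the substitution $A = \tfrac{1}{p}a^p$, $B = \tfrac{1}{q}b^q$ for the equivalence both check out, including the constants $p^{1/p}q^{1/q}$. The paper states this lemma without proof as a classical fact, and your argument is the standard one, so there is nothing to compare against and no gap to report.
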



The following operator inequality\citep{fujii1993_NormInequalities} will be used in our proofs.
\begin{lemma}[Cordes' Inequality]
    \label{lem:OpIneq_Cordes}
    Let $A,B$ be two positive semi-definite bounded linear operators on separable Hilbert space $H$.
    Then
    \begin{align}
        \norm{A^s B^s}_{\mathscr{B}(H)}
        \leq \norm{AB}_{\mathscr{B}(H)}^s,\quad \forall s \in [0,1].
    \end{align}
\end{lemma}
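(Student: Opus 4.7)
I plan to prove Cordes' inequality by complex interpolation, applying the Hadamard three-lines theorem to the operator-valued analytic function
\begin{align*}
F(z) := A^{z} B^{z}, \qquad z \in S := \{z \in \mathbb{C} : 0 \leq \Re z \leq 1\}.
\end{align*}
Without loss of generality I first pass to strictly positive operators, replacing $A, B$ by $A + \varepsilon I$ and $B + \varepsilon I$ and letting $\varepsilon \to 0^{+}$ at the end; for strictly positive $A$ one defines $A^{z} := \exp(z \log A)$ through the continuous functional calculus, which is holomorphic in $z$ and satisfies $\norm{A^{z}} \leq \max(1, \norm{A})$ on $S$, so $F$ is bounded and holomorphic on $S$.

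\textbf{Boundary estimates.} On the imaginary axis, write $z = it$: since $\log A$ is self-adjoint, $A^{it} = \exp(it \log A)$ is unitary, and likewise $B^{it}$, so $\norm{F(it)} \leq \norm{A^{it}} \cdot \norm{B^{it}} = 1$. On the right boundary, write $z = 1 + it$: since $A^{it}$ is a function of $A$ and therefore commutes with $A$, we have $A^{1+it} = A^{it} A$ and similarly $B^{1+it} = B\, B^{it}$, so $F(1+it) = A^{it}\,(AB)\,B^{it}$ is a unitary conjugation of $AB$, which gives $\norm{F(1+it)} = \norm{AB}$.

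\textbf{Interpolation and conclusion.} For each pair of unit vectors $u, v \in H$, the scalar function $\phi_{u,v}(z) := \ang{F(z) u, v}$ is bounded holomorphic on $S$ with $\abs{\phi_{u,v}(z)} \leq \norm{F(z)}$. The Hadamard three-lines theorem applied to $\phi_{u,v}$ yields, for $s \in [0,1]$,
\begin{align*}
\abs{\phi_{u,v}(s)} \leq \Bigl(\sup_{t \in \mathbb{R}} \abs{\phi_{u,v}(it)}\Bigr)^{1-s} \Bigl(\sup_{t \in \mathbb{R}} \abs{\phi_{u,v}(1+it)}\Bigr)^{s} \leq 1^{1-s} \cdot \norm{AB}^{s}.
\end{align*}
Taking the supremum over unit $u, v$ gives $\norm{A^{s} B^{s}} = \norm{F(s)} \leq \norm{AB}^{s}$; sending $\varepsilon \to 0^{+}$ then removes the strict positivity assumption via continuity of the functional calculus on the cone of bounded positive operators. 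The main obstacle I anticipate is verifying the growth hypothesis of the three-lines theorem (uniform boundedness of $\phi_{u,v}$ on the strip, which follows from the bound on $\norm{A^{z}}$ above) and the $\varepsilon \to 0^{+}$ limit when $A$ or $B$ is non-invertible, but both points reduce to standard continuity properties and are not serious obstructions.
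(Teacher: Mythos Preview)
Your proof is correct. The complex-interpolation argument via the Hadamard three-lines theorem is one of the standard proofs of Cordes' inequality; your boundary estimates, boundedness on the strip, and the $\varepsilon \to 0^{+}$ limiting step (using strong-operator convergence of $(A+\varepsilon)^{s} \to A^{s}$ together with uniform boundedness to pass to the product, and lower semicontinuity of the norm under SOT limits) all go through. One cosmetic point: $F(1+it) = A^{it}(AB)B^{it}$ is not a unitary \emph{conjugation} of $AB$ but a left/right unitary multiplication; the norm equality you claim is nonetheless correct for that reason.

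As for comparison with the paper: the paper does not supply a proof at all. It records Cordes' inequality as a known operator inequality and defers to \citet{fujii1993_NormInequalities}. So your write-up is strictly more informative than what appears in the paper; there is no competing argument to compare against.
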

\subsection{Concentration inequalities}

The following concentration inequality is adopted from \citet{caponnetto2010_CrossvalidationBased}:
\begin{lemma}
    \label{lem:ConcenIneqVar}
    Let $\xi_1,\dots,\xi_n$ be $n$ i.i.d.\ bounded random variables such that $\abs{\xi_i} \leq B$,
    $\E\xi_i = \mu$, and $\E (\xi_i - \mu)^2 \leq \sigma^2$.
    Then for any $\alpha > 0$, any $\delta \in (0,1)$, we have
    \begin{align}
        \label{eq:ConcenIneqVar}
        \abs{ \frac{1}{n}\sum_{i=1}^n \xi_i - \mu  } \leq \alpha \sigma^2 + \frac{3+4\alpha B}{6\alpha n} \ln \frac{2}{\delta}
    \end{align}
    holds with probability at least $1-\delta$.
\end{lemma}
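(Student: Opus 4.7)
The claim is a Bernstein-type concentration inequality recast in a form convenient for absorbing the $O(1/\sqrt n)$ fluctuation into a user-chosen variance multiple. The plan is to invoke the classical Bernstein inequality for i.i.d.\ bounded variables and then apply a Young/AM-GM trick with parameter $\alpha$ to convert the usual $\sqrt{\sigma^2/n}$ term into the form $\alpha\sigma^2 + \frac{1}{2\alpha n}\ln(2/\delta)$, after which the coefficients combine to the stated $\frac{3+4\alpha B}{6\alpha n}\ln(2/\delta)$.

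First, I would recall the one-sided Bernstein tail bound: under the hypotheses of the lemma, for every $t>0$,
\[
\mathbb{P}\!\left(\tfrac{1}{n}\sum_{i=1}^n \xi_i - \mu \geq t\right) \leq \exp\!\left(-\frac{nt^2}{2\sigma^2 + \tfrac{2}{3}Bt}\right).
\]
Applying the same inequality to $-(\xi_i-\mu)$ and taking a union bound gives a two-sided tail of mass $\delta$ after setting the right-hand side equal to $\delta/2$ and solving the resulting quadratic inequality in $t$. The standard inversion yields, with probability at least $1-\delta$,
\[
\left|\tfrac{1}{n}\sum_{i=1}^n \xi_i - \mu\right| \leq \sqrt{\frac{2\sigma^2}{n}\ln\frac{2}{\delta}} + \frac{2B}{3n}\ln\frac{2}{\delta}.
\]

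Second, I would apply the elementary inequality $\sqrt{uv} \leq \tfrac{1}{2}(u+v)$ with the splitting $u = 2\alpha\sigma^2$ and $v = \frac{1}{\alpha n}\ln\frac{2}{\delta}$, obtaining
\[
\sqrt{\frac{2\sigma^2}{n}\ln\frac{2}{\delta}} \;\leq\; \alpha\sigma^2 + \frac{1}{2\alpha n}\ln\frac{2}{\delta}.
\]
Substituting into the two-sided Bernstein bound and gathering the coefficients of $\ln(2/\delta)/n$ gives $\frac{1}{2\alpha} + \frac{2B}{3} = \frac{3 + 4\alpha B}{6\alpha}$, matching the stated inequality exactly.

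The proof has no deep obstacle; the only step requiring care is the bookkeeping in the AM-GM splitting, which has to be tuned so the $\sigma^2$-coefficient comes out as $\alpha$ (rather than $\alpha/2$ or $2\alpha$) and so the combined $\ln(2/\delta)/n$ coefficient simplifies to the stated fraction. The result is essentially the standard Bernstein bound dressed in a form convenient for the applications in the body of the paper (notably in \cref{prop:SampleNormEstimation}), where one needs to trade off the variance multiplier against the concentration residual.
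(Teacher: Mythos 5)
Your route---quote the classical Bernstein inequality, invert the tail, then absorb the $\sqrt{\cdot}$ fluctuation by AM--GM with parameter $\alpha$---is genuinely different from the paper's, which runs the Chernoff argument from scratch and chooses the exponent parameter $s=\frac{6\alpha n}{3+4\alpha B}$ so that the $\sigma^2$-contribution in the exponent cancels exactly; your AM--GM step is the dual of that optimization, so the two arguments are morally equivalent and yours is the more modular one. The final bookkeeping ($\frac{1}{2\alpha}+\frac{2B}{3}=\frac{3+4\alpha B}{6\alpha}$) is also correct.

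There is, however, a constant-tracking gap in your first step. The hypothesis is $\abs{\xi_i}\leq B$, so the centered variables satisfy only $\abs{\xi_i-\mu}\leq 2B$, and the classical Bernstein tail under these hypotheses is
\begin{align*}
\p\left(\tfrac{1}{n}\textstyle\sum_{i=1}^n\xi_i-\mu\geq t\right)\leq \exp\left(-\frac{nt^2}{2\sigma^2+\frac{4}{3}Bt}\right),
\end{align*}
not the version with $\frac{2}{3}Bt$ that you wrote, which would require $\abs{\xi_i-\mu}\leq B$. Inverting the corrected tail by the usual quadratic-formula argument yields the linear term $\frac{4B}{3n}\ln\frac{2}{\delta}$, and your subsequent bookkeeping then produces $\frac{3+8\alpha B}{6\alpha n}\ln\frac{2}{\delta}$, which is weaker than the stated constant. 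Your intermediate display $\sqrt{2\sigma^2\ln(2/\delta)/n}+\frac{2B}{3n}\ln\frac{2}{\delta}$ is in fact true, but it is obtained only via the sharper sub-gamma inversion (if $\log\E e^{s(\bar\xi-\mu)}\leq \frac{vs^2}{2(1-cs)}$ with $v=\sigma^2/n$ and $c=\frac{2B}{3n}$, then the deviation exceeds $\sqrt{2vu}+cu$ with probability at most $e^{-u}$---see the sub-gamma calculus of Boucheron, Lugosi and Massart---whereas inverting the weakened tail form costs an extra factor of $2$ on the $cu$ term). So you should either invoke that sharper inversion explicitly, or settle for the constant $\frac{3+8\alpha B}{6\alpha}$, which would still suffice for every downstream use of the lemma (in \cref{prop:SampleNormEstimation} the constant $\frac{5M^2}{3n}$ would merely become $\frac{7M^2}{3n}$).
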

\begin{proof}[Proof of \cref{lem:ConcenIneqVar}]
    The high probability form of bound in \cref{eq:ConcenIneqVar} is equivalent to the following probability form:
    \begin{align*}
        \mathbb{P} \left\{ \abs{\frac{1}{n}\sum_{i=1}^n \xi_i - \mu} \geq \alpha\sigma^2 + \ep \right\}
        \leq 2\exp(-\frac{6n\alpha\ep}{3 + 4\alpha B}),
    \end{align*}
    where $\ep = \frac{3+4\alpha B}{6\alpha n} \ln \frac{2}{\delta}$.
    By symmetry, it suffices to prove the following one-sided inequality:
    \begin{align*}
        \mathbb{P} \left\{ \frac{1}{n}\sum_{i=1}^n \xi_i - \mu \geq \alpha\sigma^2 + \ep \right\}
        \leq \exp(-\frac{6n\alpha\ep}{3 + 4\alpha B}).
    \end{align*}

    Taking exponent with some factor $s > 0$, we obtain
    \begin{align}
        \notag
        \mathbb{P} \left\{ \frac{1}{n}\sum_{i=1}^n \xi_i - \mu \geq \alpha\sigma^2 + \ep \right\}
        & = \mathbb{P} \left\{ \exp(\frac{s}{n}\sum_{i=1}^n (\xi_i - \mu)) \geq \exp(s(\alpha\sigma^2 + \ep)) \right\} \\
        \notag
        \qq{(Markov Inequality)} & \leq \exp(-s(\alpha\sigma^2 + \ep)) \E \exp(\frac{s}{n}\sum_{i=1}^n (\xi_i - \mu)) \\
        \label{eq:p_ProbIneq1}
        \qq{(Independency)} & \leq \exp(-s(\alpha\sigma^2 + \ep)) \prod_{i=1}^n \E \exp(\frac{s}{n}(\xi_i - \mu))
    \end{align}
    Let $X_i = \xi_i - \mu$, $t = \frac{s}{n}$ and $M = 2B$.
    As long as $t < \frac{3}{M}$, we have
    \begin{align*}
        \E \exp(\frac{s}{n}(\xi_i - \mu)) & = \E \exp(tX_i) = \sum_{k=0}^{\infty} \frac{t^k}{k!}\E X_i^k \\
        & \leq 1 +  \sum_{k=2}^{\infty} \frac{t^k}{k!} M^{k-2} \sigma^2 \\
        & \leq 1 + \frac{t^2\sigma^2}{2} \sum_{k=0}^{\infty}\left(  \frac{Mt}{3} \right)^k \\
        & = 1 + \frac{3t^2 \sigma^2}{6-2Mt} \\
        & \leq \exp(\frac{3t^2 \sigma^2}{6-2Mt}).
    \end{align*}
    Therefore,
    \begin{align*}
        \text{\cref{eq:p_ProbIneq1}}
        & \leq \exp(-s(\alpha\sigma^2 + \ep) + n \frac{3t^2 \sigma^2}{6-2Mt})
        =  \exp(-s(\alpha\sigma^2 + \ep) + \frac{3s^2 \sigma^2}{6n -4Bs}) \\
        & = \exp(-s\ep + s\sigma^2 \left( \alpha + \frac{3s}{6n - 4Bs} \right))
    \end{align*}
    Solving $\alpha + \frac{3s}{6n - 4Bs} = 0$ and we get $s = \frac{6\alpha n}{3+4\alpha B}$,
    which satisfies that $t = \frac{s}{n} < \frac{3}{M}$, hence we have
    \begin{align*}
        \text{\cref{eq:p_ProbIneq1}} \leq \exp(-\frac{6\alpha n}{3+4\alpha B}\ep )
    \end{align*}
    and the proof is complete.
\end{proof}

The following concentration inequality about vector-valued random variables is commonly used in the literature,
see, e.g. \citet[Proposition 2]{caponnetto2007_OptimalRates} and references therein.
\begin{lemma}
    \label{lem:ConcenHilbert}
    Let $H$ be a real separable Hilbert space.
    Let $\xi,\xi_1,\dots,\xi_n$ be i.i.d.\ random variables taking values in $H$.
    Assume that
    \begin{align}
        \label{eq:HilbertConcenCondition}
        \E \norm{\xi - \E \xi}_{H}^m \leq \frac{1}{2}m!\sigma^2 L^{m-2}, \forall m = 2,3,\dots.
    \end{align}
    Then for fixed $\delta \in (0,1)$, one has
    \begin{align}
        \p \left\{ \norm{\frac{1}{n}\sum_{i=1}^n \xi_i - \E \xi}_{H}
        \leq 2\left( \frac{L}{n} + \frac{\sigma}{\sqrt{n}} \right) \ln \frac{2}{\delta} \right\}
        \geq 1- \delta.
    \end{align}
    Particularly, a sufficient condition for \cref{eq:HilbertConcenCondition} is
    \begin{align*}
        \norm{\xi}_{H} \leq \frac{L}{2}\  \text{a.s., and}\
        \E \norm{\xi}_H^2 \leq \sigma^2.
    \end{align*}
\end{lemma}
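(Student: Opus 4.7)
The plan is to reduce to the mean-zero case by setting $\eta_i = \xi_i - \E\xi$, so that the target inequality becomes a tail bound on $\norm{\frac{1}{n}\sum_{i=1}^n \eta_i}_{H}$, and then invoke the classical Bernstein-type concentration inequality for sums of independent Hilbert-space-valued random variables (e.g., Pinelis--Sakhanenko or Yurinskii). That inequality states that under the moment assumption \eqref{eq:HilbertConcenCondition} applied to $\eta_i$, for every $t > 0$,
\begin{align*}
    \p\left\{ \norm{\sum_{i=1}^n \eta_i}_{H} \geq t \right\} \leq 2\exp\left( -\frac{t^2/2}{n\sigma^2 + Lt} \right).
\end{align*}
This tail bound is the core technical ingredient; everything else in the proof is algebraic manipulation.

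The hard step is really the Hilbert-space Bernstein bound above, whose proof is not entirely routine because $\norm{\cdot}_{H}$ is not linear. The standard route is via exponential moments: for each fixed unit vector $u \in H$, the scalar random variables $\ang{\eta_i,u}$ inherit the Bernstein moment condition from $\eta_i$, giving $\E \exp(\lambda \ang{\eta_i,u}) \leq \exp(\lambda^2 \sigma^2 / (2(1-\lambda L)))$ on an appropriate range of $\lambda$. This scalar bound is then upgraded to a bound on $\E \exp(\lambda \norm{\sum \eta_i}_{H})$ either by Pinelis' symmetrization and contraction argument or by expanding the exponential series and applying the moment condition term-by-term. Since this lemma is used throughout the paper purely as a black-box tool drawn from \citet{caponnetto2007_OptimalRates}, my plan is to cite the established inequality rather than reprove it.

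Granting the Bernstein tail, the stated form follows by inversion. Setting the right-hand side equal to $\delta$ produces a quadratic in $t$ whose positive root is $t = L\ln(2/\delta) + \sqrt{L^2\ln^2(2/\delta) + 2n\sigma^2 \ln(2/\delta)}$. Using $\sqrt{a+b} \leq \sqrt a + \sqrt b$ and dividing by $n$, we obtain
\begin{align*}
    \frac{t}{n} \leq \frac{2L\ln(2/\delta)}{n} + \sigma \sqrt{\frac{2\ln(2/\delta)}{n}}.
\end{align*}
Since $\ln(2/\delta) > \ln 2 > 1/2$ for every $\delta \in (0,1)$, the elementary inequality $\sqrt{2x} \leq 2x$ for $x \geq 1/2$ gives $\sqrt{2\ln(2/\delta)} \leq 2\ln(2/\delta)$, so $\frac{t}{n} \leq 2\left( \frac{L}{n} + \frac{\sigma}{\sqrt n} \right) \ln \frac{2}{\delta}$, which is the claim.

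For the sufficient condition, assume $\norm{\xi}_{H} \leq L/2$ a.s.\ and $\E \norm{\xi}_{H}^2 \leq \sigma^2$. The triangle inequality gives $\norm{\xi - \E\xi}_{H} \leq L$ a.s., and the bias--variance type identity gives $\E \norm{\xi - \E\xi}_{H}^2 \leq \E \norm{\xi}_{H}^2 \leq \sigma^2$. For any integer $m \geq 2$,
\begin{align*}
    \E \norm{\xi - \E\xi}_{H}^m \leq L^{m-2} \E \norm{\xi - \E\xi}_{H}^2 \leq L^{m-2} \sigma^2 \leq \tfrac{m!}{2} \sigma^2 L^{m-2},
\end{align*}
where the last inequality uses $m! \geq 2$ for $m \geq 2$. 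This verifies \eqref{eq:HilbertConcenCondition} and completes the reduction.
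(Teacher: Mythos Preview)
Your proposal is correct. The paper does not actually prove this lemma: it simply states it and cites \citet[Proposition 2]{caponnetto2007_OptimalRates} as the source, so your plan of invoking the standard Hilbert-space Bernstein tail bound from the literature and then carrying out the inversion and the verification of the sufficient condition is entirely in line with---and in fact more detailed than---what the paper itself provides.
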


The following Bernstein type concentration inequality about self-adjoint Hilbert-Schmidt operator valued random variable
results from applying the discussion in \citet[Section 3.2]{minsker2017_ExtensionsBernstein} to
\citet[Theorem 7.3.1]{tropp2012_UserfriendlyTools}.
It can be found in, e.g., \citet[Lemma 24]{lin2020_OptimalConvergence}.
\begin{lemma}
    \label{lem:ConcenBernstein}
    Let $H$ be a separable Hilbert space.
    Let $A_1,\dots,A_n$ be i.i.d.\ random variables taking values of self-adjoint Hilbert-Schmidt operators
    such that $\E A_1 = 0$, $\norm{A_1} \leq L$ almost surely for some $L > 0$ and
    $\E A_1^2 \preceq V$ for some positive trace-class operator $V$.
    Then, for any $\delta \in (0,1)$, with probability at least $1-\delta$ we have
    \begin{align*}
        \norm{\frac{1}{n}\sum_{i=1}^n A_i} \leq \frac{2LB}{3n} + \sqrt {\frac{2\norm{V}B}{n}},
        \quad B = \ln \frac{4 \Tr V}{\delta \norm{V}}.
    \end{align*}
\end{lemma}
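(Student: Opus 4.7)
The plan is to obtain this inequality as a direct quantitative consequence of Tropp's matrix Bernstein inequality (Theorem 7.3.1 of \emph{User-friendly tail bounds for sums of random matrices}) combined with the dimension-free refinement due to Minsker (Section 3.2 of \emph{On some extensions of Bernstein's inequality for self-adjoint operators}). In the finite-dimensional version, one has, for i.i.d.\ self-adjoint $d\times d$ centered random matrices with $\norm{A_i}\le L$ a.s.\ and $\sum_i \E A_i^2\preceq W$, the tail bound
\begin{align*}
    \p\left\{\norm{\textstyle\sum_{i=1}^n A_i}\ge t\right\}\le 2d\,\exp\!\left(-\frac{t^{2}/2}{\norm{W}+Lt/3}\right).
\end{align*}
The dimensional prefactor $2d$ is the only obstacle to using this directly, since our $H$ may be infinite-dimensional.

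The second step is to invoke Minsker's extension, which replaces $2d$ by $4\,\Tr V/\norm{V}$, the effective (or intrinsic) dimension associated with the variance proxy. Under our hypotheses, each $A_i$ is Hilbert--Schmidt and $V$ is trace-class, which is exactly the setting required. Concretely, applying the Minsker bound to the variance proxy $W=nV$ for the sum $S_n=\sum_{i=1}^n A_i$, noting that $\norm{nV}=n\norm{V}$ and $\Tr(nV)/\norm{nV}=\Tr V/\norm{V}$, one obtains
\begin{align*}
    \p\{\norm{S_n}\ge t\}\le \frac{4\Tr V}{\norm{V}}\,\exp\!\left(-\frac{t^{2}/2}{n\norm{V}+Lt/3}\right).
\end{align*}
Rescaling by $t=ns$ gives the corresponding tail bound for $\frac{1}{n}\sum_{i=1}^n A_i$ with denominator $n\norm{V}+nLs/3$ in the exponent, after cancellation.

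The last step is pure algebra: setting the right-hand side equal to $\delta$, solving the quadratic
\begin{align*}
    ns^{2}-\tfrac{2BL}{3}s-2B\norm{V}=0,\qquad B=\ln\tfrac{4\Tr V}{\delta\norm{V}},
\end{align*}
and using the elementary inequality $\sqrt{a+b}\le\sqrt{a}+\sqrt{b}$ on the positive root yields the claimed bound
$
    s\le \tfrac{2LB}{3n}+\sqrt{2\norm{V}B/n}.
$

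The only substantive difficulty is Step 2, i.e.\ the passage from finite to infinite dimension, because in separable Hilbert spaces there is no ambient dimension to plug into Tropp's inequality. Minsker's argument handles this by replacing the trace over $\mathbb{R}^{d}$ in the Laplace-transform/Lieb step by a trace over $H$ and then bounding $\Tr\exp(\theta V)$ by $(\Tr V/\norm{V})\exp(\theta\norm{V})$ via a simple spectral calculation exploiting that $V$ is trace-class; one can alternatively reduce to finite rank by truncating $V$ at level $\norm{V}/N$ and letting $N\to\infty$. Since the excerpt explicitly permits citing Minsker's reformulation of Tropp's theorem, the remaining work beyond these citations is only the quadratic inversion in Step 3.
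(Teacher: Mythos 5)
Your proposal follows exactly the route the paper itself takes: the paper does not prove this lemma but cites Tropp's operator Bernstein inequality together with Minsker's intrinsic-dimension refinement (and Lin--Cevher's Lemma 24, which records precisely this high-probability form). Your reconstruction --- Tropp's bound, replacement of the dimensional prefactor $2d$ by $4\Tr V/\norm{V}$ via Minsker's trace argument, and inversion of the resulting quadratic using $\sqrt{a+b}\le\sqrt{a}+\sqrt{b}$ --- is correct and matches the stated constants.
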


    \section{More Experiments}\label{sec:ExprDetail}
    In this section we provide more results about the experiments.

\subsection{More experiments on the interval}

In the following experiments, we use the same setting as in \cref{sec:experiments}.
We consider other commonly used kernels and set $f^*$ as one of its eigenfunctions.
We also compare KRR with another regularization algorithm called spectral cut-off (CUT), which also never saturates like GF
(see, e.g. \citet[Example 3.1]{lin2018_OptimalRates}).

We introduce another kernel with known explicit forms of eigenfunction, which will be used as the
underlying regression function.

\paragraph{Heavy-side step kernel}
The heavy-side step kernel on $[0,1]$ is defined by
    \begin{align}
        k(x,y) = \min(x,y) (1 - \max(x,y)) =
        \begin{cases}
        (1-x)
            y & \qq{if} x \geq y;\\
            (1-y) x & \qq{if} x < y.
        \end{cases}
    \end{align}
    The associated RKHS is
    \begin{align}
        H^1(0,1) \coloneqq \left\{ f : [0,1] \to \R \mid f \text{ is A.C., } f(0) = f(1) = 0,\ \int_0^1 (f'(x))^2 \dd x < \infty \right\},
    \end{align}
    with inner product $\ang{f,g}_{H^1} \coloneqq \int_{0}^1 f'(t)g'(t) \dd t$.
    It is known that the eigen-system of this kernel is
    \begin{align}
        \lambda_n = \frac{1}{\pi^2 n^2},\quad e_n = \sqrt{2}\sin(n\pi x),\quad n = 1,2,\dots,
    \end{align}
    and hence $\beta = 0.5$.


We conduct experiments on the two kernels and set the regression function to be one of the eigen-functions.
We report the results in \cref{tab:1}.
The results are generally the same as that of \cref{sec:experiments}:
GF and CUT methods are similar and they both achieve better performances as $\alpha$ increases, while KRR reaches its best performance at $\alpha = 2$
with resulting max rate approximately $0.8$, verifying our theory.
We also notice that there are some numerical fluctuation.
We attribute them to randomness where we find the deviance is large and numerical error since the eigenvalues are small.
In conclusion, the numerical results are supportive.

\begin{table}
    \centering

    \label{tab:1}
    \begin{tabular}{c|c|ccc|ccc|ccc}
        & & \multicolumn{3}{c|}{$f^*=e_1$} & \multicolumn{3}{c|}{$f^*=e_2$} & \multicolumn{3}{c}{$f^* = e_3$} \\
        \hline
        Kernel                           & $\alpha$ & KRR    & GF      & CUT    & KRR     & GF     & CUT    & KRR    & GF      & CUT     \\
        \hline
        \multirow{4}{*}{$\min (x,y)$}    & 1.5      & .73    & .72     & .73    & .76     & .74    & .73    & .75 & .74     & .75     \\
        & 2        & \bf.76 & .76     & .78    & \bf .80 & .78    & .79    & \bf.78    & .78     & .81     \\
        & 2.5      & .69    & .80     & .81    & .73     & .81    & .82    & .69    & .82     & .83     \\
        & 3        & .58    & .82     & .84    & .62     & .83    & .83    & .59    & .84     & .85     \\
        & 3.5      & .49    & \bf .84 & \bf.86 & .54     & \bf.85 & \bf.88 & .51    & \bf.86  & \bf.90  \\
        \hline
        \multirow{4}{*}{Heavy-side step} & 1.5      & .77    & .83     & .71    & .80     & .75    & .73    & \bf.88 & .75     & .72     \\
        & 2        & \bf.83 & .83     & .74    & \bf.80  & .80    & .79    & .78    & .80     & .77     \\
        & 2.5      & .83    & .87     & .76    & .68     & .84    & .84    & .64    & .84     & .84     \\
        & 3        & .75    & .90     & .79    & .57     & .87    & .86    & .54    & .88     & .89     \\
        & 3.5      & .65    & \bf.92  & \bf.81 & .49     & \bf.89 & \bf.88 & .46    & \bf.92  & \bf.95  \\
        \hline
    \end{tabular}
    \caption{Convergence rates comparison between KRR, GF and CUT with $\lambda = c n^{-\frac{1}{\alpha+\beta}}$ for various $\alpha$'s.
    Bold numbers represent the max rate over different choices of $\lambda$.}

\end{table}

\subsection{Experiments on the sphere}
In this part we conduct experiments beyond dimension 1.
We consider some inner-product kernels on the sphere $\caX = \mathbb{S}^2 \subset \R^3$ with $\mu$ being the uniform distribution.
The reason is that in general it is hard to find an
explicit eigen-decomposition of a general kernel,
where we can obtain explicit forms of eigen-functions for inner-product kernels on $\mathbb{S}^{d-1}$,
which are necessary for us to construct smooth regression functions.
These eigen-functions are known as the spherical harmonics, which turn out to be homogeneous polynomials.
We refer to \citet{dai2013_ApproximationTheory} for a detailed introduction.
On $\mathbb{S}^2$, the spherical harmonics are often denoted by $Y_l^m,~l=0,1,2,\dots,~m=-l,\dots,l$,
and $Y_l^m$ is a homogeneous polynomial of order $l$.
We pick some of them to be our underlying truth function $f^*$, which are listed below:
\begin{align*}
    & Y_1^1(x_1,x_2,x_3) = \sqrt {\frac{3}{4\pi}} x_1,\quad
    Y_2^{-2}(x_1,x_2,x_3) = \frac{1}{2}\sqrt {\frac{15}{\pi}} x_1 x_2,\quad \\
    & Y_3^{2}(x_1,x_2,x_3) = \frac{1}{4}\sqrt {\frac{105}{\pi}}( x_1^2 -  x_2^2 ) x_3.
\end{align*}

In terms of kernels, we use the truncated power function $k(x,y) = (1-\norm{x-y})_+^p$, where $a_+ = \max(a,0)$.
It is known that if $p > \lfloor \frac{d}{2}\rfloor + 1$, this kernel is positive definite on $\R^d$
and thus positive definite on $\mathbb{S}^{d-1}$ \citep[Theorem 6.20]{wendland2004_ScatteredData}.
However, we do not know the eigen-decay rate $\beta$ for these kernels.

In the following experiment, we basically follow the same procedure as described in \cref{sec:experiments},
except that we choose the regularization parameter by $\lambda = n^{-\theta}$ with various $\theta$.
The results are collected in \cref{tab:2}. We also plot the error curves of one of the experiments in \cref{fig:2}.
The results show that the convergences rates of KRR increase and then decrease as $\theta$ decrease,
while the convergences rates of GF keep increasing, and the best convergence rate of KRR is significant slower than that of GF.
We conclude that this experiment also justifies the saturation effect and our theory.

%

\begin{table}[htb]
    \centering

    \label{tab:2}
    \begin{tabular}{c|c|cc|cc|cc}
        & & \multicolumn{2}{c|}{$f^* = Y_1^1$} & \multicolumn{2}{c|}{$f^* = Y_2^{-2}$} & \multicolumn{2}{c}{$f^* = Y_3^2$} \\
        \hline
        Kernel                                & $\theta$ & KRR       & GF        & KRR       & GF        & KRR       & GF        \\
        \hline
        \multirow{4}{*}{$(1-\norm{x-y})_+^3$} & 0.6      & .50       & .49       & .49       & .38       & .49       & .49       \\
        & 0.4      & {\bf .77} & .70       & .71       & .65       & .70       & .69       \\
        & 0.3      & .71       & .80       & {\bf .73} & .74       & {\bf .73} & .78       \\
        & 0.2      & .45       & {\bf .96} & .49       & {\bf .83} & .50       & {\bf .89} \\
        \hline
        \multirow{4}{*}{$(1-\norm{x-y})_+^4$} & 0.6      & .40       & .37       & .39       & .38       & .39       & .37       \\
        & 0.4      & .70       & .65       & .65       & .65       & .65       & .65       \\
        & 0.3      & {\bf .77} & .76       & {\bf .74} & .74       & {\bf .74} & .75       \\
        & 0.2      & .56       & {\bf .90} & .64       & {\bf .83} & .65       & {\bf .84}
    \end{tabular}
    \caption{Convergence rates comparison between KRR and GF with $\lambda = cn^{-\theta}$ for various $\theta$'s.
    Bold numbers represent the max rate over different choices of $\lambda$.}
\end{table}

\begin{figure}[htb]
    \centering
    \subfigure{
        \begin{minipage}[t]{0.5\linewidth}
            \centering
            \includegraphics[width=1.05\linewidth]{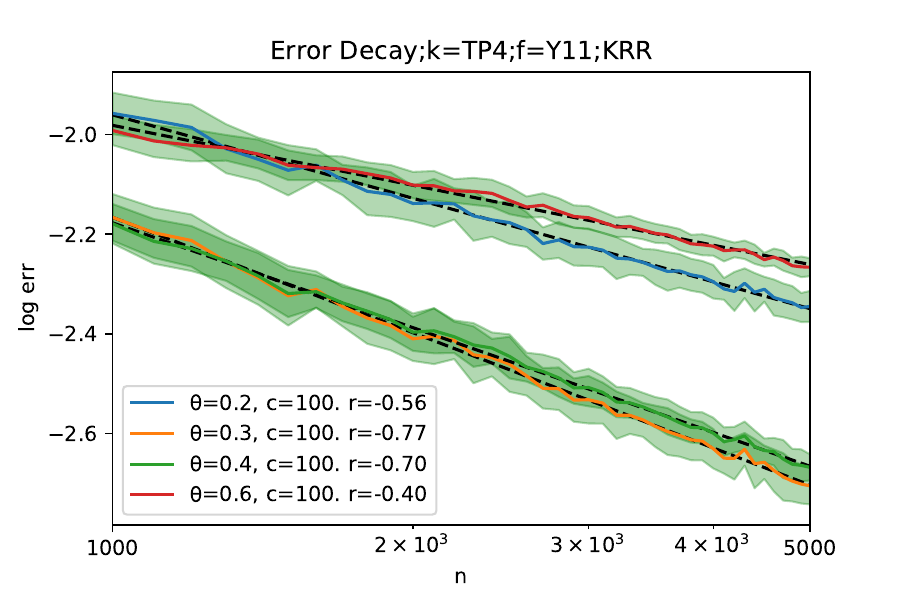}
        \end{minipage}%
    }%
    \subfigure{
        \begin{minipage}[t]{0.5\linewidth}
            \centering
            \includegraphics[width=1.05\linewidth]{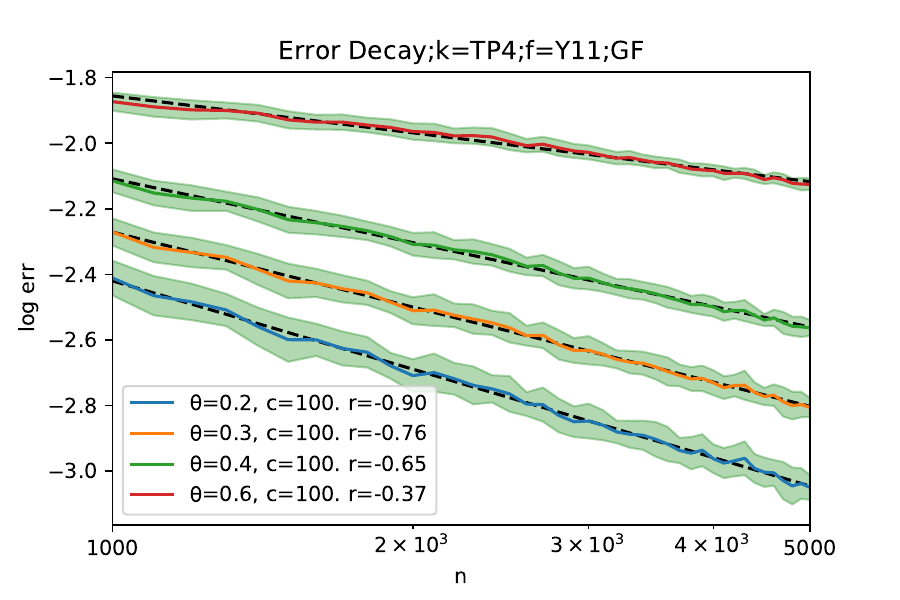}
        \end{minipage}%
    }%
    \centering
    \caption{Error decay curves of KRR and GF with kernel $(1-\norm{x-y})_+^4$ on $\mathbb{S}^2$ and $f^* = Y_1^1$.}

    \label{fig:2}
\end{figure}

\end{document}